\def\BibTeX{{\rm B\kern-.05em{\sc i\kern-.025em b}\kern-.08em
    T\kern-.1667em\lower.7ex\hbox{E}\kern-.125emX}}
\newtheorem{theorem}{Theorem}
\newtheorem{lemma}{Lemma}
\newtheorem{definition}{Definition}
\newcommand{\ie}{{i.e.}}
\newcommand{\eg}{{e.g.}}
\newcommand{\et}{{et al.}}
\newcommand{\resp}{{resp.}}
\newcommand{\oie}{i.e.}
\begin{document}

\title{Secure Federated Submodel Learning
\thanks{This work was supported in part by Science and Technology Innovation 2030 -- ``New Generation Artificial Intelligence" Major Project No. 2018AAA0100905, in part by China NSF grant 61972252, 61972254, 61672348, and 61672353, in part by the Open Project Program of the State Key Laboratory of Mathematical Engineering and Advanced Computing 2018A09, and in part by Alibaba Group through Alibaba Innovation Research (AIR) Program. The opinions, findings, conclusions, and recommendations expressed in this paper are those of the authors and do not necessarily reflect the views of the funding agencies or the government.}
\thanks{F. Wu is the corresponding author.}
}

\author{
\IEEEauthorblockN{Chaoyue~Niu$^1$, Fan~Wu$^1$, Shaojie~Tang$^2$, Lifeng Hua$^3$, Rongfei Jia$^3$, Chengfei Lv$^3$, Zhihua Wu$^3$, and Guihai~Chen$^1$}
\IEEEauthorblockA{$^1$Shanghai Key Laboratory of Scalable Computing and Systems, Shanghai~Jiao~Tong~University, China\\
$^2$Naveen Jindal School of Management, University of Texas at Dallas, USA\\
$^3$Alibaba Group, Hangzhou and Beijing, China\\
Email: $^1$\{rvince, wu-fan\}@sjtu.edu.cn; gchen@cs.sjtu.edu.cn; $^2$shaojie.tang@utdallas.edu;\\$^3$\{issac.hlf, rongfei.jrf, chengfei.lcf, zhihua.wzh\}@alibaba-inc.com
}
}

\maketitle

\begin{abstract}
Federated learning was proposed with an intriguing vision of achieving collaborative machine learning among numerous clients without uploading their private data to a cloud server. However, the conventional framework requires each client to leverage the full model for learning, which can be prohibitively inefficient for resource-constrained clients and large-scale deep learning tasks. We thus propose a new framework, called federated submodel learning, where clients download only the needed parts of the full model, namely submodels, and then upload the submodel updates. Nevertheless, the ``position" of a client's truly required submodel corresponds to her private data, and its disclosure to the cloud server during interactions inevitably breaks the tenet of federated learning. To integrate efficiency and privacy, we have designed a secure federated submodel learning scheme coupled with a private set union protocol as a cornerstone. Our secure scheme features the properties of randomized response, secure aggregation, and Bloom filter, and endows each client with a customized plausible deniability, in terms of local differential privacy, against the position of her desired submodel, thus protecting her private data. We further instantiated our scheme with the e-commerce recommendation scenario in Alibaba, implemented a prototype system, and extensively evaluated its performance over 30-day Taobao user data. The analysis and evaluation results demonstrate the feasibility and scalability of our scheme from model accuracy and convergency, practical communication, computation, and storage overheads, as well as manifest its remarkable advantages over the conventional federated learning framework.
\end{abstract}

\begin{IEEEkeywords}
federated submodel learning, private set union, randomized response, local differential privacy, secure aggregation, Bloom filter, e-commerce recommendation
\end{IEEEkeywords}

\section{Introduction}


\subsection{Motivating Industrial Scenario in Alibaba}

\begin{figure*}[t]
\centering
\subfigure[Conventional Federated Learning]{\label{fig:system:model:full}
\includegraphics[width=0.95\columnwidth]{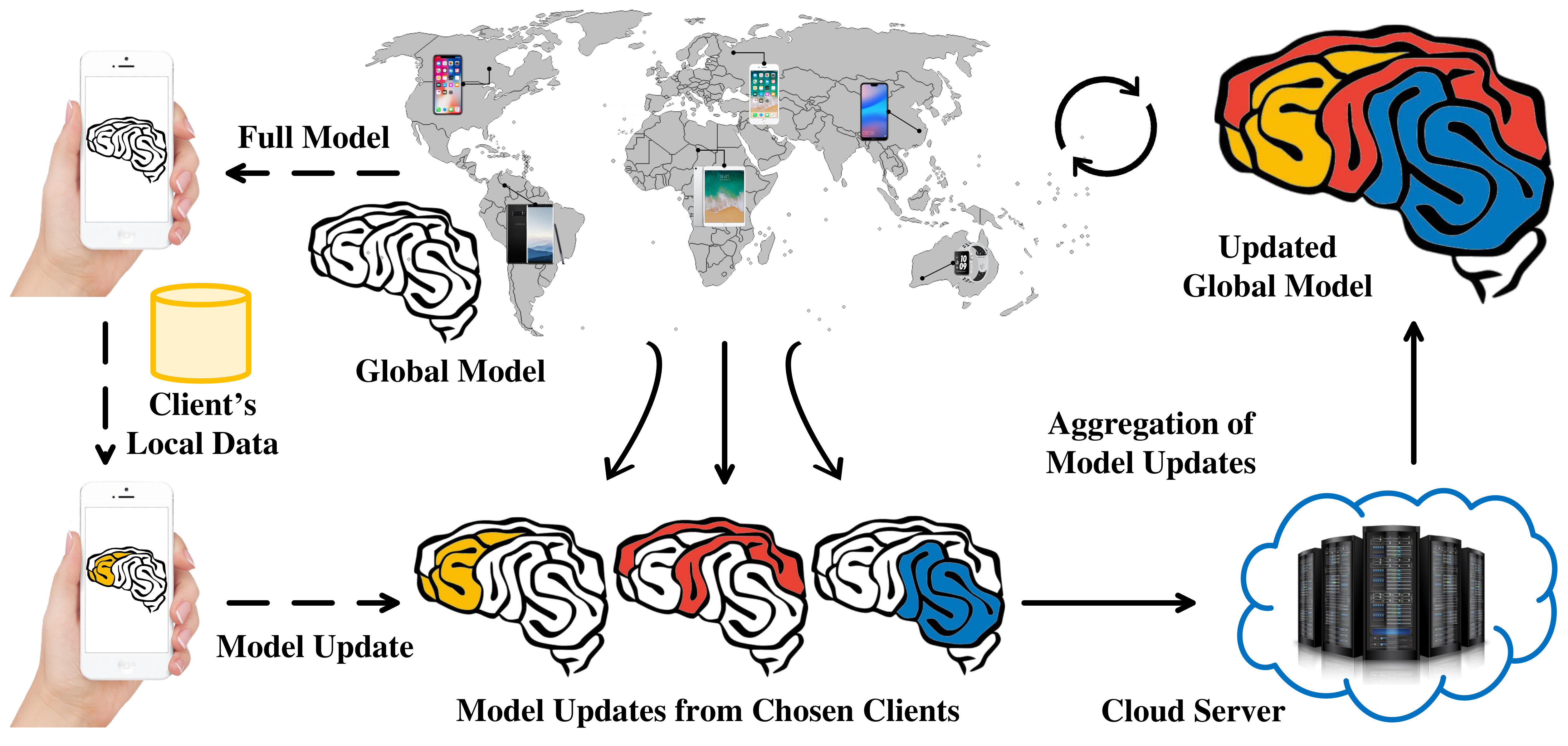}}
\subfigure[Federated Submodel Learning]{\label{fig:system:model:sub}
\includegraphics[width=0.95\columnwidth]{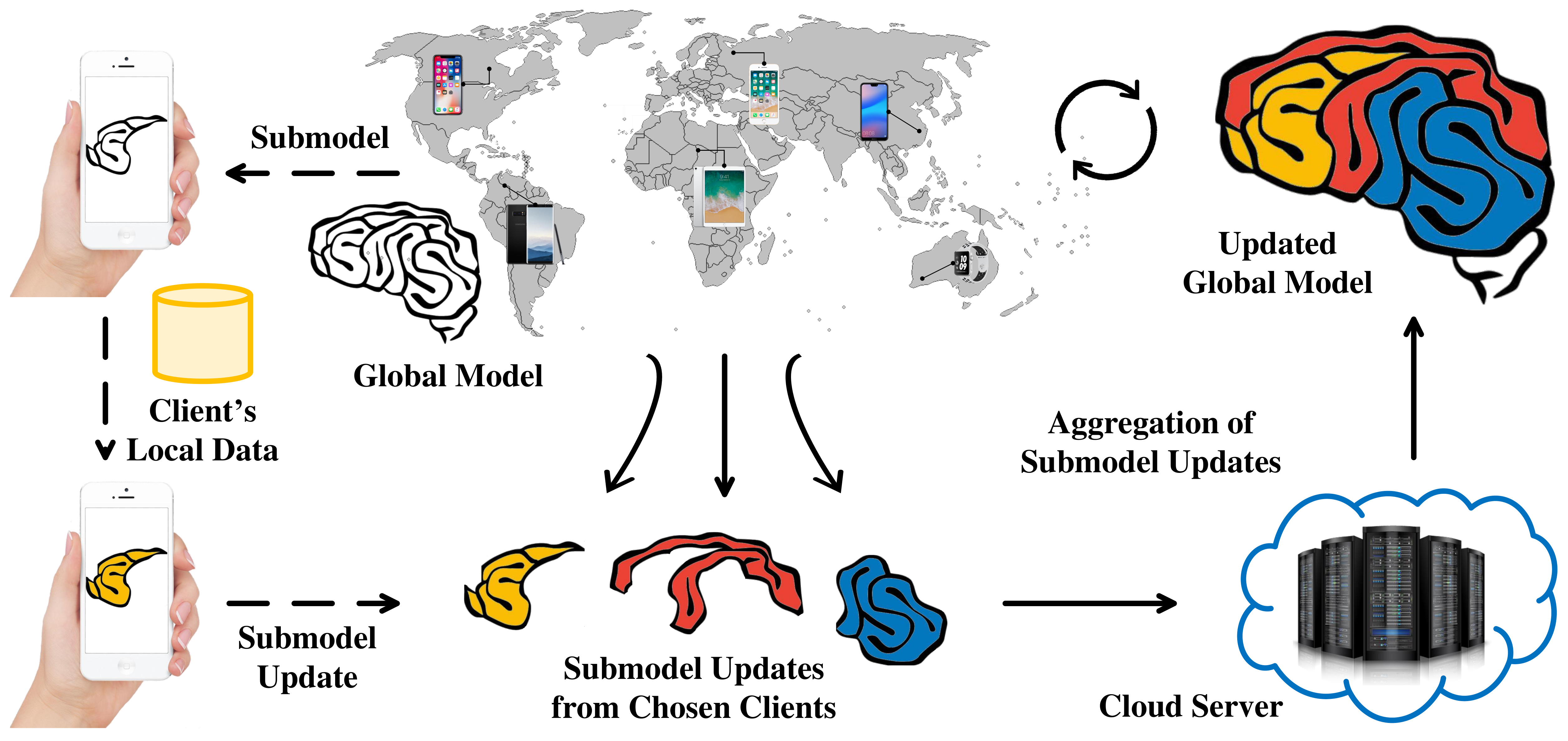}}
\caption{A visual comparison of conventional federated learning and our new federated submodel learning.}\label{fig:system:model}
\end{figure*}

The industrial scenario in Alibaba that motivated federated submodel learning is the desire to provide customized and accurate e-commerce recommendations for billion-scale clients while keeping user data on local devices.

Currently, the recommendation systems in Alibaba are cloud based and require the server cluster to collect, process, and store numerous user data. In addition, the deployed recommendation models follow a golden paradigm of embedding\footnote{In general, deep learning with a huge and sparse input space (\eg, e-commerce goods IDs, natural language texts, and locations) requires an embedding layer to first transform inputs into a lower-dimensional space~\cite{proc:sp19:feature:leakage:fl}. Additionally, the full embedding matrix tends to occupy a large proportion of the whole model parameters (\eg, $98.22\%$ in our evaluated DIN model and more than two-thirds in Gboard's CIFG language model~\cite{tc:arxiv18:gboard:prediction}).} and Multi-Layer Perceptron (MLP)~\cite{proc:16:wd}: user data are first encoded into high-dimensional sparse feature vectors, then embedded into low-dimensional dense vectors, and finally fed into fully connected layers. To improve accuracy, Deep Interest Network (DIN)~\cite{proc:alibaba:kdd18:din} introduces the attention mechanism to activate the user's historical behaviors, namely relative interests, with respect to the target item; Deep Interest Evolution Network (DIEN)~\cite{proc:alibaba:aaai20:dien} further extracts latent interests and monitors interest evolution through Gated Recurrent Unit (GRU) coupled with attention update gate; and Behavior Sequence Transformer (BST)~\cite{tc:alibaba:transformer} incorporates transformer to capture the sequential signals underlying the user's behavior sequence.

However, typical fields of user data involved in recommendation include user profile (\eg, user ID, gender, and age), user behavior (\eg, the list of visited goods IDs and relevant information, such as category IDs and shop IDs), and context (\eg, time, page number, and display position). More or less, these data fields are sensitive, and some clients who value security and privacy highly may refuse to share their data. In addition, according to the General Data Protection Regulation (GDPR), which was legislated by the European Commission and took effect on May 25, 2018, any institution or company is prohibited from uploading user data and storing it in the cloud without the explicit permissions from the European Union users~\cite{link:gdpr,proc:nips19:data:deletion}. Under such circumstances, refining the recommendation models and further providing accurate recommendations become urgent demands as well as thorny challenges in practice.


Federated learning, which decouples the ability to do machine learning from the need to upload and store data in the cloud, is a potential solution. However, the original framework of federated learning, proposed by Google researchers in~\cite{proc:aistats17:fedavg}, requires each client to download the full machine learning model for training and then to upload the update of the full model, which is impractical for resource-constrained clients in the context of complex deep learning tasks. For example, as the largest online consumer-to-consumer platform in China, Taobao (owned by Alibaba) has roughly two billion goods in total~\cite{proc:alibaba:kdd18:match}, which is far larger than the 10,000 word vocabulary in the natural language scenario of Google's Gboard~\cite{tc:arxiv18:gboard:prediction,tc:arxiv19:oovw,tc:arxiv19:gboard:emoji:prediction}. This implies that the full embedding matrix of goods has roughly two billion rows and roughly occupies 134GB of space, when the embedding dimension is 18, and each element adopts 32-bit representation. If each client directly uses the full matrix for learning, it inevitably incurs huge overheads, which are unacceptable and unaffordable for one billion Taobao users with smart devices. To improve efficiency, we observe that a certain user tends to browse, click, and buy a small number of goods, and thus just needs a tailored model, which can sharply reduce the overheads and is more practical for mobile clients. Continuing with the above example, if a Taobao user's historical data involve 100 goods, she only requires the corresponding 100 rows, rather than the entire two billion rows, of the embedding matrix. Based on this key observation, we propose a new framework of federated learning, called federated submodel learning, as follows.


\subsection{Framework of Federated Submodel Learning}

We plot the workflow of federated submodel learning in Fig.~\ref{fig:system:model:sub} and also provide the traditional federated learning in Fig.~\ref{fig:system:model:full} for an intuitive comparison.

In the beginning of one communication round, a cloud server first selects a certain number of eligible clients, typically end users whose mobile devices are idle, charging, and connected to an unmetered Wi-Fi network. This eligibility criteria is used to avoid a negative effect on the user experience, data usage, or battery life. Then, each chosen client downloads part of the global model as she requires, namely a submodel, from the cloud server. For example, in the e-commerce scenario above, a client's submodel mainly consists of the embedding parameters for the displayed and clicked goods in her historical data, as well as the parameters of the other network layers. Afterwards, the client trains the submodel over her private data locally. At the end of one round, the cloud server lets those chosen clients who are still alive upload the updates of their submodels and further aggregates the submodel updates to form a consensus update to the global model. Considering the convergencies of the global model at the cloud server and the submodels on clients, the above process is iterated for several rounds.


If each client leverages the full model rather than her required submodel for learning, federated submodel learning will degenerate to conventional federated learning. Compared with the conventional one, our new framework further decouples the ability to accomplish federated learning from the need to use the prohibitively large full model, which can dramatically improve efficiency. For example, in our evaluation, the size of a client's desired submodel is only $1.99\%$ of the full model's size. Thus, our framework is more practical for resource-constrained clients and deep learning tasks.

\subsection{Newly Introduced Privacy Risks}\label{sec:introduction:privacy:risk}

Just as every coin has two sides, federated submodel learning not only brings in efficiency but also introduces extra privacy risks. On one hand, compared with using the public full model in conventional federated learning, the download of a submodel and the upload of the submodel update would require each client to provide an index set as auxiliary information, specifying the ``position" of her submodel. However, the index set normally corresponds to the client's private data. For example, to specify the required rows of the embedding matrix in the e-commerce scenario, a client mainly needs to provide the goods IDs in her user data as the index set. Thus, the disclosure of a client's real index set to the cloud server can still be regarded as the leakage of the client's private data, breaking the tenet of federated learning. On the other hand, compared with the aligned full model in federated learning, each client only submits the update of her customized and highly differentiated submodel in federated submodel learning. As a result, the aggregation of updates with respect to a certain index can come from a unique client (\eg, with probability $86.7\%$ for 100 chosen clients in our evaluated Taobao dataset), which indicates that the cloud server not only can ascertain that the client has a certain index but also can learn her detailed update. These two kinds of knowledge both breach the client's private data. Further, such a privacy risk in e-commerce is more severe than that in natural language because compared with the vocabularies of different Gboard users, the goods IDs of different Taobao users are more differentiated. We will detail and visualize the preceding privacy risks in Section~\ref{sec:adversary:model} and Fig.~\ref{fig:adversary:model}.

\subsection{Fundamental Problems and Challenges}\label{sec:introduction:problems:challenges}

In essence, to mitigate the above privacy risks, we need to jointly solve two fundamental problems modeled from the processes of downloading a submodel and uploading a submodel update, respectively. One is how a client can download a row from a matrix, maintained by an untrusted cloud server, without revealing which row, alternatively the row index, to the cloud server. The other is how a client can modify a row of the matrix, still without revealing which row was modified and the altered content to the cloud server. Using the terminology from file system permissions, the first problem has a ``read-only" attribute, where the client only reads the file. In contrast, the second problem is in a ``write" mode, where the client can edit the file. Further incorporating the obscure requirement of two operations, the second problem appears more challenging than the first one. We now analyze these two problems in detail as follows.

We start with the first problem. One trivial method is that the client downloads the full matrix, as in conventional federated learning, and then extracts the required row locally. Although this method perfectly hides the fetched row index, it incurs significant communication cost, which can be unaffordable for resource-constrained mobile devices, especially when the matrix is huge, \eg, representing a deep neural network. To avoid downloading the full matrix, Private Information Retrieval (PIR)~\cite{proc:sp18:pir,proc:ccs18:pir:labeled:psi,proc:ccs18:pir:state} can be applied, which exactly matches our problem settings, including the read-only mode and the privacy preservation requirement of the retrieved elements. The state-of-the-art constructions of private information retrieval include Microsoft's SealPIR~\cite{proc:sp18:pir} and Labeled PSI~\cite{proc:ccs18:pir:labeled:psi} and Goolge's PSIR~\cite{proc:ccs18:pir:state}, where two Microsoft protocols have been deployed in its Pung private communication system~\cite{proc:osdi16:pung}. We note that another celebrated cryptographic primitive, called Oblivious Transfer (OT)~\cite{tc:iacr05:ot:robin}, is stronger than private information retrieval. It not only guarantees that the cloud server does not know which row the client has downloaded, as in private information retrieval, but also ensures that the client does not know the other rows of the matrix, which is instead not needed in practical federated submodel learning. Therefore, if we consider the first problem independently, private information retrieval may be a good choice.


We next dissect the second problem. For a concrete row of the full matrix, if clients modify this row one by one, the cloud server definitely knows those clients who modified this row and their detailed contents of modification. Thus, one feasible way is to first securely aggregate all the modifications without revealing any individual modification, and then apply the aggregate modification to the row of the full matrix once. In particular, such a guarantee can be provided by the secure aggregation protocol in~\cite{proc:ccs17:secure:agg} and some other schemes for oblivious addition, \eg, based on additively homomorphic cryptosystems~\cite{proc:eurocrypt99:paillier,proc:tcc05:boneh,proc:eurocrypt10:freeman}. With the secure aggregation guarantee, if more than one client participates in aggregation and at least one of their modifications is nonzero, then the cloud server cannot reveal which client(s) truly intend to modify this row and their detailed modifications. Further, a larger number of involved clients implies a stronger privacy guarantee. One extreme case is in conventional federated learning, which harshly lets all chosen clients in one communication round be involved, no matter whether they truly intend to modify this row or not. Thus, it can offer the best privacy guarantee. Nevertheless, considering each client needs to be involved for each row of the full matrix, it is too inefficient to be applicable in the large-scale deep learning context. Another extreme case is in federated submodel learning, which simply lets those clients who really intend to modify this row be involved. Hence, each chosen client only needs to be involved for those rows that she truly intends to modify, implying the best efficiency. However, different clients tend to modify highly differentiated or even mutually exclusive rows. For the joint modification with respect to some row, chances are high that only one client is involved. Under such a circumstance, the secure aggregation guarantee no longer works, which leaks the client's real intention and her detailed modification. In a nutshell, trivial solutions to the second problem cannot well balance or support tuning privacy and efficiency.


\subsection{Our Solution Overview and Major Contributions}


Jointly considering the above two fundamental problems and several practical issues, we propose a secure scheme for federated submodel learning. In our scheme, each chosen client generates three types of index sets locally: real, perturbed, and succinct. First, the real index is extracted from a client's private data and is kept secret from the other system participants, including the cloud server and any other chosen client. Second, the perturbed index set is used to interact with others in the download and upload phases. It is generated by applying randomized response twice with one memoization step between. Such a design, together with secure aggregation, allows the client to hold a self-controllable deniability against whether she really intends or does not intend to download some row and to upload the modification of this row, even if the client may be chosen to participate in multiple communication rounds. The strength of deniability is rigorously quantified using local differential privacy. Further, rather than trivially using the prohibitively large-scale full index set as the questionnaire of randomize response in every communication round, we identify a necessary and sufficient index set, namely the union of the chosen clients' real index sets. Considering the secrecy of each client's real index set, we propose an efficient and scalable Private Set Union (PSU) protocol based on Bloom filter, secure aggregation, and randomization, allowing clients to obtain the union under the mediation of an untrusted cloud server without revealing any individual real index set. In particular, private set union promises a wide range of applications but receives little attention. Due to unaffordable overheads, none of the existing protocols can be deployed in practice yet. Last, the succinct index set is derived from the intersection between the real and perturbed index sets, and it is used to prepare the data and submodel for local training.

We summarize our key contributions in this work as follows:
\begin{itemize}
\item To the best of our knowledge, we are the first to propose the framework of federated submodel learning and further to identify and remedy new privacy risks.
\item Our proposed secure scheme mainly features the properties of randomized response and secure aggregation to empower each client with a tunable deniability against her real intention of downloading the desired submodel and uploading its update, thus protecting her private data. As a moat, we designed an efficient and scalable private set union protocol based on Bloom filter and secure aggregation, which can be of independent and significant value in practice.
\item We instantiated with Taobao's e-commerce scenario, adopted Deep Interest Network (DIN) for recommendation, and implemented a prototype system. Additionally, we extensively evaluated over one month of Taobao data. The evaluation and analysis results demonstrate the practical feasibility of our scheme, as well as its remarkable advantages over the conventional federated learning framework in terms of model accuracy and convergency, communication, computation, and storage overheads. Specifically, when the number of chosen clients in one round is 100, compared with conventional federated learning, which diverges in the end, our scheme improves the highest Area Under the Curve (AUC) by 0.072. In addition, at the same security and privacy levels as conventional federated learning with secure aggregation, our scheme reduces $80.05\%$ of communication overhead on both sides of the client and the cloud server. Moreover, our scheme reduces $85.02\%$ (\resp, $45.43\%$) and $72.51\%$ (\resp, $63.77\%$) of computation (\resp, memory) overheads on the sides of the client and the cloud server, respectively. Furthermore, when the size of the full model scales further, it does not incur additional overhead to our scheme, but it prohibits conventional federated learning from being applied. Finally, for our private set union, the communication overhead per client is less than 1MB, and the computation overheads of the client and the cloud server are both less than 40s, even if the dropout ratio of the chosen clients reaches $20\%$.
\end{itemize}

\section{Related Work}

In recent years, federated learning has become an active topic in both academic and industrial fields. In this section, we briefly review some major focuses and relevant work as follows. For more related work, we direct interested readers to the surveys written by Li~\et~\cite{tc:arxiv19:cmu:fl:survey} and Yang~\et~\cite{jour:tist19:fl:qiang:yang:survey}.


First and most important is to identify and address security and privacy issues of federated learning. Bonawitz~\et~\cite{proc:ccs17:secure:agg} proposed a secure, communication-efficient, and failure-robust aggregation protocol in both honest-but-curious and active adversary settings. It can ensure that the untrusted cloud server learns nothing but the aggregate (or mathematically, the sum) of the model updates contributed by chosen clients, even if part of clients drop out during the aggregation process. To bound the leakage of a certain client's training data from her individual model update, several differentially private mechanisms were proposed. McMahan~\et~\cite{proc:iclr18:dp:lstm} offered client-level differential privacy for recurrent language models based on the celebrated moments account scheme in~\cite{proc:ccs16:dpdp}. Here, the moments account allows the release of all intermediate results during the training process, particularly the gradients per iteration; keeps track of privacy loss in every iteration; and provides a tighter compositive/cumulative privacy guarantee. However, in the practical federated learning scenario, only the model update after multiple iterations/epochs is revealed, whereas all intermediate gradients are hidden. Specific to this case, Feldman~\et~\cite{proc:focs18:dp:epoch} analyzed the detailed amplification effect of hiding intermediate results on differential privacy. In contrast to these defense mechanisms, Bagdasaryan~\et~\cite{tc:arxiv18:cornell:backdoor:fl} developed a model replacement attack launched by malicious clients to backdoor the global model at the cloud server. Melis~\et~\cite{proc:sp19:feature:leakage:fl} exploited membership and property inference attacks to uncover features of the clients' training data from model updates.


Second is to improve the communication efficiency, especially the expensive and limited up-link bandwidth for mobile clients. To overcome this bottleneck, two types of solution methods have been proposed in general. One is to reduce the total number of communication rounds between the cloud server and the clients. A pioneering work is the federated averaging algorithm proposed by McMahan~\et~\cite{proc:aistats17:fedavg}. Its key principle is to let each client locally train the global model for multiple epochs, and then upload the model update. Thus, it is more communication efficient than the common practice of conventional distributed learning to exchange gradients per iteration in datacenter-based scenarios. The other complementary way is to further reduce the size of the transmitted message in each communication round, particularly through compressing model updates. Typical compression techniques include sparsification, subsampling, and probabilistic quantization coupled with random rotation. For example, after quantization, the original float-type elements of the update of the global model can be encoded as integer-type values with a few bits~\cite{tc:arxiv16:konecny:commeff,proc:icml17:kquantization}. Considering the compressed model updates are discrete, while classic differentially private deep learning mechanisms, hinging on the Gaussian mechanism, only support continuous inputs, Agarwal~\et~\cite{proc:nips18:cpsgd} proposed a Binomial mechanism to guarantee differential privacy for one iteration while enjoying communication efficiency. Another effective approach to improving communication efficiency is to first apply dropout strategies to the global model, and then let clients train over the same reduced model architecture~\cite{tc:arxiv18:fl:comm}. As a result, the downloaded model and the uploaded model update can be compressed in terms of dimension.


Third is from learning theory. The federated learning framework has several atypical characteristics: non independent and identically distributed (non-iid) and unbalanced data distributed over numerous clients with limited and unstable network connections. Such statistical heterogeneity and existence of stragglers make most existing analysis techniques for iid data infeasible and pose significant challenges for designing theoretically robust and efficient learning algorithms. The federated averaging algorithm mentioned above, as a cornerstone of federated learning, empirically shows its effectiveness in some tasks, but was observed to diverge for a large number of local epochs in~\cite{proc:aistats17:fedavg}. More specifically, it lets multiple chosen clients run mini-batch Stochastic Gradient Descent (SGD) in parallel, and then lets the cloud server periodically aggregate the model updates in a weighted manner, where weights are proportional to the sizes of the clients' training sets. Recently, Yu~\et~\cite{proc:aaai19:alibaba:fl} and Li~\et~\cite{tc:arxiv19:zhihuazhang:convergence} advanced the convergency analysis of federated averaging by imposing smooth and bounded assumptions on the loss function. The follow-up work~\cite{proc:icml19:alibaba:fed} further presented a momentum extension of parallel restarted SGD, which is compatible with federated learning. Different from the above work, Smith~\et~\cite{proc:nips17:smith:fed:multitask} focused on learning separate but related personalized models for distinct clients by leveraging multitask learning for shared representation. Chen~\et~\cite{tc:arxiv18:fl:meta:huawei} instead adopted meta-learning to enable client-specific modeling, where clients contribute information at the algorithm level rather than the model level to help train the meta-learner. Mohri~\et~\cite{proc:icml19:agnostic:fl} considered an unfairness issue that the global model can be unevenly biased toward different clients. They thus proposed a new agnostic federated learning framework where the global model can be optimized for any possible target distribution, which is formed via a mixture of client distributions. Eichner~\et~\cite{proc:icml19:semi:cyclic:sgd} captured data heterogeneity in federated learning, particularly cyclic patterns, and offered a pluralistic solution for convex objectives and sequential SGD.

Fourth is regarding production and standardization. Google has deployed federated learning in its Android keyboard, called Gboard, to polish several language tasks, including next-word prediction~\cite{tc:arxiv18:gboard:prediction}, query suggestion~\cite{tc:arxiv18:gboard:suggestion}, out-of-vocabulary words learning~\cite{tc:arxiv19:oovw}, and emoji prediction~\cite{tc:arxiv19:gboard:emoji:prediction}. In particular, the query suggestion used logistic regression as the triggering model for on-device training to determine whether the candidate suggestion should be shown or not. In addition, the other three tasks leveraged a tailored Long Short-Term Memory (LSTM) Recurrent Neural Network (RNN), called Coupled Input and Forget Gate (CIFG). In~\cite{proc:sysml19:fed}, Google's team also detailed their initial system design and summarized practical deployment issues, such as irregular device availability, unreliable network connectivity and interrupted execution, orchestration of lock-step execution across heterogonous devices, and limited device storage and computation resources. They also pointed out some future optimization directions in bias, convergence time, device scheduling, and bandwidth. To facilitate open research, Google integrated a federated learning simulation interface into its deep learning framework, called TensorFlow Federated~\cite{link:tf:fl}. However, this open-source module lacks several core functionalities, \eg, secure and privacy preserving mechanisms, on-device training, socket communication between cloud server and clients, task scheduling, and dropout/exception handling. These significantly suppress federated learning related productions in other commercial companies. Caldas~\et~\cite{tc:arxiv18:fl:leaf:benchmark} released a benchmark for federated learning, called LEAF. Currently, LEAF comprises some representative datasets, evaluation metrics, and a referenced implementation of federated averaging.

Parallel to existing work, where clients use the same (simplified) global model for learning, we propose a novel federated submodel learning framework for the sake of scalability. Under this framework, we identify and remedy new security and privacy issues, due to the dependence between the position of a client's desired submodel and her private data as well as the misalignment of clients' submodel updates in aggregation.


\section{Preliminaries}

\begin{figure*}[t]
\centering
\subfigure[Federated Learning with Secure Aggregation]{\label{fig:adversary:model:full}
\includegraphics[width=0.95\columnwidth]{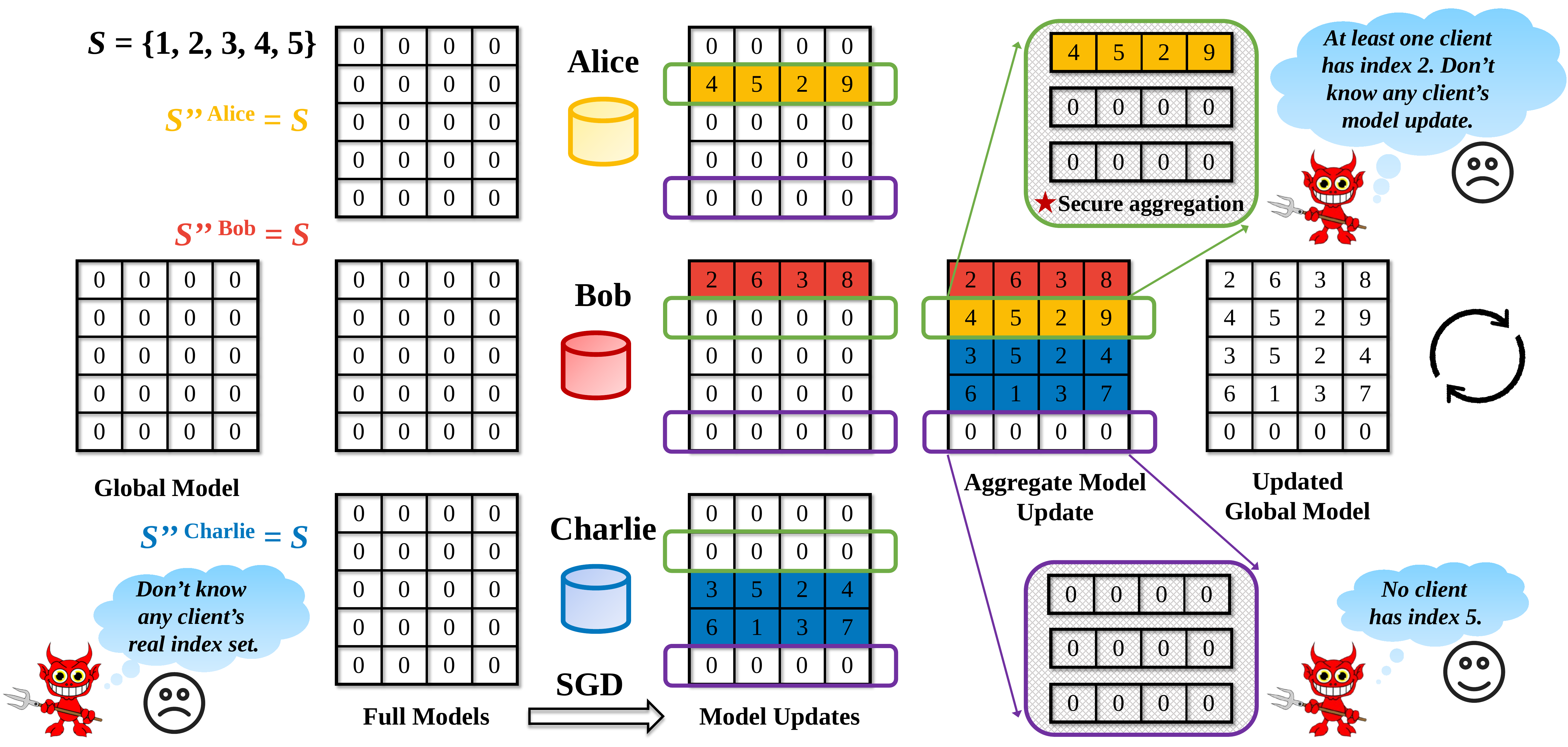}}
\subfigure[Federated Submodel Learning with Secure Aggregation]{\label{fig:adversary:model:sub}
\includegraphics[width=0.95\columnwidth]{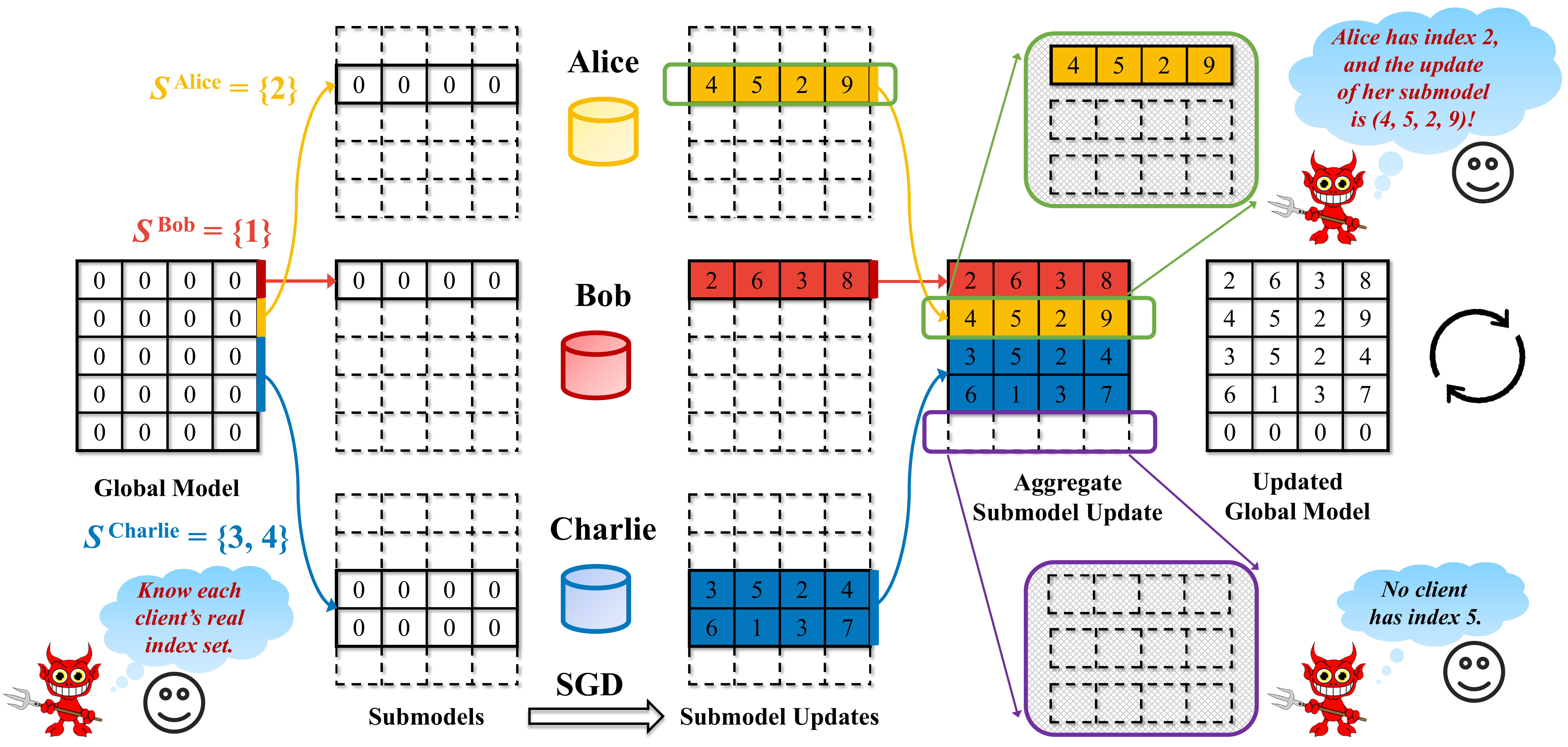}}
\caption{Federated submodel learning with secure aggregation can still leak a concrete client's real indices and her detailed updates to the cloud server, compared with conventional federated learning with secure aggregation. The rounded rectangle colored in dark gray denotes the process of secure aggregation~\cite{proc:ccs17:secure:agg}, where the cloud server, as the aggregator, only obtains the sum of vectors from multiple clients but does not know any individual client's vector. Additionally, the row in a table with dashed lines indicates that the client does not download, train over, or upload this row.
}\label{fig:adversary:model}
\end{figure*}

\begin{table}[!t]
	\caption{Frequently used notations and abbreviations.} \label{tab:notation}
	\centering
	\resizebox{\columnwidth}{!}{
		\begin{tabular}[t]{l|p{6.2cm}}
			\toprule
			Notation & Remark \\
			\midrule\midrule
            $\mathbf{W}^{m \times d}$ & Global/Full model at the cloud server, denoted by a matrix with $m$ rows and $d$ columns\\
            $\mathcal{S} = \{1, 2, \ldots, m\}$ & Full row index set of $\mathbf{W}$\\
            $\mathcal{C}, \vert\mathcal{C}\vert = n$ & The set of $n$ clients chosen by the cloud server in one communication round, the cardinality of $\mathcal{C}$\\
            $\hat{\mathcal{C}} \subset \mathcal{C}$ & The up-to-date set of clients who are alive throughout the communication round\\
            $i \in \mathcal{C}$ & A chosen client $i$\\
            $\mathcal{S}^{(i)} \subset \mathcal{S}$ & Client $i$'s real index set that corresponds to local data and specifies truly required rows of $\mathbf{W}$\\
            $\mathcal{S}''^{(i)}$ & A perturbed index set of client $i$, to download the submodel from the cloud server and to securely upload the update of the submodel to the cloud server\\
            $\mathbf{W}_{\mathcal{S}''^{(i)}}$ & Client $i$'s downloaded submodel\\
            $\mathbf{W}_{\mathcal{S}^{(i)}\bigcap\mathcal{S}''^{(i)}}$ & Client $i$'s succinct submodel for local training\\
            $\Delta\mathbf{W}_{\mathcal{S}^{(i)}\bigcap\mathcal{S}''^{(i)}}$ & Client $i$'s succinct submodel update\\
            $\Delta\mathbf{W}_{\mathcal{S}''^{(i)}}$ & Client $i$'s uploaded submodel update by padding zero vectors to the succinct submodel update\\
            $\epsilon$ & A privacy level/budget of local differential privacy\\
            $\mathbf{b}, \beta, h, \phi$ & A Bloom filter with $\beta$ bits and $h$ hash functions, representing/accommodating a set of $\phi$ elements\\
            $l$ & Dimension of vector in the secure aggregation protocol\\
            $p_1^{(i)}, p_2^{(i)}, p_3^{(i)}, p_4^{(i)}$ & Client $i$'s probability parameters to generate $\mathcal{S}''^{(i)}$\\
            $p_5^{(i)} = p_1^{(i)}(p_3^{(i)} - p_4^{(i)}) + p_4^{(i)}$ & The probability that an index in client $i$'s real index set will fall into her perturbed index set\\
            $p_6^{(i)} = p_2^{(i)}(p_3^{(i)} - p_4^{(i)}) + p_4^{(i)}$ & The probability that an index not in client $i$'s real index set will fall into her perturbed index set\\
            $p_{7}$ & The probability of the cloud server ascertaining that an index belongs to some client's real index set and also learning her detailed update with respect to this index from the securely aggregated submodel update\\
            $p_{8}$ & The probability of the cloud server ascertaining that an index does not belong to some client's real index set from the securely aggregated submodel update\\
            $s$   & The expected cardinality of each client's real index set\\
            $\mathbb{Z}_R = \{0, 1, \ldots, R-1\}$ & The least residue system modulo $R$\\
            $\gamma$ & A level of stochastic quantization mechanism\\
            FL     & Conventional federated learning\\
            SFL    & Secure federated learning, namely conventional federated learning with secure aggregation\\
            SFSL   & Secure federated submodel learning\\
            CPP    & Choice of probability parameters\\
            SA     & Secure aggregation\\
			\bottomrule
		\end{tabular}
	}
\end{table}


In this section, we elaborate on the privacy risks sketched in Section~\ref{sec:introduction:privacy:risk} and formally define the corresponding security requirements. We also review some existing building blocks.



We first introduce some necessary notations. For the sake of clarity, frequently used notations and abbreviations throughout this paper are also listed in Table~\ref{tab:notation}. We use a two-dimensional matrix with $m$ rows and $d$ columns to represent the global/full model, denoted as $\mathbf{W}$. Such a matrix-based representation not only suffices for the recommendation models used in Alibaba but also can easily degenerate to a widely used vector-based representation~\cite{proc:ccs17:secure:agg,proc:nips18:cpsgd}, by setting the number of columns $d$ to 1. Additionally, we let $\mathcal{S} = \{1, 2, \ldots, m\}$ denote the entire row index set of $\mathbf{W}$. Moreover, we let $\mathcal{C}$ denote those clients who are selected by the cloud server to participate in one communication round of federated submodel learning. For a chosen client $i \in \mathcal{C}$, we let $\mathcal{S}^{(i)} \subset \mathcal{S}$ denote her real index set, which implies that the user data of client $i$ involves the rows in $\mathbf{W}$ with indices $\mathcal{S}^{(i)}$.

\subsection{Details on Privacy Risks and Security Requirements}\label{sec:adversary:model}

We now expand on two kinds of privacy leakages that the federated submodel learning brings in, compared with conventional federated learning. We provide Fig.~\ref{fig:adversary:model} for illustration. We here adopt an {\em honest-but-curious} security model, in which the cloud server and all clients follow the designed protocol, but try to glean sensitive information about others.



The first kind of privacy leakage is the disclosure of a client's real index set, which specifies the position of a submodel and implies the client's private data, to the cloud server. For example, each row of the embedding matrix for goods in the recommendation model is linked with a certain goods ID, which indicates that a client's real index set, specifying her required rows of the embedding matrix, is in fact the goods IDs in her private data. Similarly, when federated submodel learning is applied to the natural language scenario (\eg, next-word prediction in Gboard), a client's real index set to locate her wanted parameters of word embedding is actually the vocabulary extracted from her typed texts. Thus, the disclosure of a client's real index set to the cloud server is still regarded as the leakage of the client's private data. In contrast, for conventional federated learning, each client essentially uses the full index set, which is public to the cloud server and all other clients, and does not reveal any private information.

The second kind of privacy leakage is from the aggregation of misaligned submodel updates, where the cloud server may not only know that a certain client has a concrete index but also learn her detailed update with respect to this index. In addition to the fact that the real index reveals a client's private data, the client's individual submodel update can still memorize or even allow reconstruction of her private data, namely ``model inversion" attack~\cite{proc:usenix14:fredrikson,proc:ccs15:fredrikson:model:inversion,jour:ijsn15:evasion,proc:sp19:feature:leakage:fl,proc:nips19:leakage}. To conceal a client's individual update in conventional federated learning, the secure aggregation protocol~\cite{proc:ccs17:secure:agg} can be applied, which allows the cloud server to obtain the sum of multiple vectors without learning any individual vector. As shown in Fig.~\ref{fig:adversary:model:full}, with respect to index $2$, Alice submits the update, denoted by the vector $(4, 5, 2, 9)$, whereas Bob and Charlie submit two zero vectors. The secure aggregation protocol can guarantee that the cloud server only obtains the sum of three vectors, \ie, $(4, 5, 2, 9)$, but does not know the content of any individual vector. This further implies that from the aggregate result, the cloud server can merely infer that at least one client has index $2$, but cannot identify which client(s). Such a functionality is essentially analogous to anonymization. In a nutshell, the zero updates from Bob and Charlie function as two shields of Alice. However, in federated submodel learning, due to the differentiation and misalignment of clients' submodels, the ``zero" shields from other clients vanish, and the aggregation of updates with respect to a certain index can come from one unique client, making secure aggregation ineffective. For example, in Fig.~\ref{fig:adversary:model:sub}, only Alice who has index 2 submits the update $(4, 5, 2, 9)$, whereas Bob and Charlie submit nothing. Without the blindings from Bob and Charlie, the cloud server not only knows that Alice has index $2$ while Bob and Charlie do not have but also learns Alice's detailed update $(4, 5, 2, 9)$.



Given the two kinds of privacy leakages above, we define the corresponding security requirements. First, for the disclosure of real index sets when clients interact with the cloud server, we consider that each client should have {\em plausible deniability} of whether a certain index is or is not in her real index set. To measure the strength of plausible deniability, we adopt local differential privacy, which is a variant of standard differential privacy in the local setting. Specifically, the perturbation in local differential privacy is performed by clients in a distributed manner, rather than relying on a data curator, as a trusted authority to conduct centralized perturbation in differential privacy. Thus, the privacy of an individual client's data is not only preserved from external attackers but also from the untrusted data curator, \eg, the cloud server in our context. Due to its intriguing security properties, local differential privacy for various population statistics has recently received significant industrial deployments (\eg, in Google~\cite{proc:ccs14:rappor,jour:popets:rappor2}, Apple~\cite{jour:apple17:ldp}, and Microsoft~\cite{proc:nips17:ldp:ding:microsoft}), as well as lasting academic attention~\cite{proc:focs08:ldp,proc:focs13:ldp,proc:stoc15:ldp,proc:ccs16:ldp,proc:nips17:ldp,proc:usenixs17:ldp:wang,proc:sp18:ldp:wang,proc:ccs18:ldp:ninghui,proc:soda19:ldp:longitudinal,proc:sigmod19:wang:ding,tc:ldp:wang}. We now present the formal definition of local differential privacy as follows:

\begin{definition}[Local Differential Privacy]\label{def:ldp}
A randomized mechanism $M$ satisfies $\epsilon$-local differential privacy, if for any pair of inputs from a client, denoted as $x$ and $y$, and for any possible output of $M$, denoted as $z$, we have
\begin{align*}
\frac{\mathrm{Pr}\left(M\left(x\right) = z\right)}{\mathrm{Pr}\left(M\left(y\right) = z\right)} \leq \exp\left(\epsilon\right),
\end{align*}
where $\epsilon$ is a privacy budget controlled by the client. A smaller $\epsilon$ offers a better privacy guarantee.
\end{definition}

\noindent Intuitively, the above definition says that the output distribution of the randomized mechanism does not change too much, given distinct inputs from the client. Thus, local differential privacy formalizes a sort of plausible deniability: no matter what output is revealed, it is approximately equally as likely to have come from one input as any other input. In addition, when local differential privacy applies to obscure the membership of a certain index in federated submodel learning, the inputs and the outputs are boolean values, where possible inputs (\resp, outputs) are two states: a certain index ``in" or ``not in" a client's real (\resp, revealed) index set. Moreover, we can check that conventional federated learning provides the strongest deniability, where the level of local differential privacy is $\epsilon = \ln(1/1) = 0$ for each client. The reason is that no matter whether an index is or is not in a client's real index set (different inputs), this index will definitely be revealed (the same output). In contrast, federated submodel learning provides the weakest deniability, where the level of local differential privacy is $\epsilon = \ln(1/0) = \infty$ for each client, because if an index is in (\resp, not in) a client's real index set (different inputs), this index will definitely (\resp, definitely not) be revealed, \ie, the output with probability 1 (\resp, 0).



Second, direct secure aggregation of submodel updates is the most efficient but insecure case, which can leak whether some client has a certain index as well as her detailed update. In contrast, the other extreme case is conventional federated learning with secure aggregation, which is most secure but inefficient. Specifically, all participating clients upload the full model updates, which can perfectly prevent privacy leakages due to the misalignment of customized submodels. To enable clients to tune privacy and efficiency in a fine-grained manner, we define a client-controllable privacy protection mechanism for submodel updates aggregation.
\begin{definition}\label{def:client:control:privacy}
A privacy protection mechanism for submodel updates aggregation is client controllable, if it enables participating clients to determine the probabilities of the following two complementary events: From the securely aggregated submodel update,
\begin{itemize}
 \item {Event 1:} the cloud server ascertains that an index belongs to some client's real index set and also learns her detailed update with respect to this index;
 \item {Event 2:} the cloud server ascertains that an index does not belong to some client's real index set.
\end{itemize}
\end{definition}

\noindent We note that revealing the states of some clients having and not having a certain index should both be regarded as privacy leakages. Furthermore, when the above definition applies to federated learning, and if at least two clients participate in aggregation, the probability of Event 1 is 0, and the probability of Event 2 is still 0 for those indices within the union of the chosen clients' real index sets. For an index outside the union, \eg, index $5$ shown in Fig.~\ref{fig:adversary:model:full}, the probability of Event 2 is approaching 1. The reason is that from the aggregate zero vector, the cloud server almost ascertains that all clients do not have this index, despite of some rare cases (\eg, Alice and Bob submit two vectors of elements differing in signs, and Charlie submits a zero vector).

\subsection{Building Blocks}
We review randomized response, secure aggregation, and Bloom filter underlying our design.


\subsubsection{Randomized Response}
Randomized response, due to Warner in 1965~\cite{jour:jasa65:randomized:response}, is a survey technique in the social sciences to collect statistical information about illegal, embarrassing, or sensitive topics, where the respondents want to preserve privacies of their answers. A classical example for illustrating this technique is the ``Are you a member of the communist party?" question. For this question, each respondent flips a fair coin in secret and tells the truth if it comes up tails; otherwise, she flips a second coin and responds ``Yes" if heads and ``No" if tails. Thus, a communist (\resp, non-communist) will answer ``Yes" with probability $75\%$ (\resp, $25\%$) and ``No" with probability $25\%$ (\resp, $75\%$).

The intuition behind randomized response is that it provides plausible deniability for both ``Yes" and ``No" answers. In particular, a communist can contribute her response of ``Yes" to the event that the first and second coin flips were both heads, which occurs with probability $25\%$. Meanwhile, a non-communist can also contribute her response of ``No" to the event that the first coin is heads and the second coin is tails, which still occurs with probability $25\%$. Furthermore, the plausible deniability of randomized response can be rigorously quantified by local differential privacy. As analyzed in~\cite{jour:2014:dwork:dp,proc:ccs14:rappor}, for a one-time response, each respondent has local differential privacy at the level $\epsilon = \ln(75\% / 25\%) = \ln 3$, irrespective of any attacker's prior knowledge.

\subsubsection{Secure Aggregation}\label{sec:secure:aggregation}

An individual model update may leak a client's private data under the notorious model inversion attack. Nevertheless, to update the global model in federated learning, the cloud server does not need to access any individual model update and only requires the aggregate, basically the sum, of multiple model updates. For example, if $n$ clients participate in the aggregation, denoted as $\mathcal{C}$, where client $i \in \mathcal{C}$ holds a vector $\Delta\mathbf{w}^{(i)} \in \mathbb{Z}^l$ of dimension $l$, the cloud server should just obtain the sum $\sum_{i \in \mathcal{C}} \Delta\mathbf{w}^{(i)}$, while maintaining each individual $\Delta\mathbf{w}^{(i)}$ in secret. For this purpose and the characteristics of mobile devices, particularly limited and unstable network connections and common dropouts, Google researchers proposed a secure aggregation protocol in~\cite{proc:ccs17:secure:agg}. Implied by the functionality of oblivious addition, secure aggregation in federated learning can further ensure that even if the model inversion attack succeeds, the attacker (\eg, the honest-but-curious or active adversary cloud server, or an external intruder) may only infer that a group of clients has a certain data item but cannot identify which concrete client. This functionality is similar to anonymization. In what follows, we briefly review the secure aggregation protocol from communication settings, technical intuitions, scalability, and efficiency.

First, we introduce its communication settings. During the aggregation process, a client can neither establish direct communication channels with other clients nor natively authenticate other clients. However, each client has a secure (private and authenticated) channel with the cloud server. Thus, if one client intends to exchange messages with other clients, she needs to hinge on the cloud server as a relay. In addition, to guarantee confidentiality and integrity against the mediate cloud server, client-to-client messages should be encrypted with symmetric authenticated encryption, where the secret key is set up through Diffie-Hellman key exchange between two clients. Moreover, to defend active adversaries, a digital signature scheme is required for consistency checks. These basic settings make the secure aggregation protocol different from other relevant work about oblivious addition~\cite{proc:eurocrypt99:paillier,proc:tcc05:boneh,proc:eurocrypt10:freeman}, or, more generally, secure multiparty computation~\cite{stoc87:mpc,proc:stoc88:mpc,proc:usenix10:sepia,proc:nsdi17:prio}, which requires direct peer-to-peer communication between clients; assumes the availability of multiple noncolluding cloud servers; or resorts to a trusted third party for key generation and distribution.


Second, we outline the technical intuitions behind secure aggregation. Each client doubly masks her private data, including a self mask and a mutual mask. Here, the self mask is chosen by the client, whereas the mutual mask is agreed on with the other clients through Diffie-Hellman key exchange and is additively cancelable when summed with others. Considering that some clients may drop out at any point, their masks cannot be canceled. To handle this problem, each client uses a threshold secret sharing scheme to split her private seed of a Pseudo-Random Number Generator (PRNG) for generating the self mask as well as her private key for generating the mutual mask, and then distribute the shares to the other clients. As long as some minimum (no less than the threshold) number of clients remain alive, they can jointly help the cloud server remove the self masks of live clients and the mutual masks between dropped and live clients.


\begin{table}[t]
\caption{Complexities of the secure aggregation protocol in the honest-but-curious setting.} \label{tab:secure:aggregation:overhead}
\centering
\resizebox{\columnwidth}{!}{
\begin{tabular}[t]{l|ccc}
\toprule
             & Communication  &  Computation    &  Storage \tabularnewline
\midrule\midrule
Client       & $O(n + l)$     & $O(n^2 + nl)$   & $O(n + l)$ \tabularnewline
Server       & $O(n^2 + nl)$  & $O(n^2l)$       & $O(n^2 + l)$ \tabularnewline
\bottomrule
\end{tabular}
}
\end{table}

Third, we present the scalability and efficiency of the secure aggregation protocol. We list its communication, computation, and storage complexities in Table~\ref{tab:secure:aggregation:overhead}, where $n$ is the number of clients involved in the aggregation, and $l$ denotes the number of data items held by each client or the dimension of her data vector. We can see that this protocol is quite efficient for large-scale data vectors, especially from communication overhead, and thus can apply to mobile applications. In particular, as reported in~\cite{proc:ccs17:secure:agg}, when $2^{14}$ clients are involved in the aggregation, and each client has $2^{24}$ $16$-bit values, the communication overhead of the secure aggregation protocol expands $1.98\times$ over sending data in the clear.


\begin{figure*}[t]
\centering
\includegraphics[width = 1.9\columnwidth]{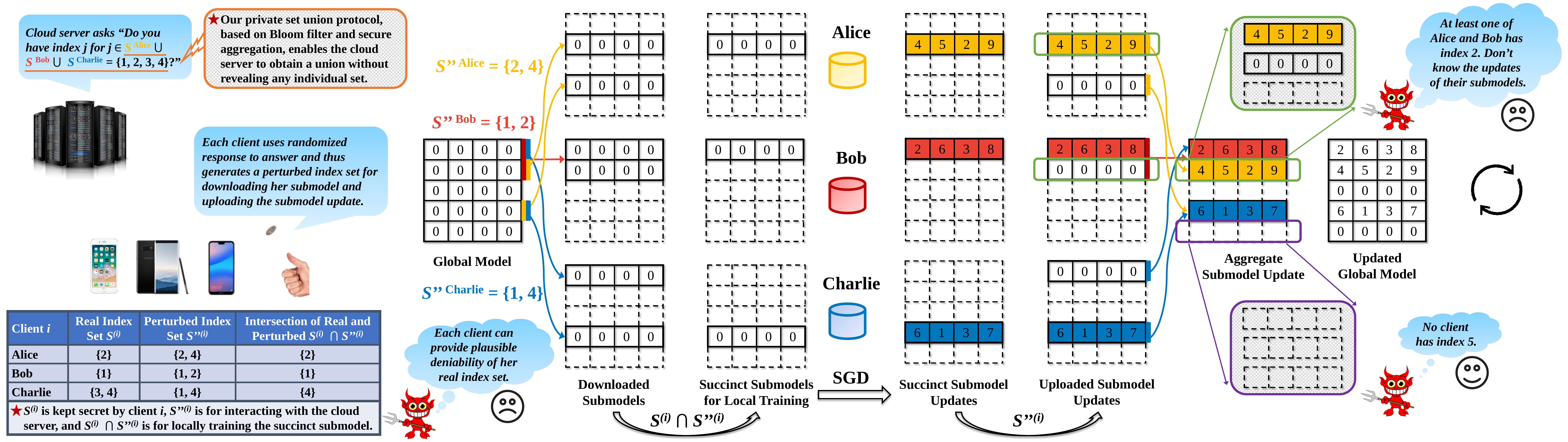}
\caption{An illustration of the design rationale of secure federated submodel learning.}\label{fig:design:rationale}
\end{figure*}

\subsubsection{Bloom Filter}\label{sec:bloom:filter}

Bloom filter, conceived by Bloom in 1970~\cite{jour:cacm1970:bloom:filter}, is a space-efficient probabilistic data structure to represent a set whose elements come from a huge domain. In addition, when testing whether an element is a member of the set, a false positive is possible, but a false negative is impossible. In other words, an element that is diagnosed to be present in the set possibly does not belong to the set in reality, and an element that is judged to be not present definitively does not belong to the set. We describe its technical details and properties as follows.

A Bloom filter is a $\beta$-length bit vector initially set to 0, denoted as $\mathbf{b}$. In addition, it requires $h$ different independent hash functions. The output range of these hash functions is $\{1, 2, \ldots, \beta\}$, which corresponds to the $\beta$ positions of the Bloom filter. To represent a set of $\phi$ elements, we apply $h$ hash functions to each element and set the Bloom filter at the positions of $h$ hash values to 1. In the membership test phase, to check whether an element belongs to the set, we simply check the Bloom filter at the positions of its $h$ hash values. If any of the bits at these positions is 0, the element is definitely not in the set. If all are 1, then either the element is in the set, or the bits have by chance been set to 1 during the insertion of other elements, resulting in a false positive. Specifically, the false positive rate ($\mathrm{FPR}$) of a Bloom filter depends on the length of Bloom filter $\beta$; the number of hash functions $h$; and the cardinality of set $\phi$. According to~\cite{jour:tmc03:bloom:filter,jour:im04:bloom:filter}, its detailed formula is given as
\begin{align*}
\mathrm{FPR} = \left(1 - \left(1 - \frac{1}{\beta}\right)^{h\phi}\right)^{h} \approx \left(1 - \exp\left(-\frac{h\phi}{\beta}\right)\right)^{h}.
\end{align*}
Given $\beta$ and $\phi$, to minimize the false positive rate, the optimal number of hash functions is
\begin{align*}
h = \ln 2 \frac{\beta}{\phi}.
\end{align*}
In addition, given $\phi$ and assuming the optimal number of hash functions $h$ is used, to achieve a desired false positive rate $\mathrm{FPR}$, the optimal length of Bloom filter should be
\begin{align}
\beta = -\frac{\phi \ln \mathrm{FPR}}{\left(\ln 2\right)^2}.\label{eq:bloom:filter:optimal:length}
\end{align}
Thus, the optimal number of bits per the set's element is
\begin{align*}
\frac{\beta}{\phi} = -\frac{\ln \mathrm{FPR}}{\left(\ln 2\right)^2},
\end{align*}
and the corresponding number of hash functions is
\begin{align*}
h = -\frac{\ln \mathrm{FPR}}{\ln 2}.
\end{align*}
The above deductions mean that for a given false positive rate, the length of a Bloom filter is proportional to the size of the set being filtered, while the required number of hash functions only relies on the target false positive rate.

We further introduce an appealing property of Bloom filter when performing union over the underlying sets. To represent sets, denoted as $\forall i \in \mathcal{C}, \mathcal{S}^{(i)}$, we use Bloom filters with the same length and the same hash functions, denoted as $\forall i \in \mathcal{C}, \mathbf{b}^{(i)}$. Then, the union of these sets, \ie, $\bigcup_{i \in \mathcal{C}} \mathcal{S}^{(i)}$, can be represented by a Bloom filter, which performs bitwise OR operations over the Bloom filters, \ie, $\bigvee_{i \in \mathcal{C}} \mathbf{b}^{(i)}$. Such a union operation is lossless (implying that the false positive rate remains unchanged) in the sense that the resulting Bloom filter is the same as the Bloom filter created from scratch using the union of these sets. In addition, because the Bloom filter needs to accommodate the union of sets, the parameter $\phi$, denoting the cardinality of the set, should be determined by estimating the cardinality of the union $\bigcup_{i \in \mathcal{C}} \mathcal{S}^{(i)}$ rather than that of each individual set $\mathcal{S}^{(i)}$.

We finally present a generalized version of Bloom filter, called counting Bloom filter~\cite{proc:sigcomm98:cbf}. It consists of $\beta$ counters/integers rather than $\beta$ bits and can represent a multiset, where an element can occur more than once. The main difference between counting Bloom filter and Bloom filter lies in that when representing an element, we increment the counters by one at the positions of its $h$ hash values. Thus, compared with the membership test in Bloom filter, counting Bloom filter further supports more general counting queries of a given element, \ie, whether the count number of a given element in a multiset is smaller than a certain threshold.


\section{Design of Secure Federated Submodel Learning}

In this section, we present the design rationale and the design details of Secure Federated Submodel Learning (SFSL).

\subsection{Design Rationale}\label{sec:design:rationale}

We illustrate our key design principles mainly through demonstrating how to handle two fundamental problems raised in Section~\ref{sec:introduction:problems:challenges} and how to resolve several practical issues.


As shown in Fig.~\ref{fig:design:rationale}, we handle two fundamental problems in a unified manner rather than in separate ways. During both download and upload phases, a client consistently uses a perturbed index set in place of her real index set. In contrast, during the local training phase, the client leverages the intersection of her real index set and perturbed index set to prepare the succinct submodel and involved user data. With the blinding of the perturbed index set to interact with the outside world, the client holds a plausible deniability of some index being in or not in her real index set. Specifically, the client generates her perturbed index set locally with randomized response as follows. First, the sensitive question asked by the cloud server here is ``Do you have a certain index?". Then, the client answers ``Yes" with two customized probabilities, conditioned on whether the index is or is not in her real index set. These two probabilities allow the client to fine-tune balance between privacy and utility.

We further carefully examine the feasibility of our index set perturbation method in handling two fundamental problems. For the first problem in the download phase, if a client intends to download a certain row of the full matrix, and she actually downloads, she can blame her action to randomization, \ie, the occurrence of the event that the index not in a client's real index set returns a ``Yes" answer. Similarly, if a client does not intend to download the row, and she actually does not, she can still attribute her action to randomization, \ie, the occurrence of the event that the index in a client's real index returns a ``No" answer. Regarding the second problem in the upload phase, the usage of the perturbed index set still empowers a client to deny her underlying intention of modifying or not modifying some row of the full matrix, even if the cloud server observes her binary action of modifying or not modifying. Additionally, for a concrete row, there are two different groups of clients involved in the joint modification: (1) One group consists of those clients who intend to modify the row and contribute nonzero modifications; and (2) the other group comprises those clients who do not intend to modify the row and pretend to modify by submitting zero modifications. Under the secure aggregation guarantee, even though the cloud server observes the aggregate modification, it is hard for the cloud server, as an adversary, to identify any individual modification and further to infer whether some client originally intends to perform a modification or not. The hardness is controlled by the sizes of two groups, alternatively the probabilities of an index in and not in the real index set returning a ``Yes" answer, which are fully tunable by clients.


In addition to these two basic problems, there still exist two practical issues to be solved before the above index set perturbation method can apply to federated submodel learning. The first issue regards practical efficiency, \ie, whether it is practical and necessary for the cloud server to ask ``Do you have a certain index?" for each index in the full index set. In our context, the number of rows of the matrix, representing the deep learning model, is in the magnitude of billions. Thus, it is impractical for a client to answer billion-scale questions and further download and securely upload those rows with ``Yes" answers of the full model. We thus turn to narrowing down the size of the questions and identify a sufficient and necessary index set, namely the union of the chosen clients' real index sets. Our optimization is inspired by an example: if a client's real index set is of size 100, and the full index set is of size 1 billion, using the probability parameters in the survey of party membership, her expected number of ``Yes" answers is $10^2 \times 75\% + (10^{9} - 10^2) \times 25\% \approx 2.5 \times 10^8$. Such a calculation implies that the dominant ``Yes" answers are those with ``No" in reality but ``Yes" due to randomness. Nevertheless, most of the ``No"-to-``Yes" answers are useless. More specifically, for those indices that do not belong to any client's real index set, \eg, index 5 in Fig.~\ref{fig:adversary:model} and Fig.~\ref{fig:design:rationale}, although part ($25\%$ in expectation) of clients upload zero vectors for randomization, the cloud server can still infer from the aggregate zero vectors that these clients do not actually have the indices. Thus, it is not necessary to cover any index outside the union.

Accompanied with the first issue, another fundamental and thorny problem that arises is how multiple clients can obtain the union of their real index sets under the mediation of an untrusted cloud server without revealing any individual client's real index set to others, \ie, the need of a private set union protocol. Considering any existing scheme doest not satisfy the atypical setting and the stringent requirement of federated submodel learning, we design a novel private set union scheme based on Bloom filter, secure aggregation, and randomization. We first let each chosen client represent her real index set as a Bloom filter. Then, different from the common practice to derive the union of sets by performing bit-wise OR operations over their Bloom filters, which naturally requires both addition and multiplication operations, we let the cloud server directly ``sum" the Bloom filters. Here, the sum operation can be performed obliviously and efficiently under the coordination of an untrusted cloud server with the secure aggregation protocol. The aggregate Bloom filter is actually a counting Bloom filter, equivalent to constructing it from scratch by inserting each set in sequence. Besides the membership information, the counting Bloom filter still contains the count number of each element in the union. To prevent such an undesirable leakage, we let each client replace bit 1s in her Bloom filter with random integers, while keeping each bit 0 unchanged. When recovering the union of real index sets, one naive method for the cloud server is to do membership tests for the full index set, which is prohibitively time consuming, and can also introduce a large number of false positives. To handle these problems, we let the cloud server first divide the full index set into a certain number of partitions, and then let each client fill in a bit vector to indicate whether there exists an element of her real index set falling into these partitions. Then, just like computing the union with Bloom filters, the cloud server can securely determine those partitions that contain clients' real indices to further facilitate efficient union reconstruction.



The second issue regards the longitudinal privacy guarantee when a client is chosen to participate in multiple communication rounds. However, the initial version of randomized response only provides a rigorous privacy guarantee when an audience answers the same question once, facilitating only a one-time response to ``Do you have a certain index?" in our context. Thus, we need to extend the original randomized response mechanism to allow repeated responses from the same client to those already answered indices in a privacy-preserving manner. Our extension leverages key principles from Randomized Aggregatable Privacy-Preserving Ordinal Response (RAPPOR)~\cite{proc:ccs14:rappor,jour:popets:rappor2} and plays the randomized response game twice with a memoization step between. Specifically, the noisy answers generated by the inner randomized response will be memoized and permanently replace real answers in the outer randomized response. This ensures that even though a client responds to the membership of a concrete index for an infinite number of times, she can still hold a plausible deniability of her real answer, where the level of deniability is lower bounded by the memoized noisy answer.


\subsection{Design Details}

Following the guidelines in Section~\ref{sec:design:rationale}, we propose a secure scheme for federated submodel learning. We introduce the scheme in a top-down manner, where we first give an overview of its top-level architecture and then show two underlying modules, namely index set perturbation and private set union. For the sake of clarity, we outline our design in Algorithm~\ref{alg:fsl}, Algorithm~\ref{alg:index:set:ldp}, and Algorithm~\ref{alg:set:union}.

\subsubsection{Secure Federated Submodel Learning}

\begin{algorithm}[!t]
\SetNoFillComment
\small
\caption{Secure Federated Submodel Learning}\label{alg:fsl}
\tcc{Cloud server's process}
Initializes the global model $\mathbf{W}$\;
\ForEach{{\em communication round}}
{
   Randomly selects $n$ clients, denoted as $\mathcal{C}$, where $|\mathcal{C}| = n$\;
   Launches private set union (Algorithm~\ref{alg:set:union}), gets the union of $n$ clients' real index sets, namely $\bigcup_{i \in \mathcal{C}} \mathcal{S}^{(i)}$, and delivers the union result to the up-to-date set of clients who are alive, denoted as $\hat{\mathcal{C}} \subset \mathcal{C}$\;
   \ForEach{Client $i \in \hat{\mathcal{C}}$}
   {
      Receives and stores the perturbed index set $\mathcal{S}''^{(i)}$ from client $i$, and returns the submodel $\mathbf{W}_{\mathcal{S}''^{(i)}}$ and training hyperparameters to $i$\;
   }
   \tcp{Securely aggregates weighted submodel updates and count vectors}
   \ForEach{ $j \in \bigcup_{i \in \mathcal{C}} \mathcal{S}^{(i)}$}
   {
      Determines the live clients involving index $j$, denoted as $\hat{\mathcal{C}}_j = \{i \vert i \in \hat{\mathcal{C}} \wedge j \in \mathcal{S}''^{(i)}\}$, lets them submit materials for secure aggregation, and obtains the sum of (weighted) updates $\sum_{i \in \hat{\mathcal{C}}_j} \Delta \mathbf{w}_j^{(i)}$ and the total count number of relevant samples $\sum_{i \in \hat{\mathcal{C}}_j} v_j^{(i)}$\;
      Updates the $j$-th row of the global model $\mathbf{W}$ by adding $\left. \sum_{i \in \hat{\mathcal{C}}_j} \Delta \mathbf{w}_j^{(i)} \middle /\sum_{i \in \hat{\mathcal{C}}_j} v_j^{(i)} \right.$\;
   }
}
\tcc{Client $i$'s process}
Determines her real index set $\mathcal{S}^{(i)}$ based on local data\;
Participates in private set union (Algorithm~\ref{alg:set:union})\;
Generates a perturbed index set $\mathcal{S}''^{(i)}$ (Algorithm~\ref{alg:index:set:ldp})\;
Uses $\mathcal{S}''^{(i)}$ to download a submodel, denoted as $\mathbf{W}_{\mathcal{S}''^{(i)}}$\;
Depending on the succinct index set $\mathcal{S}''^{(i)} \bigcap \mathcal{S}^{(i)}$, locally extracts the succinct submodel $\mathbf{W}_{\mathcal{S}''^{(i)} \bigcap \mathcal{S}^{(i)}}$ from $\mathbf{W}_{\mathcal{S}''^{(i)}}$ and prepares involved data as the succinct training set\;
Locally trains $\mathbf{W}_{\mathcal{S}''^{(i)} \bigcap \mathcal{S}^{(i)}}$ using the hyperparameters and obtains the update of succinct submodel $\Delta \mathbf{W}_{\mathcal{S}''^{(i)} \bigcap \mathcal{S}^{(i)}}$\;
Initializes the submodel update to be uploaded, denoted as $\Delta \mathbf{W}_{\mathcal{S}''^{(i)}}$, all to 0, and then adds $\Delta \mathbf{W}_{\mathcal{S}''^{(i)} \bigcap \mathcal{S}^{(i)}}$\;
Counts the number of samples involving each index $j \in \mathcal{S}''^{(i)}$ and stores the results to the vector $\mathbf{v}_{\mathcal{S}''^{(i)}}$\;
Updates $\Delta \mathbf{W}_{\mathcal{S}''^{(i)}}$ by multiplying each row with the corresponding count number in $\mathbf{v}_{\mathcal{S}''^{(i)}}$\;
Uploads materials for securely aggregating $\Delta \mathbf{W}_{\mathcal{S}''^{(i)}}, \mathbf{v}_{\mathcal{S}''^{(i)}}$.
\end{algorithm}

Before presenting our secure federated submodel learning framework, we first briefly review the federated averaging algorithm~\cite{proc:aistats17:fedavg}, which is the cornerstone and core of conventional federated learning. In particular, federated averaging is a synchronous distributed learning method, for non-iid and unbalanced training data distributed at massive communication-constrained clients, under the coordination of a cloud server. At the beginning of one communication round, the cloud server sends the up-to-date parameters of the global model and the training hyperparameters to some clients. Here, the training hyperparameters include the optimization algorithm, typically mini-batch SGD, the local batch size (the number of training samples used to locally update the global model once, namely per iteration), the local epochs (the number of passes over a client's entire training data), and the learning rate. Then, each chosen client trains the global model on her data and uploads the update of the global model together with the size of her training data to the cloud server. The cloud server takes a weighted average of all updates, where one client's weight is proportional to the size of her local data, and finally adds the aggregate update to the global model.



We now present secure federated submodel learning in Algorithm~\ref{alg:fsl}, which generalizes the federated averaging algorithm to support effective and efficient submodel learning and preserves desirable security and privacy properties while incorporating the unstable and limited network connections of mobile devices. At the initial stage, the cloud server randomly initializes the global model (Line 1). For each communication round, the cloud server first selects some clients to participate (Line 3) and also maintains an up-to-date set of clients who are alive throughout the whole round. A chosen client determines her real index set based on her local data, which can specify the ``position" of her truly required submodel (Line 10). For example, if the visited goods IDs of a Taobao user include $\{1, 2, 4\}$, then she requires the first, second, and fourth rows of the embedding matrix for goods IDs, which further implies that her real index set should contain $\{1, 2, 4\}$. Then, the cloud server launches the private set union protocol to obtain the union of all chosen clients' real index sets while keeping each individual client's real index set in secret (Lines 4 and 11). The union result will be further delivered to live clients, based on which each client can perturb her real index set with a customized local differential privacy guarantee (Line 12). In addition, each client will use the perturbed index set, rather than the real index set, to download her submodel and upload the submodel update (Lines 13 and 19). In other words, when interacting with the cloud server, a client's real index set is replaced with her perturbed index set, which provides deniability of her real index set and thus obscures her training data. Upon receiving the perturbed index set from a client, the cloud server stores it for later usage and returns the corresponding submodel and the training hyperparameters to the client (Line 6). Depending on the intersection of the real index set and the perturbed index set, called the succinct index set, the client extracts a succinct submodel and prepares involved data as the succinct training set (Line 14). For example, if a Taobao user's real index set is $\{1, 2, 4\}$ and her perturbed index set is $\{2, 4, 6, 9\}$, she receives a submodel with row indices $\{2, 4, 6, 9\}$ from the cloud server, but just needs to train the succinct submodel with row indices $\{2, 4\}$ over her local data involving goods IDs $\{2, 4\}$. After training under the preset hyperparameters, the client derives the update of the succinct submodel (Line 15) and further prepares the submodel update to be uploaded with the perturbed index set by adding the update of the succinct submodel to the rows with the succinct indices and padding zero vectors to the other rows (Line 16). Additionally, to facilitate the cloud server in averaging submodel updates according to the sizes of relevant local training data, each client also needs to count the number of her samples involving every index in the perturbed index set (Line 17). In particular, the numbers of samples involving the indices outside the succinct index set are all zeros. Furthermore, each client prepares the submodel update to be uploaded by multiplying each row with the corresponding count number, namely the weight, in advance (Line 18). Finally, the weighted submodel updates and the count vectors from live clients are securely aggregated under the guidance of the cloud server (Lines 7--9 and 19). Specifically, the cloud server guides the secure aggregation by enumerating every index in the union of the chosen clients' real index sets. For each index, the cloud server first determines the set of live clients whose perturbed index sets contain this index and then lets these clients submit the materials for securely adding up the weighted updates and the count numbers with respect to this index (Line 8). The cloud server finally applies the update to the global model in this row by adding the quotient of the sum of the weighted updates and the total count number, namely the weighted average (Line 9). Considering that the weighted submodel updates and the count numbers are aggregated side by side, each client can augment the matrix, denoting her weighted submodel update, with the transposed count vector in the last column, when preparing materials for secure aggregation (Line 19). In addition, to reduce the interactions between the cloud server and a client, they can package all the materials supporting secure aggregation, rather than exchange the materials for one index each time (Lines 7--9), \ie, the cloud server executes Lines 7 and 8 for each live client $i \in \hat{\mathcal{C}}$ in parallel and then executes Line 9 for each index in the union $j \in \bigcup_{i \in \mathcal{C}} \mathcal{S}^{(i)}$.


\subsubsection{Index Set Perturbation}\label{sec:design:index:set:perturbation}

\begin{algorithm}[!t]
\small
\caption{Client $i$'s Index Set Perturbation}\label{alg:index:set:ldp}
\KwIn{Client $i$'s real index set $\mathcal{S}^{(i)}$, Union of the chosen clients' real index sets $\bigcup_{i\in\mathcal{C}} \mathcal{S}^{(i)}$, Client $i$'s memoized index set $\mathcal{Y}^{(i)}$ (\resp, $\mathcal{N}^{(i)}$) initialized to $\emptyset$ at very beginning with ``Yes" (\resp, ``No") permanent answers to the question ``Do you have a certain index?", Client $i$'s customized probability parameters $0 \leq p_1^{(i)}, p_2^{(i)}, p_3^{(i)}, p_4^{(i)} \leq 1$.}
\KwOut{Client $i$'s doubly perturbed index set $\mathcal{S}''^{(i)}$}
$\mathcal{S}'^{(i)} = \emptyset, \mathcal{S}''^{(i)} = \emptyset$\;
\tcp{Permanent randomized response}
\ForEach{$j \in \bigcup_{i \in \mathcal{C}} \mathcal{S}^{(i)} \bigwedge j \notin \mathcal{Y}^{(i)} \bigcup \mathcal{N}^{(i)}$}
{
   \If{$j \in \mathcal{S}^{(i)}$}
   {
      Adds $j$ to $\mathcal{S}'^{(i)}$ with probability $p_1^{(i)}$\;
   }
   \Else
   {
      Adds $j$ to $\mathcal{S}'^{(i)}$ with probability $p_2^{(i)}$\;
   }
   \tcp{Memoization of permanent responses}
   \If{$j \in \mathcal{S}'^{(i)}$}
   {
      $\mathcal{Y}^{(i)} = \mathcal{Y}^{(i)} \bigcup j$\;
   }
   \Else
   {
      $\mathcal{N}^{(i)} = \mathcal{N}^{(i)} \bigcup j$\;
   }
}
\tcp{Instantaneous randomized response}
\ForEach{$j \in \bigcup_{i \in \mathcal{C}} \mathcal{S}^{(i)}$}
{
   \If{$j \in \mathcal{Y}^{(i)}$}
   {
      Adds $j$ to $\mathcal{S}''^{(i)}$ with probability $p_3^{(i)}$\;
   }
   \Else
   {
      Adds $j$ to $\mathcal{S}''^{(i)}$ with probability $p_4^{(i)}$\;
   }
}
\Return{$\mathcal{S}''^{(i)}$}
\end{algorithm}

We now present how a client can generate a perturbed index set to download her submodel and to upload the update of the submodel, with a customized local differential privacy guarantee against the cloud server. Just like the exemplary question about party membership, the sensitive question here is ``Do you have a certain index?", asked by the cloud server. The clients participating in one round of federated submodel learning make up the population to be surveyed. Thus, the clients can use randomized response to answer ``Yes" or ``No", which provides measurable deniabilities of their true answers. However, as sketched in Section~\ref{sec:design:rationale}, several practical issues need to be resolved so that randomized response can truly apply here. In what follows, we elaborate on our solution details on these issues.


As shown in Algorithm~\ref{alg:index:set:ldp}, we view the union of the chosen clients' real index sets, rather than the full index set, as the scope of the cloud server's questionnaire (Input). We reason about necessity and sufficiency as follows. Without loss of generality, we consider client $i$ and the other chosen clients $\mathcal{C}\backslash \{i\}$. If any client in $\mathcal{C}\backslash \{i\}$ wants to obtain deniability of her real indices, she requires client $i$ to join as an audience to answer the questions about her real indices, which implies that client $i$ should know the union of the other chosen clients' real index sets. By incorporating client $i$'s own real index set, the questionnaire to client $i$ should contain the union of all chosen clients' real index sets. We further apply the above reasoning to all clients $\forall i \in \mathcal{C}$ and can derive that the global questionnaire should contain the union of all chosen clients' real index sets. Next, we illustrate whether the union suffices. We consider any index outside the union. Under the conventional federated learning framework, each client will upload a zero vector to the cloud server for this index. When the cloud server learns that the sum is a zero vector, she can infer that all chosen clients do not have this index. Please see index 5 in Fig.~\ref{fig:adversary:model:full} for an intuition. From this perspective, any index outside the union does not need to be preserved in federated submodel learning as well. Nevertheless, suppose an index outside the union is introduced by chance, \eg, due to a false positive of Bloom filter when reconstructing union in Algorithm~\ref{alg:set:union}. A nice phenomenon occurs. On one hand, the privacy of federated submodel learning can be enhanced in the sense that the cloud server can only ascertain that those clients who return ``Yes" answers do not really have this index, but cannot ascertain the states of the other clients due to plausible deniability. On the other hand, those clients with ``Yes" answers need to download the row with respect to this index and further to upload zero vectors through secure aggregation, which are useless and increase their overheads.

Given the questionnaire, client $i$ basically uses two probability parameters $p_1^{(i)}, p_2^{(i)}$ in randomized response to fine-tune the tension among effectiveness, efficiency, and privacy (Lines 3--6). In particular, $p_1^{(i)}$ denotes the probability that an index in client $i$'s real index set will return a ``Yes" answer and controls the factual size of a client's user data contributed to federated submodel learning. Thus, a larger $p_1^{(i)}$ implies better effectiveness in terms of convergency rate. In addition, $p_2^{(i)}$ denotes the probability that an index outside client $i$'s real index set will return a ``Yes" answer and determines the number of redundant rows to be downloaded and the number of padded zero vectors to be uploaded through the secure aggregation protocol. Hence, given a fixed $p_1^{(i)}$, a smaller $p_2^{(i)}$ indicates higher efficiency. Furthermore, $p_1^{(i)}, p_2^{(i)}$ jointly adjust the level of local differential privacy, where a pair of closer $p_1^{(i)}, p_2^{(i)}$ provides a better privacy guarantee. We examine three typical examples: (1) The randomized response in the party membership survey takes $p_1^{(i)} = 75\%$ and $p_2^{(i)} = 25\%$ for each respondent; (2) conventional federated learning essentially leverages the full index, takes $p_1^{(i)} = 1$ and $p_2^{(i)} = 1$ for each client, and offers the best privacy and effectiveness guarantees but the worst efficiency guarantee; and (3) federated submodel learning adopts $p_1^{(i)} = 1$ and $p_2^{(i)} = 0$ for each client and provides the best effectiveness and efficiency guarantees but the worst privacy guarantee.


Considering that client $i$ can be chosen to participate in multiple communication rounds and needs to repeatedly respond to some answered indices, we extend the basic randomized response mechanism to offer a rigorous privacy guarantee, also called {\em longitudinal privacy} in the literature~\cite{proc:ccs14:rappor,jour:popets:rappor2,proc:soda19:ldp:longitudinal}. We adopt a memoization technique from RAPPOR. The core idea of RAPPOR is to play the randomized response game twice with a memoization step between. The first perturbation step, called permanent randomized response, is used to create a noisy answer, which is memoized by the client and permanently reused in place of the real answer. The second perturbation step, called instantaneous randomized response, reports on the memoized answer over time, eventually completely revealing it. In other words, the privacy level, guaranteed by the underlying memoized answer in the permanent randomized response, imposes a lower bound on the privacy level, ensured by each instantaneous/revealed response. When the memoization technique is applied to federated submodel learning, we let client $i$ maintain two index sets with ``Yes" and ``No" answers in the permanent randomized response, respectively (Input). Here, the permanent randomized response mechanism is parameterized by two probabilities $p_1^{(i)}, p_2^{(i)}$ to tune privacy and utility (Lines 3--6), as illustrated in the preceding paragraph. In addition, given that one client can be grouped with distinct clients in different communication rounds while the union of real index sets varies from one round to another, the client needs to handle new indices. As a new index comes (Line 2), client $i$ generates a permanent noisy answer for it and further updates her two memoized sets (Lines 7--10). Moreover, client $i$ obtains her final perturbed index set by performing an instantaneous randomized response over the memoized answers to the union of real index sets in the current communication round (Lines 11--16). In particular, the instantaneous randomized response is parameterized with another two probabilities $p_3^{(i)}, p_4^{(i)}$ (Lines 13 and 15), similar to $p_1^{(i)}, p_2^{(i)}$ in the permanent randomized response. Now, these four probability parameters jointly support tuning the tension among privacy, effectiveness, and efficiency. More specifically, $p_5^{(i)} = p_1^{(i)}(p_3^{(i)} - p_4^{(i)}) + p_4^{(i)}$, denoting the probability that an index in client $i$'s real index set finally returns a ``Yes" answer, and $p_6^{(i)} = p_2^{(i)}(p_3^{(i)} - p_4^{(i)}) + p_4^{(i)}$, denoting the probability that an index not in client $i$'s real index set finally returns a ``Yes" answer, now play the same roles as $p_1^{(i)}$ and $p_2^{(i)}$, respectively. Detailed derivations of $p_5^{(i)}, p_6^{(i)}$ are deferred to Section~\ref{sec:security:analysis}.


Finally, we provide some comments on the above design. First, our design is different from conventional locally differentially private schemes (\eg, randomized response and RAPPOR), which require each participating user to choose the same probability parameters (\ie, $\forall i \in \mathcal{C}, p_1^{(i)} = p_1, p_2^{(i)} = p_2, p_3^{(i)} = p_3, p_4^{(i)} = p_4$), so that true statistics (\eg, heavy hitter, histogram, and frequency) can be well estimated using collected noisy answers, particularly after additional corrections. Such a requirement/assumption is no longer needed in our design because the cloud server, as the aggregator, performs aggregate statistics based on secure aggregation rather than over the noisy answers, \eg, counting how many samples from the chosen clients involve a certain index in total (Algorithm~\ref{alg:fsl}, Line 8). Therefore, as mentioned above, different clients can customize probability parameters to tune privacy and utility. Second, our index perturbation mechanism in Algorithm~\ref{alg:index:set:ldp} needs a prerequisite that the real index set of a client does not change when she participates in different communication rounds. Considering that the real index set corresponds to the client's local data, this prerequisite can be further converted to the invariance of the client's local data. One feasible way to meet this prerequisite is to introduce the concept of ``period" into federated submodel learning, \eg, one period can be set to one month. In a concrete period, a client uses the historical data in the previous one period to participate in federated submodel learning for several communication rounds. In addition, when entering a new period, the client just restarts Algorithm~\ref{alg:index:set:ldp}. The other feasible way is to allow changes in a client's real index set from one communication round to another. This implies that the underlying binary states of some indices may change. For example, if a client's local data and thus her real index set expand incrementally, some indices, which were not in, can fall into the real index set in later rounds. Recently, Erlingsson~\et~\cite{proc:soda19:ldp:longitudinal} considered a similar setting, in particular the collection of user statistics (\eg, software adoption) for multiple times with each user changing her underlying boolean value for a limited number of times. Therefore, their design, based on binary tree and Bernoulli distribution, can be leveraged to extend Algorithm~\ref{alg:index:set:ldp}, allowing a client to change her local data and thus her real index set in different communication rounds.

\subsubsection{Private Set Union}

\begin{algorithm}[t]
\small
\caption{Private Set Union}\label{alg:set:union}
\KwIn{Client $i$'s real index set $\mathcal{S}^{(i)}$ for all $i \in \mathcal{C}$}
\KwOut{Union of real index sets $\bigcup_{i\in\mathcal{C}} \mathcal{S}^{(i)}$}
Cloud server determines the partitions of the full index set $\mathcal{S}$\;
\ForEach{{\em Client} $i \in \mathcal{C}$}
{
  Represents $\mathcal{S}^{(i)}$ as a Bloom filter $\mathbf{b}^{(i)}$\;
  Perturbs $\mathbf{b}^{(i)}$ to an integer vector $\mathbf{b}'^{(i)}$ by replacing each bit 1 in $\mathbf{b}^{(i)}$ with a random integer from $\mathbb{Z}_R$\;
  Uses a bit vector $\mathbf{a}^{(i)}$ to indicate whether there exists an element in $\mathcal{S}^{(i)}$ falling into the partitions of $\mathcal{S}$\;
  Perturbs $\mathbf{a}^{(i)}$ to an integer vector $\mathbf{a}'^{(i)}$ by replacing each bit 1 in $\mathbf{a}^{(i)}$ with a random integer from $\mathbb{Z}_R$\;
  Submits materials for securely aggregating $\mathbf{b}'^{(i)}, \mathbf{a}'^{(i)}$\;
}
Cloud server obtains $\sum_{i \in \mathcal{C}} \mathbf{b}'^{(i)}$ and $\sum_{i \in \mathcal{C}} \mathbf{a}'^{(i)}$ with the secure aggregation protocol, reconstructs the union $\bigcup_{i\in\mathcal{C}} \mathcal{S}^{(i)}$, and delivers $\bigcup_{i\in\mathcal{C}} \mathcal{S}^{(i)}$ to each live client $i \in \hat{\mathcal{C}}$.
\end{algorithm}

We introduce the last module of our design: private set union. We first briefly review related work about private set operations, with a focus on the often overlooked but significantly important private set union. Then, we outline the practical infeasibility of existing protocols when adapted to federated submodel learning. We finally present our efficient and scalable scheme.

The goal of a private set operation protocol is to allow multiple parties, where each party holds a private set, to obtain the result of an operation over all the sets, without revealing each individual private set and without introducing a trusted third party. Compared with Private Set Intersection (PSI)~\cite{proc:eurocrypt04:psi,proc:ccs13:psi,proc:eurocrypt17:psi,proc:ccs17:psi:contact:discovery,proc:ccs17:psi:facebook,jour:tops18:psi} and Private Set union Cardinality (PSC)~\cite{proc:cans12:psc,proc:acisp15:psc,proc:ccs17:psc}, which have received tremendous attention and also have seen several practical applications, such as in social networks~\cite{proc:cans09:psi:soical,proc:acsac13:psi:friend}; human genome testing~\cite{proc:ccs11:psi:dna}; location-based services~\cite{proc:ndss11:psi:location}; security incident information sharing~\cite{proc:usenixs15:psi}; online advertising~\cite{proc:usenixs15:psi}; private contact discovery~\cite{proc:ccs17:psi:contact:discovery}; and the Tor anonymity network~\cite{proc:imc18:psc:tor}, there is little work and negligible focus on Private Set Union (PSU). Nevertheless, private set union promises a wide range of applications in practice, \eg, union queries over several databases, and, more generally, integration/sharing of datasets from multiple private sources. Thus, independent of federated submodel learning, the task of designing a practical private set union protocol itself is highly desired and urgent. Existing protocols mainly come from the fields of data mining and cryptography. In the data mining field, the representative design of private set union~\cite{jour:kddexp02:datamining:psu} is based on commutative encryption and requires direct communication between any pair of two parties. Unfortunately, the design leaks the cardinality of any two-party set intersection, and the underlying commutative encryption is fragile as well. In the cryptography field, according to the representation format of a set, the protocols can be generally divided into two categories: polynomial based~\cite{proc:crypto05:song:ppso,proc:acns07:frikken:psu,proc:pkc12:psu,jour:korea13:psu,tc:iacr19:psu} and Bloom filter based~\cite{tc:eth:pso:sepia,proc:acisp17:pso,proc:smartcomp18:miyaji}. For the polynomial-based protocols, elements of a set are represented as the roots of a polynomial, and the union of two sets is converted to the multiplication of two polynomials. For the protocols based on Bloom filter, the union operation over sets is normally transformed to the element-wise OR operation over Bloom filters, as demonstrated in Section~\ref{sec:bloom:filter}, whereas the logic OR operation can be further converted to bit addition and bit multiplication. To obliviously perform addition and multiplication operations, the above two kinds of protocols mainly turn to generic secure two-party/multiparty computation (\eg, garbled circuit, homomorphic encryption, secret sharing, and oblivious transfer), or outsource secure computation to multiple noncolluding servers. Due to unaffordable computation and communication overheads, none of the existing private set union protocols have been deployed in practice. In addition to inefficiency, the basic setting of these protocols significantly differs from that of federated submodel learning, where clients cannot directly communicate with each other and should mediate through an untrusted cloud server. Additionally, the set elements here can come from a billion-scale domain, which has not been touched in previous work as of yet.

Given the infeasibility of existing protocols and the atypical setting of federated submodel learning, we present our new private set union scheme in Algorithm~\ref{alg:set:union}. First, each client represents her real index set as a Bloom filter (Line 3). The details about how to set the parameters of the Bloom filter can be found in Section~\ref{sec:bloom:filter}. Second, different from the common practice to derive the union of sets by performing bitwise OR operations over their Bloom filters, which requires both addition and multiplication operations, we let the cloud server directly sum the Bloom filters. Here, the sum operation can be conducted obliviously and efficiently under the coordination of the untrusted cloud server with secure aggregation (Line 8). In addition, the resulting Bloom filter is actually a counting Bloom filter, equivalent to constructing it from scratch by sequentially inserting each real index set. In addition to membership information, the counting Bloom filter also contains the count numbers of elements in the union of real index sets. In other words, the cloud server may learn how many clients have a certain index, which is undesired in our context. To prevent the leakage of count numbers, we let each client generate a perturbed integer vector, which replaces each bit 1 in her Bloom filter with a random integer and keeps each bit 0 unchanged (Line 4). Such a perturbation process can obscure count numbers while retaining membership information. Third, after obtaining the sum of perturbed Bloom filters, the cloud server can recover the union of real index sets by doing membership tests for the full index set. For example, to judge whether an index belongs to the union, we check the resulting integer vector at the positions of its hash values. The index is considered to be in the union only if all the values are nonzero. However, one practical issue arises: the domain of index can be huge, \eg, the full size of the goods IDs in Taobao is in the magnitude of billions. Thus, it can be prohibitively time consuming to enumerate all indices. Even worse, the direct enumeration method can also introduce a large number of false positives in the union, \ie, those indices not in the union are falsely judged to be in, which can further lead to unnecessary redundancy in the download and upload phases. To handle these problems, we further incorporate a private ``partition" union to narrow down the scope of index for union reconstruction above. We let the cloud server divide the full domain of the index into a certain number of partitions ahead of time (Line 1). A good partition scheme needs to well balance the pros in the union reconstruction phase and the cons of additional cost. Given the partitions, each client first uses a bit vector to record whether there exists an index in her real index set falling into the partitions (Line 5). Just the same as the Bloom filter to hide the concrete count numbers, the client further replaces each bit 1 with a random integer (Line 6). Then, the cloud server obtains the sum of the integer vectors using the secure aggregation protocol and reveals those partitions with nonzero integers in the corresponding positions. By simply doing membership tests for the indices falling into these partitions, the cloud server can efficiently construct the union. Last, the union is delivered to all live clients (Line 8).

\section{Security and Performance Analyses}\label{sec:security:performance:analysis}

In this section, we first analyze the privacy guarantees of our secure federated submodel learning scheme according to Definition~\ref{def:ldp} and Definition~\ref{def:client:control:privacy}, \ie, Theorem~\ref{theorem:alg:index:ldp:epsilon} and Theorem~\ref{theorem:client:control:2probabilities}. We also provide an instantiation of our scheme, where each client consistently uses the union of the chosen clients' real index sets when interacting with the cloud server, and prove that its security and privacy guarantees are as strong as conventional federated learning with secure aggregation (hereinafter also called ``Secure Federated Learning" and abbreviated as ``SFL"), \ie, Theorem~\ref{theorem:same:security}. We then show the proven security of our proposed private set union protocol, \ie, Theorem~\ref{theorem:set:union}. We finally analyze the performance of our scheme by comparing with that of secure federated learning.


\subsection{Security and Privacy Analyses}\label{sec:security:analysis}


By Definition~\ref{def:ldp}, we analyze the local differential privacy guarantee of index set perturbation in Algorithm~\ref{alg:index:set:ldp}. As stepping stones, we first analyze permanent randomized response and a one-time instantaneous randomized response in Lemma~\ref{lem:permanent:rp} and Lemma~\ref{lem:instantaneous:dp}, which impose an upper bound and a lower bound on the privacy level of Algorithm~\ref{alg:index:set:ldp}, namely Theorem~\ref{theorem:alg:index:ldp:epsilon}.

\begin{lemma}\label{lem:permanent:rp}
Permanent randomized response in Algorithm~\ref{alg:index:set:ldp} for client $i$ achieves local differential privacy at the level $\epsilon_{\infty}^{(i)} = \ln\left(\max\left(\frac{p_1^{(i)}}{p_2^{(i)}}, \frac{p_2^{(i)}}{p_1^{(i)}}, \frac{1 - p_1^{(i)}}{1 - p_2^{(i)}}, \frac{1 - p_2^{(i)}}{1 - p_1^{(i)}}\right)\right)$.
\end{lemma}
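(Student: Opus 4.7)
The plan is to apply Definition~\ref{def:ldp} directly to the permanent randomized response step (Algorithm~\ref{alg:index:set:ldp}, Lines~3--10) for a single index $j \in \bigcup_{i \in \mathcal{C}} \mathcal{S}^{(i)}$. Viewed as a mechanism, it takes as input a boolean indicating whether $j \in \mathcal{S}^{(i)}$ and outputs a boolean indicating whether $j$ was added to $\mathcal{S}'^{(i)}$, with $\Pr(\text{yes} \mid \text{in}) = p_1^{(i)}$ and $\Pr(\text{yes} \mid \text{not in}) = p_2^{(i)}$. First I would enumerate the four ratios $\Pr(M(x)=z)/\Pr(M(y)=z)$ over the two distinct input pairs $\{x,y\}=\{\text{in},\text{not in}\}$ and the two outputs $z \in \{\text{yes},\text{no}\}$; these evaluate to exactly $p_1^{(i)}/p_2^{(i)}$, $p_2^{(i)}/p_1^{(i)}$, $(1-p_1^{(i)})/(1-p_2^{(i)})$, and $(1-p_2^{(i)})/(1-p_1^{(i)})$. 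Taking the maximum and applying the natural logarithm yields the claimed $\epsilon_{\infty}^{(i)}$ and discharges the one-shot portion of the bound.

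The second, conceptually more delicate, step is justifying the $\infty$ subscript, which asserts that the bound holds regardless of how many communication rounds client~$i$ participates in. The memoization code (Lines~7--10) stores the outcome of the first permanent draw for $j$ in either $\mathcal{Y}^{(i)}$ or $\mathcal{N}^{(i)}$, and the guard on Line~2 ensures that no fresh randomness is ever drawn for an already-seen index. Hence from the cloud server's perspective the permanent noisy answer for $j$ across arbitrarily many rounds is a single random variable whose distribution is fully described by the two conditional probabilities above. Formally, I would invoke the post-processing invariance of local differential privacy: any view derived from the permanent layer is a deterministic function of the one-shot draw, so no composition across rounds occurs with respect to the true membership of $j$ in $\mathcal{S}^{(i)}$, and the one-shot bound lifts to the longitudinal bound.

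The main obstacle is exactly this longitudinal argument: a naive sequential-composition bound would inflate $\epsilon$ proportionally to the number of participations, so the proof must make the memoization short-circuit explicit rather than leave it implicit. Once that observation is made, the rest is a mechanical calculation of the four likelihood ratios followed by a $\max$. It is also worth acknowledging the two degenerate regimes, $p_1^{(i)} \in \{0,1\}$ or $p_2^{(i)} \in \{0,1\}$, in which some ratio blows up to $\infty$; these correspond precisely to the extremes of pure federated submodel learning ($p_1^{(i)}=1, p_2^{(i)}=0$, giving $\epsilon=\infty$) and conventional federated learning ($p_1^{(i)}=p_2^{(i)}=1$, giving $\epsilon=0$) flagged in Section~\ref{sec:design:index:set:perturbation}, so they can be treated as boundary cases consistent with the stated bound rather than as exceptions.
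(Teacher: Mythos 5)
Your one-shot calculation is exactly the paper's proof of Lemma~\ref{lem:permanent:rp}: the paper likewise fixes an index $j \in \bigcup_{i \in \mathcal{C}} \mathcal{S}^{(i)}$, treats the input pair as $j \in \mathcal{S}^{(i)}$ versus $j \notin \mathcal{S}^{(i)}$ and the outputs as $j \in \mathcal{Y}^{(i)}$ versus $j \in \mathcal{N}^{(i)}$, enumerates the same four likelihood ratios, and reads off $\epsilon_{\infty}^{(i)}$ from their maximum via Definition~\ref{def:ldp}. Where you diverge is in scope, not substance: the longitudinal argument you develop in your second step does not appear in the paper's proof of this lemma at all, because the paper organizes the material differently. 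There, Lemma~\ref{lem:permanent:rp} is purely the one-time analysis of the permanent layer, and the claim that memoization caps the privacy loss across arbitrarily many rounds is deferred to Theorem~\ref{theorem:alg:index:ldp:epsilon}, which sandwiches $\epsilon_k^{(i)}$ between $\epsilon_1^{(i)}$ (Lemma~\ref{lem:instantaneous:dp}) and $\epsilon_{\infty}^{(i)}$, with the upper bound justified by exactly your observation --- in the worst case, as $k \to \infty$, the attacker recovers the memoized permanent answer, and no sequential composition applies because that answer is drawn once. So your proposal effectively proves the lemma and pre-proves half of the theorem; that is a reasonable, arguably more self-contained packaging, at the cost of blurring the paper's clean division of labor. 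One small precision point: within the permanent layer alone your ``deterministic function of the one-shot draw'' phrasing is fine, but for the full Algorithm~\ref{alg:index:set:ldp} the per-round views are randomized functions of the memoized answer (fresh instantaneous coins each round); the post-processing argument still goes through because that randomness is independent of the true membership given the memoized answer, and it is worth saying so explicitly if you keep the longitudinal step inside the lemma.
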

\begin{proof}
We focus on a certain index $j \in \bigcup_{i \in \mathcal{C}} \mathcal{S}^{(i)}$. According to Definition~\ref{def:ldp}, we need to consider all possible pairs of inputs from client $i$ and all possible outputs of the permanent randomized response in Algorithm~\ref{alg:index:set:ldp}. Here, the input pair is $j$ in and not in client $i$'s real index set, namely $j \in \mathcal{S}^{(i)}$ and $j \notin \mathcal{S}^{(i)}$. In addition, the possible outputs are $j$ obtaining ``Yes" and ``No" noisy answers for memoization, namely $j \in \mathcal{Y}^{(i)}$ and $j \in \mathcal{N}^{(i)}$. We thus can compute four ratios between the conditional probabilities of a permanent output with a pair of distinct inputs: $\frac{\mathrm{Pr}\left(j \in \mathcal{Y}^{(i)} \vert j \in \mathcal{S}^{(i)} \right)}{\mathrm{Pr}\left(j \in \mathcal{Y}^{(i)} \vert j \notin \mathcal{S}^{(i)} \right)} = \frac{p_1^{(i)}}{p_2^{(i)}}$, $\frac{\mathrm{Pr}\left(j \in \mathcal{Y}^{(i)} \vert j \notin \mathcal{S}^{(i)} \right)}{\mathrm{Pr}\left(j \in \mathcal{Y}^{(i)} \vert j \in \mathcal{S}^{(i)} \right)} = \frac{p_2^{(i)}}{p_1^{(i)}}$, $\frac{\mathrm{Pr}\left(j \in \mathcal{N}^{(i)} \vert j \in \mathcal{S}^{(i)} \right)}{\mathrm{Pr}\left(j \in \mathcal{N}^{(i)} \vert j \notin \mathcal{S}^{(i)} \right)} = \frac{1 - p_1^{(i)}}{1 - p_2^{(i)}}$, and $\frac{\mathrm{Pr}\left(j \in \mathcal{N}^{(i)} \vert j \notin \mathcal{S}^{(i)} \right)}{\mathrm{Pr}\left(j \in \mathcal{N}^{(i)} \vert j \in \mathcal{S}^{(i)} \right)} = \frac{1 - p_2^{(i)}}{1 - p_1^{(i)}}$.
By Definition~\ref{def:ldp}, we can derive the level of local differential privacy $\epsilon_{\infty}^{(i)}$: $\exp\left(\epsilon_{\infty}^{(i)}\right) = \max\left(\frac{p_1^{(i)}}{p_2^{(i)}}, \frac{p_2^{(i)}}{p_1^{(i)}}, \frac{1 - p_1^{(i)}}{1 - p_2^{(i)}}, \frac{1 - p_2^{(i)}}{1 - p_1^{(i)}}\right) \Rightarrow \epsilon_{\infty}^{(i)} = \ln\left(\max\left(\frac{p_1^{(i)}}{p_2^{(i)}}, \frac{p_2^{(i)}}{p_1^{(i)}}, \frac{1 - p_1^{(i)}}{1 - p_2^{(i)}}, \frac{1 - p_2^{(i)}}{1 - p_1^{(i)}}\right)\right)$.
\end{proof}

\begin{lemma}\label{lem:instantaneous:dp}
A one-time instantaneous randomized response in Algorithm~\ref{alg:index:set:ldp} for client $i$ satisfies local differential privacy at the level $\epsilon_1^{(i)}$, where $\epsilon_1^{(i)} = \ln\left(\max\left(\frac{p_5^{(i)}}{p_6^{(i)}}, \frac{p_6^{(i)}}{p_5^{(i)}}, \frac{1 - p_5^{(i)}}{1 - p_6^{(i)}}, \frac{1 - p_6^{(i)}}{1 - p_5^{(i)}}\right)\right)$, $p_5^{(i)} = \mathrm{Pr}\left(j \in \mathcal{S}''^{(i)} \vert j \in \mathcal{S}^{(i)}\right) = p_1^{(i)}\left(p_3^{(i)} - p_4^{(i)}\right) + p_4^{(i)}$, and $p_6^{(i)} = \mathrm{Pr}\left(j \in \mathcal{S}''^{(i)} \vert j \notin \mathcal{S}^{(i)}\right) = p_2^{(i)}\left(p_3^{(i)} - p_4^{(i)}\right) + p_4^{(i)}$.
\end{lemma}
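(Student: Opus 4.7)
The approach is a direct adaptation of the argument used in Lemma~\ref{lem:permanent:rp}, with one extra layer of conditioning to account for the two-stage perturbation. The core observation is that for a fixed index $j \in \bigcup_{i \in \mathcal{C}} \mathcal{S}^{(i)}$, the instantaneous response depends on the real membership $j \in \mathcal{S}^{(i)}$ only through the memoized state $j \in \mathcal{Y}^{(i)}$ versus $j \in \mathcal{N}^{(i)}$, which acts as a binary latent variable on the path from input to output. Hence to bound the ratio of output probabilities, I can marginalize over this latent state using the law of total probability.

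First I would compute $p_5^{(i)}$ and $p_6^{(i)}$ explicitly. By Lines 3--10 of Algorithm~\ref{alg:index:set:ldp}, conditioned on $j \in \mathcal{S}^{(i)}$ we have $\Pr(j \in \mathcal{Y}^{(i)}) = p_1^{(i)}$ and $\Pr(j \in \mathcal{N}^{(i)}) = 1 - p_1^{(i)}$; conditioned on $j \notin \mathcal{S}^{(i)}$ these become $p_2^{(i)}$ and $1 - p_2^{(i)}$. Then by Lines 11--16, conditioned on $j \in \mathcal{Y}^{(i)}$ (resp.\ $j \in \mathcal{N}^{(i)}$), the index $j$ falls into $\mathcal{S}''^{(i)}$ with probability $p_3^{(i)}$ (resp.\ $p_4^{(i)}$). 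Applying the law of total probability gives
\begin{align*}
p_5^{(i)} &= p_1^{(i)} p_3^{(i)} + (1 - p_1^{(i)}) p_4^{(i)} = p_1^{(i)}\bigl(p_3^{(i)} - p_4^{(i)}\bigr) + p_4^{(i)},\\
p_6^{(i)} &= p_2^{(i)} p_3^{(i)} + (1 - p_2^{(i)}) p_4^{(i)} = p_2^{(i)}\bigl(p_3^{(i)} - p_4^{(i)}\bigr) + p_4^{(i)},
\end{align*}
which are precisely the expressions claimed in the lemma.

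Next I would instantiate Definition~\ref{def:ldp}. The input pair is $j \in \mathcal{S}^{(i)}$ versus $j \notin \mathcal{S}^{(i)}$, and the two possible outputs of one instantaneous response are $j \in \mathcal{S}''^{(i)}$ and $j \notin \mathcal{S}''^{(i)}$. Taking all four orderings of numerator and denominator gives the ratios $p_5^{(i)}/p_6^{(i)}$, $p_6^{(i)}/p_5^{(i)}$, $(1 - p_5^{(i)})/(1 - p_6^{(i)})$, and $(1 - p_6^{(i)})/(1 - p_5^{(i)})$. The worst-case privacy loss is then
\[
\epsilon_1^{(i)} = \ln\left(\max\left(\frac{p_5^{(i)}}{p_6^{(i)}}, \frac{p_6^{(i)}}{p_5^{(i)}}, \frac{1 - p_5^{(i)}}{1 - p_6^{(i)}}, \frac{1 - p_6^{(i)}}{1 - p_5^{(i)}}\right)\right),
\]
exactly as stated. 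I would close by noting that this bound is attained at one of the four output/input configurations, so it is tight.

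The proof is largely mechanical once the two-stage marginalization is written down; there is no real obstacle beyond being careful that the instantaneous step only sees the memoized state and not the real input, so that the composition is genuinely a Markov chain $\mathcal{S}^{(i)} \to \{\mathcal{Y}^{(i)}, \mathcal{N}^{(i)}\} \to \mathcal{S}''^{(i)}$ and the marginalization is valid. One subtlety worth flagging explicitly is that this lemma covers only a \emph{single} instantaneous release; the corresponding bound on a long sequence of instantaneous releases follows from Lemma~\ref{lem:permanent:rp} via the post-processing property of local differential privacy, and the overall privacy level of Algorithm~\ref{alg:index:set:ldp} is sandwiched between $\epsilon_1^{(i)}$ and $\epsilon_\infty^{(i)}$, which is the content of the forthcoming Theorem~\ref{theorem:alg:index:ldp:epsilon}.
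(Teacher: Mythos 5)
Your proposal is correct and takes essentially the same route as the paper's proof: both marginalize over the memoized state via the law of total probability, invoke the conditional independence of $j \in \mathcal{S}''^{(i)}$ from $j \in \mathcal{S}^{(i)}$ given the memoized answer (your Markov-chain observation is exactly the paper's Equation~(\ref{eq:lemma2:2})), and then read off $\epsilon_1^{(i)}$ from the four probability ratios as in Lemma~\ref{lem:permanent:rp}. Your closing remarks on tightness and on the single-release scope of the lemma are accurate but go beyond what the paper's proof states.
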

\begin{proof}
The proof is similar to that of Lemma~\ref{lem:permanent:rp}. The difference is that the possible outputs are index $j$ being in and not in the final perturbed index set, namely $j \in \mathcal{S}''^{(i)}$ and $j \notin \mathcal{S}''^{(i)}$. We first compute two conditional probabilities $p_5^{(i)}$ and $p_6^{(i)}$, denoting the probabilities of $j$ in the final perturbed index set given an index $j$ is and is not in client $i$'s real index set, respectively. In particular, we can derive $p_5^{(i)}$ through
\begin{align}
\nonumber
p_5^{(i)} &=  \mathrm{Pr}\left(j \in \mathcal{S}''^{(i)} \middle\vert j \in \mathcal{S}^{(i)} \right)\\
\nonumber
    &= \mathrm{Pr}\left(j \in \mathcal{S}''^{(i)} \middle\vert j \in \mathcal{S}^{(i)}, j \in \mathcal{Y}^{(i)}\right) \mathrm{Pr}\left(j \in \mathcal{Y}^{(i)} \middle\vert j \in \mathcal{S}^{(i)} \right)\\
    &+ \mathrm{Pr}\left(j \in \mathcal{S}''^{(i)} \middle\vert j \in \mathcal{S}^{(i)}, j \in \mathcal{N}^{(i)}\right) \mathrm{Pr}\left(j \in \mathcal{N}^{(i)} \middle\vert j \in \mathcal{S}^{(i)} \right) \label{eq:lemma2:1} \\
\nonumber
    &= \mathrm{Pr}\left(j \in \mathcal{S}''^{(i)} \middle\vert j \in \mathcal{Y}^{(i)}\right) \mathrm{Pr}\left(j \in \mathcal{Y}^{(i)} \middle\vert j \in \mathcal{S}^{(i)} \right)\\
    &+ \mathrm{Pr}\left(j \in \mathcal{S}''^{(i)} \middle\vert j \in \mathcal{N}^{(i)}\right) \mathrm{Pr}\left(j \in \mathcal{N}^{(i)} \middle\vert j \in \mathcal{S}^{(i)} \right) \label{eq:lemma2:2} \\
\nonumber
    &= p_3^{(i)} p_1^{(i)} + p_4^{(i)}\left(1 - p_1^{(i)}\right)\\
\nonumber
    &= p_1^{(i)} \left(p_3^{(i)} - p_4^{(i)}\right) + p_4^{(i)},
\end{align}
where Equation~(\ref{eq:lemma2:1}) follows from the law of total probability, and Equation~(\ref{eq:lemma2:2}) follows that $j \in \mathcal{S}''^{(i)}$ is independent of $j \in \mathcal{S}^{(i)}$ conditioned on $j \in \mathcal{Y}^{(i)}$ or $j \in \mathcal{N}^{(i)}$. In a similar way, we can get $p_6^{(i)} = \mathrm{Pr}\left(j \in \mathcal{S}''^{(i)} \middle\vert j \notin \mathcal{S}^{(i)}\right) = p_2^{(i)} \left(p_3^{(i)} - p_4^{(i)}\right) + p_4^{(i)}.$ Based on $p_5^{(i)}$ and $p_6^{(i)}$, we can still compute four ratios between the conditional probabilities of an instantaneous output given a pair of different inputs and draw the level of local differential privacy $\epsilon_1^{(i)} = \ln\left(\max\left(\frac{p_5^{(i)}}{p_6^{(i)}}, \frac{p_6^{(i)}}{p_5^{(i)}}, \frac{1 - p_5^{(i)}}{1 - p_6^{(i)}}, \frac{1 - p_6^{(i)}}{1 - p_5^{(i)}}\right)\right)$.
\end{proof}

From the above deduction, we can draw that $p_5^{(i)}$ and $p_6^{(i)}$ in the instantaneous randomized response play the same roles as $p_1^{(i)}$ and $p_2^{(i)}$ in the permanent randomized response. This intuition has been given in Section~\ref{sec:design:index:set:perturbation}, and is now formally verified here.

By combining the above two lemmas, we show the level of local differential privacy ensured by Algorithm~\ref{alg:index:set:ldp}.



\begin{theorem}\label{theorem:alg:index:ldp:epsilon}
When client $i$ is chosen to participate in an arbitrary number of communication rounds, Algorithm~\ref{alg:index:set:ldp} satisfies $\epsilon^{(i)}$-local differential privacy, where $\epsilon_1^{(i)} \leq \epsilon^{(i)} \leq \epsilon_{\infty}^{(i)}$.
\end{theorem}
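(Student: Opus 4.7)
The plan is to prove the two inequalities separately, exploiting the two-layer structure of Algorithm~\ref{alg:index:set:ldp}: a one-shot permanent randomized response whose outcome is memoized, and a fresh instantaneous randomized response applied to the memoized outcome in each communication round. Since Definition~\ref{def:ldp} concerns the membership of a single index in client $i$'s real set, I would fix an arbitrary index $j$ throughout and bound the ratio of output distributions under the two inputs $j\in\mathcal{S}^{(i)}$ and $j\notin\mathcal{S}^{(i)}$; the statement then follows uniformly over $j$.

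For the upper bound $\epsilon^{(i)}\leq\epsilon_{\infty}^{(i)}$, I would invoke the post-processing invariance of local differential privacy. By Lemma~\ref{lem:permanent:rp}, the memoized bit recording whether $j\in\mathcal{Y}^{(i)}$ already satisfies $\epsilon_{\infty}^{(i)}$-LDP with respect to the sensitive input $[j\in\mathcal{S}^{(i)}]$. Every subsequent output across arbitrarily many rounds is produced by an instantaneous randomized response that reads only the memoized bit and fresh coin flips that are independent of the real input. Hence the entire transcript released to the cloud server is a randomized function of the memoized bit, and post-processing invariance transports the $\epsilon_{\infty}^{(i)}$-LDP guarantee from the memoized bit to the full transcript.

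For the lower bound $\epsilon^{(i)}\geq\epsilon_{1}^{(i)}$, I would observe that in the very first round the adversary already sees one instantaneous response to $j$, whose LDP level by Lemma~\ref{lem:instantaneous:dp} equals exactly $\epsilon_{1}^{(i)}$ and is tight on at least one adjacent pair of inputs together with one output event. Revealing additional responses in later rounds can only preserve or enlarge the maximum log-likelihood ratio between the two input hypotheses; it cannot hide what has already been disclosed. Therefore the overall privacy parameter cannot be strictly smaller than $\epsilon_{1}^{(i)}$.

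The main obstacle I anticipate is making the post-processing step watertight when the universe of indices in the questionnaire grows across rounds, since each new round's union $\bigcup_{i\in\mathcal{C}}\mathcal{S}^{(i)}$ may introduce previously unseen indices and trigger an additional permanent draw. I would handle this by arguing index-by-index: for the fixed $j$, the permanent step is executed at most once (when $j$ first appears) and all later releases concerning $j$ are instantaneous responses applied to that single memoized bit with independent randomness, so the per-index chain of reasoning above goes through unchanged. Indices other than $j$ use randomness that is independent of $[j\in\mathcal{S}^{(i)}]$ and therefore do not inflate the ratio measured by Definition~\ref{def:ldp}, so the uniform bound over $j$ gives the claimed $\epsilon^{(i)}$.
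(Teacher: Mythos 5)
Your proposal is correct and takes essentially the same route as the paper's proof: the lower bound $\epsilon_1^{(i)} \leq \epsilon^{(i)}$ comes from the marginal of a single instantaneous response (Lemma~\ref{lem:instantaneous:dp}), and the upper bound $\epsilon^{(i)} \leq \epsilon_{\infty}^{(i)}$ comes from the permanent response (Lemma~\ref{lem:permanent:rp}). If anything, your write-up is more rigorous than the paper's, which argues informally that ``the worst case is that the attacker reveals the permanent randomized response''; your post-processing-invariance argument (the whole transcript for a fixed index $j$ is a randomized function of the single memoized bit) and your marginalization argument for the lower bound make those informal steps precise, as does your explicit per-index handling of indices that first appear in later rounds' unions.
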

\begin{proof}
We consider that client $i$ participates in $k$ communication rounds of federated submodel learning, and Algorithm~\ref{alg:index:set:ldp} guarantees $\epsilon_k^{(i)}$-local differential privacy. Thus, client $i$ should generate $k$ instantaneous randomized responses. On one hand, suppose that an attacker only leverages the $k$-th instantaneous randomized response while ignoring all previous $k - 1$ instantaneous randomized responses. This corresponds to the strongest possible local differential privacy guarantee, namely the lower bound on $\epsilon_k^{(i)}$. According to Lemma~\ref{lem:instantaneous:dp}, a one-time instantaneous randomized response guarantees $\epsilon_1^{(i)}$-local differential privacy. Therefore, $\epsilon_1^{(i)} \leq \epsilon_k^{(i)}$. On the other hand, if the attacker leverages all $k$ instantaneous randomized responses, and as $k$ approaches positive infinity, the worst case is that the attacker reveals the permanent randomized response. This corresponds to the weakest possible local differential privacy guarantee, namely the upper bound on $\epsilon_k^{(i)}$. By Lemma~\ref{lem:permanent:rp}, the permanent randomized response can guarantee $\epsilon_{\infty}^{(i)}$-local differential privacy. Hence, $\epsilon_k^{(i)} \leq \epsilon_{\infty}^{(i)}$. We complete the proof.
\end{proof}

In fact, to derive the detailed local differential privacy guarantee when client $i$ participates in $k$ communication rounds, namely $\epsilon_k^{(i)}$, we need to make an additional assumption on how effectively the attacker infers client $i$'s permanent randomized response from all $k$ instantaneous randomized responses. We defer this explorative problem as our future work. Additionally, if we set $p_1^{(i)} = p_2^{(i)} = p_3^{(i)} = p_4^{(i)} = 1$, implying $p_5^{(i)} = p_6^{(i)} = 1, \epsilon_1^{(i)} = 0, \epsilon_\infty^{(i)} = 0,$ and $\epsilon^{(i)} = 0$, this corresponds to that any index, no matter whether it is or is not in client $i$'s real index set, will receive a permanent ``Yes" answer and an instantaneous ``Yes" answer. In other words, if client $i$ takes the union of the chosen clients' real index sets as her perturbed index set, the local differential privacy guarantee of Algorithm~\ref{alg:index:set:ldp} is $0$, which is the strongest case, as in conventional federated learning.

We next analyze our secure submodel updates aggregation in Algorithm~\ref{alg:fsl}, according to Definition~\ref{def:client:control:privacy}.

\begin{theorem}\label{theorem:client:control:2probabilities}
Algorithm~\ref{alg:fsl} is a client-controllable privacy protection mechanism for submodel updates aggregation. In particular, for any index $j \in \bigcup_{i \in \mathcal{C}} \mathcal{S}^{(i)}$, we let $n_{j, 0}$ and $n_{j, 1}$ denote the numbers of live clients who do not have and have $j$ in reality, respectively. If each live client chooses the same probability parameters (\oie, $\forall i \in \hat{\mathcal{C}}, p_1^{(i)} = p_1, p_2^{(i)} = p_2, p_3^{(i)} = p_3, p_4^{(i)} = p_4, p_5 = p_1(p_3 - p_4) + p_4$, and $p_6 = p_2(p_3 - p_4) + p_4$), then Algorithm~\ref{alg:fsl} can guarantee: Event 1 happens with probability $p_{7} = p_5 (1 - p_5)^{n_{j, 1} - 1} (1 - p_6)^{n_{j, 0}}$, and Event 2 happens with probability $p_{8} = (1 - p_5)^{n_{j, 1}}(1 - (1 - p_6)^{n_{j,0}})$.
\end{theorem}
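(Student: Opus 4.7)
The plan is to analyze what the cloud server can infer from the output of secure aggregation for a fixed index $j \in \bigcup_{i \in \mathcal{C}} \mathcal{S}^{(i)}$. Two pieces of information are released: the set of participating live clients $\hat{\mathcal{C}}_j = \{i \in \hat{\mathcal{C}} : j \in \mathcal{S}''^{(i)}\}$, which is publicly visible because each client's perturbed index set $\mathcal{S}''^{(i)}$ is sent in the clear, and the aggregate update $\sum_{i \in \hat{\mathcal{C}}_j} \Delta\mathbf{w}_j^{(i)}$. Crucially, by Lemma~\ref{lem:instantaneous:dp} and the assumption that all live clients use the same probability parameters, each client $i \in \hat{\mathcal{C}}$ independently includes $j$ in $\mathcal{S}''^{(i)}$ with probability $p_5$ if $j \in \mathcal{S}^{(i)}$ and with probability $p_6$ otherwise. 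Moreover, the contribution $\Delta\mathbf{w}_j^{(i)}$ is a zero vector whenever $j \notin \mathcal{S}^{(i)}$ (since a padded zero vector is uploaded for indices outside the succinct set), and is the client's real, generically nonzero, submodel update whenever $j \in \mathcal{S}^{(i)}$. Client-controllability in the sense of Definition~\ref{def:client:control:privacy} follows immediately once I express $p_7$ and $p_8$ in terms of $p_5, p_6$, since $p_5, p_6$ are tuned by clients through $p_1, p_2, p_3, p_4$.

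For Event~1, my plan is to show that the cloud server ascertains a specific client $i^*$ with $j \in \mathcal{S}^{(i^*)}$ belongs to the real-index holders of $j$ and recovers her detailed update exactly when $i^*$ is the unique participant in the aggregation for $j$ and her contribution is nonzero. Since $j \in \mathcal{S}^{(i^*)}$ already guarantees that $i^*$'s contribution is nonzero, the condition reduces to $\hat{\mathcal{C}}_j = \{i^*\}$. By the independence of the randomized responses across clients, the probability of this configuration factors into three pieces: $i^*$ includes $j$ (probability $p_5$); each of the remaining $n_{j,1}-1$ real-$j$-holders excludes $j$ (probability $(1-p_5)^{n_{j,1}-1}$); and each of the $n_{j,0}$ non-holders also excludes $j$ (probability $(1-p_6)^{n_{j,0}}$). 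Multiplying yields $p_7$.

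For Event~2, my plan is to argue that the cloud server ascertains that some client(s) do not have $j$ in their real index set precisely when the aggregate for $j$ is the zero vector and $\hat{\mathcal{C}}_j$ is nonempty. Under the standard assumption that generically nonzero high-dimensional updates do not coincidentally cancel, a zero aggregate forces every participant to have contributed a zero vector, which means no participant truly holds $j$; the cloud server can then conclude that every client in $\hat{\mathcal{C}}_j$ lies outside the $n_{j,1}$ real-$j$-holders. This event decomposes into two independent sub-events: none of the $n_{j,1}$ real-$j$-holders includes $j$ in her perturbed set (probability $(1-p_5)^{n_{j,1}}$), and at least one of the $n_{j,0}$ non-holders includes $j$ (probability $1-(1-p_6)^{n_{j,0}}$, obtained by complementing the probability that all $n_{j,0}$ non-holders exclude $j$). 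Their product is $p_8$.

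The main obstacle I expect is not combinatorial but conceptual: justifying the reduction of Event~2 to ``aggregate $=\mathbf{0}$ and participant set nonempty.'' Specifically, I need to argue cleanly that (i) an empty $\hat{\mathcal{C}}_j$ yields no aggregation output at all and therefore leaks nothing, so it should be excluded from Event~2, and (ii) a nonzero aggregate cannot license the cloud server to ascertain any individual client's non-membership, because the zero padding by non-holders is indistinguishable from a real submodel update on the view provided by secure aggregation. I will also need to state explicitly the benign assumption that independent nonzero client updates do not fortuitously sum to zero; this is the place where a rigorous argument would otherwise have to quantify an exponentially small probability, and I will simply absorb it into a model assumption so that the ``ascertain'' qualifier in Definition~\ref{def:client:control:privacy} is faithfully met.
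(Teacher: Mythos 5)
Your proposal is correct and follows essentially the same route as the paper's proof: the same decomposition into independent per-client Bernoulli inclusion events with probabilities $p_5$ (real holders) and $p_6$ (non-holders), combined by the product rule to yield $p_7 = p_5(1-p_5)^{n_{j,1}-1}(1-p_6)^{n_{j,0}}$ for the unique-participant configuration and $p_8 = (1-p_5)^{n_{j,1}}\bigl(1-(1-p_6)^{n_{j,0}}\bigr)$ for the all-zero-aggregate configuration. Your added care --- making the no-cancellation genericity assumption explicit (the paper's ``almost ascertains'') and justifying the reduction of Event~2 to a zero aggregate with nonempty $\hat{\mathcal{C}}_j$ --- merely tightens steps the paper leaves informal rather than constituting a different argument.
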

\begin{proof}
Event 1 happens when only one of the $n_{j, 1}$ clients who have $j$ in reality submits a nonzero update while all $n_{j, 0}$ clients who do not have $j$ in reality also do not submit zero updates. By the product rule, we can compute the joint probability of Event 1 as $p_{7} = p_5 (1 - p_5)^{n_{j, 1} - 1} (1 - p_6)^{n_{j, 0}}$.

Event 2 happens when all $n_{j, 1}$ clients who have $j$ in reality do not submit nonzero updates. Under such a circumstance, if part (at least one) of $n_{j, 0}$ clients who do not have $j$ in reality submit zero updates, from the aggregate zero update, the cloud server almost ascertains that these clients do not have $j$ in reality. According to the product rule, we can compute the probability of Event 2 as $p_{8} = (1 - p_5)^{n_{j, 1}}(1 - (1 - p_6)^{n_{j, 0}})$.
\end{proof}

Theorem~\ref{theorem:client:control:2probabilities} enables participating clients to jointly adjust the privacy level of secure submodel updates aggregation by choosing different probability parameters. Details on fine-tuning are deferred to Appendix~\ref{app:tune:submodel:aggregation}. Additionally, we still examine the case that each client uses the union of the chosen clients' real index sets to upload the submodel update by setting $p_1 = p_2 = p_3 = p_4 = 1$, implying $p_5 = p_6 = 1$ and $p_{7} = p_{8} = 0$. This is the strongest possible client-controllable privacy for secure submodel updates aggregation, as in secure federated learning. Combining with the local differential privacy guarantee, we can see that secure federated submodel learning with the setting $p_1 = p_2 = p_3 = p_4 = 1$ holds the same security and privacy levels as secure federated learning under both Definition~\ref{def:ldp} and Definition~\ref{def:client:control:privacy}. We further generalize this observation in Theorem~\ref{theorem:same:security}, which is free of any security or privacy definition.


\begin{theorem}\label{theorem:same:security}
If the probability parameters $p_1^{(i)}, p_2^{(i)}, p_3^{(i)}, p_4^{(i)}$ all take 1s for each chosen client $i \in \mathcal{C}$, the security and privacy of secure federated submodel learning scheme in Algorithm~\ref{alg:fsl} are as strong as secure federated learning.
\end{theorem}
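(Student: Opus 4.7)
The plan is to show that when $p_1^{(i)} = p_2^{(i)} = p_3^{(i)} = p_4^{(i)} = 1$ for every chosen client, the full transcript observed by the honest-but-curious cloud server (and symmetrically by any other participant) under SFSL can be efficiently simulated from the transcript under SFL on the same chosen clients. Any adversary succeeding against SFSL would therefore also succeed against SFL, which is the formal meaning of ``as strong as''. I would carry this out as a reduction that matches the two protocols phase by phase: the private set union step, the submodel download, the local training, and the secure aggregation of submodel updates.

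First, I would unwind Algorithm~\ref{alg:index:set:ldp} under the stipulated parameters. For any index $j$ iterated in the permanent step, $j$ is added to $\mathcal{S}'^{(i)}$ with probability $1$ irrespective of whether $j \in \mathcal{S}^{(i)}$, and is therefore memoized into $\mathcal{Y}^{(i)}$; in the instantaneous step, every element of $\mathcal{Y}^{(i)}$ is added to $\mathcal{S}''^{(i)}$ with probability $1$. Consequently $\mathcal{S}''^{(i)} = \bigcup_{i \in \mathcal{C}} \mathcal{S}^{(i)}$ deterministically for every chosen client, regardless of how many rounds client $i$ has previously participated in. Next, by the construction of $\Delta\mathbf{W}_{\mathcal{S}''^{(i)}}$ in Algorithm~\ref{alg:fsl} (Lines~14--18), client $i$ pads all-zero rows on every index in $\mathcal{S}''^{(i)} \setminus \mathcal{S}^{(i)}$ to the true succinct update $\Delta\mathbf{W}_{\mathcal{S}^{(i)} \bigcap \mathcal{S}''^{(i)}} = \Delta\mathbf{W}_{\mathcal{S}^{(i)}}$. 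This matrix is precisely the restriction to the union of the full-model update that the same client would have produced under SFL, since in SFL those extra rows outside $\mathcal{S}^{(i)}$ receive zero gradients and therefore zero updates as well. Invoking the security guarantee of the secure aggregation protocol~\cite{proc:ccs17:secure:agg}, the aggregation transcripts on every index inside the union coincide in distribution with those of SFL, and the rows of $\mathbf{W}$ outside the union are updated to zero under both schemes.

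The main obstacle is the additional artefact that SFSL emits but SFL does not, namely the output of the private set union protocol in Algorithm~\ref{alg:set:union}. I would dispatch this by invoking Theorem~\ref{theorem:set:union}: PSU reveals only $\bigcup_{i \in \mathcal{C}} \mathcal{S}^{(i)}$ and conceals every individual $\mathcal{S}^{(i)}$. That very union is already derivable from the SFL transcript, because the SFL aggregate is all-zero on precisely those rows outside the union (modulo the negligible-probability event that nonzero real-valued updates spontaneously cancel to zero on some in-union row, an event one conventionally disregards). Hence the PSU output can be simulated from the SFL transcript alone, so the composite SFSL transcript is simulatable from the SFL transcript and no additional information leaks. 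Combining this simulation with the per-index equivalence of secure-aggregation observations established in the previous paragraph yields that SFSL under the given parameter setting offers at least the security and privacy of SFL. As specializations, one recovers $\epsilon^{(i)} = 0$ in Theorem~\ref{theorem:alg:index:ldp:epsilon} and $p_7 = p_8 = 0$ in Theorem~\ref{theorem:client:control:2probabilities}, matching the extremal guarantees of SFL under Definitions~\ref{def:ldp} and \ref{def:client:control:privacy}, respectively.
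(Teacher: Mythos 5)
Your proposal is correct, and it reaches the conclusion by a genuinely different route than the paper. The paper's own proof is an elementary two-case comparison over each index $j \in \mathcal{S}$: for $j$ inside the union $\bigcup_{i\in\mathcal{C}}\mathcal{S}^{(i)}$, the two protocols' behaviors literally coincide (every client downloads row $j$ and securely uploads its update, a zero vector when $j \notin \mathcal{S}^{(i)}$); for $j$ outside the union, the adversary's knowledge is shown to be identical, since in SFL the all-zero aggregate already tells the cloud server that no client has $j$ and every individual update is zero, which is exactly what SFSL discloses by omission. You instead cast ``as strong as'' formally as transcript simulatability and run a phase-by-phase reduction: you first verify (as the paper does implicitly) that the all-ones parameters collapse Algorithm~\ref{alg:index:set:ldp} to $\mathcal{S}''^{(i)} = \bigcup_{i\in\mathcal{C}}\mathcal{S}^{(i)}$ deterministically, then match the zero-padded submodel update against the SFL full-model update restricted to the union, and---crucially---you explicitly treat the PSU output as an extra artefact of SFSL and discharge it via Theorem~\ref{theorem:set:union} together with the observation that the union is already derivable from the SFL aggregate. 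The paper never mentions PSU in this proof; its Case~2 reasoning subsumes the union's derivability from SFL but leaves the composition implicit, whereas your simulation framing makes the composition explicit and also surfaces the negligible cancellation event (nonzero updates summing to zero on an in-union row) that the paper only acknowledges elsewhere, in the discussion following Definition~\ref{def:client:control:privacy}. What each approach buys: the paper's case analysis is shorter and matches the informal statement of the theorem, while your reduction gives the claim a precise cryptographic meaning, cleanly handles the asymmetry that SFSL emits the union through a separate subprotocol, and would generalize more readily if additional phases were added to the scheme. One minor point at the same level of informality in both arguments: neither you nor the paper addresses the aggregated count vectors, whose SFL analogue (a single training-set size per client) is not literally the restriction of anything to the union; a fully rigorous simulation would need a word on why the count-vector aggregate leaks nothing beyond what SFL's weighted averaging already reveals.
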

\begin{proof}
We consider any index $j$ from the full index set $\mathcal{S}$ (\ie, $j \in \mathcal{S}$), and prove in two complementary cases as follows.

Case 1 ($j \in \bigcup_{i \in\mathcal{C}} \mathcal{S}^{(i)}$): In both secure federated submodel learning and secure federated learning, each client $i \in \mathcal{C}$ will download $j$-th row of the full model and then upload the update of this row to the cloud server through secure aggregation. Specifically, if $j$ is not in a client's real index set, then this client will upload a zero vector. The whole processes of two schemes are consistent, implying the same security and privacy guarantees.

Case 2 ($j \notin \bigcup_{i \in\mathcal{C}} \mathcal{S}^{(i)}$): In secure federated submodel learning, each client will not download $j$-th row and also will not upload the zero vector. Thus, the adversary knows that each client doesn't not have index $j$ (\ie, $\forall i \in \mathcal{C}, j \notin \mathcal{S}^{(i)}$), and each client's update is a zero vector. In contrast, in secure federated learning, each client will download $j$-th row and update a zero vector as her update. From the result that the aggregate update is a zero vector, the adversary still ascertains that each client does not have $j$ and her update is a zero vector, which are the same as in secure federated submodel learning.

By summarizing two cases, we complete the proof.
\end{proof}


We finally analyze the security of private set union. As a springboard, we give Lemma~\ref{lem:sum:random}, which states that the modular addition of one or more random integers from $\mathbb{Z}_R = \{0, 1, \ldots, R - 1\}$ is still uniformly random in $\mathbb{Z}_R$. We note that the elementary operation in secure aggregation~\cite{proc:ccs17:secure:agg} is modular addition with a large modulus rather than original addition, which is consistent with Lemma~\ref{lem:sum:random}. In addition, in the field of number theory, the set of integers $\mathbb{Z}_R = \{0, 1, 2, \ldots, R - 1\}$ is called the least residue system modulo $R$, or the ring of the integers modulo $R$. Moreover, the set $\mathbb{Z}_R$ together with the operation of modular addition form a finite cyclic group.

\begin{lemma}\label{lem:sum:random}
For any nonempty set $\mathcal{C}_1 \neq \emptyset$ and for all $i \in \mathcal{C}_1$, $r_i$ is randomly and independently chosen from $\mathbb{Z}_R = \{0, 1, \ldots, R - 1\}$, denoted as $r_i \in_R \mathbb{Z}_R$, then $\sum_{i \in \mathcal{C}_1} r_i \mod R$ is still uniformly random in $\mathbb{Z}_R$.
\end{lemma}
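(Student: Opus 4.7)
The plan is to prove the lemma by induction on the cardinality $n_1 = |\mathcal{C}_1|$, hinging on the fact that the uniform distribution on the cyclic group $(\mathbb{Z}_R, +)$ is translation-invariant, so convolving it with any independent distribution leaves it unchanged.

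For the base case $n_1 = 1$, the statement is immediate: the sum is simply $r_i \mod R = r_i$, which is uniform on $\mathbb{Z}_R$ by hypothesis. For the inductive step, suppose the claim holds for any set of cardinality $n_1$, and let $\mathcal{C}_1$ have cardinality $n_1 + 1$. Pick an arbitrary $i^{*} \in \mathcal{C}_1$ and write $S = \sum_{i \in \mathcal{C}_1 \setminus \{i^{*}\}} r_i \mod R$. By the inductive hypothesis, $S$ is uniform on $\mathbb{Z}_R$, and by the independence of $\{r_i\}_{i \in \mathcal{C}_1}$, the random variable $S$ is independent of $r_{i^{*}}$. It therefore suffices to establish the following key step: if $X$ is uniform on $\mathbb{Z}_R$ and $Y$ is any random variable on $\mathbb{Z}_R$ independent of $X$, then $X + Y \mod R$ is uniform on $\mathbb{Z}_R$.

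The key step follows by a direct computation. For any $z \in \mathbb{Z}_R$,
\begin{align*}
\Pr\left[(X + Y) \bmod R = z\right]
&= \sum_{y \in \mathbb{Z}_R} \Pr[Y = y] \cdot \Pr\left[X = (z - y) \bmod R\right] \\
&= \sum_{y \in \mathbb{Z}_R} \Pr[Y = y] \cdot \frac{1}{R} = \frac{1}{R},
\end{align*}
where the first equality uses independence and the law of total probability, and the second uses uniformity of $X$ together with the fact that the map $x \mapsto (z - x) \bmod R$ is a bijection on $\mathbb{Z}_R$. Applying this with $X = r_{i^{*}}$ and $Y = S$ finishes the inductive step.

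There is no serious obstacle here; the only point requiring a modicum of care is to invoke independence correctly in the inductive step, namely to observe that $r_{i^{*}}$ is independent of $S$ because $S$ is a measurable function of the remaining $r_i$'s, which are collectively independent of $r_{i^{*}}$. Once this is noted, the proof is a two-line induction plus the one-line convolution identity above.
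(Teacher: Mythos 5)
Your proposal is correct and follows essentially the same route as the paper: induction on $\vert\mathcal{C}_1\vert$ with a convolution computation over $\mathbb{Z}_R$ at the inductive step, including the same (necessary) observation that the partial sum is independent of the newly added term. The only difference is cosmetic --- your key step assumes uniformity of just one summand and sums $\Pr[Y=y]\cdot\frac{1}{R}$ over $y$, whereas the paper invokes uniformity of both ($\frac{1}{R}\cdot\frac{1}{R}$ summed over $R$ terms); your version is marginally more general but the argument is the same.
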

\begin{proof}
We prove by induction on the cardinality of $\mathcal{C}_1$, denoted as $\vert \mathcal{C}_1\vert$, where $\vert \mathcal{C}_1 \vert \geq 1$.

The base case is to show that the statement holds for $\vert \mathcal{C}_1 \vert = 1$. We let $\{r_a\}$ denote the element, where $r_a \in_R \mathbb{Z}_R$. Thus, it is trivial that $\sum_{i \in \mathcal{C}_1} r_i \mod R = r_a \in_R \mathbb{Z}_R$.

The inductive step is to prove that if the statement for any nonempty $\bar{\mathcal{C}}_1 \subset \mathcal{C}_1$ holds, then the statement for $\bar{\mathcal{C}}_1 \bigcup \{b\}$ where $b \notin \bar{{\mathcal{C}}}_1, b \in \mathcal{C}_1$ still holds. We let $r_a$ denote $\sum_{i \in \bar{\mathcal{C}}_1} r_i \mod R$. Hence, it suffices to show that $r_a \in_R \mathbb{Z}_R, r_b \in_R \mathbb{Z}_R \Rightarrow (r_a + r_b) \mod R \in_R \mathbb{Z}_R$. We prove this statement by showing $\mathrm{Pr}((r_a + r_b) = r \mod R ) = 1/R$ for any $r \in \mathbb{Z}_R$. The detailed deduction is shown as follows:
\begin{align}
\nonumber
  &\mathrm{Pr}\left( \left(r_a + r_b\right) = r \mod R  \right)\\
\nonumber
=\ &\sum_{k = 0}^{R - 1} \mathrm{Pr}\left(r_a = k, r_b = r - k \mod R\right)\\
=\ &\sum_{k = 0}^{R - 1} \mathrm{Pr}\left(r_a = k\right)\mathrm{Pr}\left(r_b = r - k \mod R\right)\label{eq:lem:uniform:random:1}\\
\nonumber
=\ &\sum_{k = 0}^{R - 1} \frac{1}{R}\frac{1}{R}\\
\nonumber
=\ &\frac{1}{R},
\end{align}
where Equation~(\ref{eq:lem:uniform:random:1}) follows that $r_a, r_b$ are independent.

By mathematical induction, we complete the proof.
\end{proof}

\begin{theorem}\label{theorem:set:union}
The private set union protocol in Algorithm~\ref{alg:set:union} is proven secure in the sense that only the union of the chosen clients' real index sets is revealed.
\end{theorem}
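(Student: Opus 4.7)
The plan is to argue security via a standard simulation-style reduction: show that the cloud server's view can be reproduced given only $\bigcup_{i\in\mathcal{C}}\mathcal{S}^{(i)}$ (plus the public parameters of the Bloom filter, the partition layout, and the modulus $R$). Since the secure aggregation protocol of \cite{proc:ccs17:secure:agg} already guarantees that the cloud server learns nothing beyond the aggregated vectors $\sum_{i\in\mathcal{C}}\mathbf{b}'^{(i)}$ and $\sum_{i\in\mathcal{C}}\mathbf{a}'^{(i)}$, I would first invoke this property to reduce the analysis to the information content of these two aggregates alone. Everything else the cloud server does in Algorithm~\ref{alg:set:union}, including membership tests against the full index set restricted to active partitions, is a deterministic post-processing of these two sums, so it introduces no additional leakage.

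Next, I would examine the aggregate perturbed Bloom filter $\sum_{i\in\mathcal{C}}\mathbf{b}'^{(i)} \bmod R$ position by position. Partition the $\beta$ positions into two classes: those at which no client's Bloom filter has a bit~1 (i.e., positions not hashed to by any element of $\bigcup_{i\in\mathcal{C}}\mathcal{S}^{(i)}$) and those at which at least one client contributes a uniformly random integer in $\mathbb{Z}_R$. Positions in the first class are deterministically zero. For positions in the second class, I would apply Lemma~\ref{lem:sum:random}: since at least one summand is uniform on $\mathbb{Z}_R$ and chosen independently of all others, the modular sum is itself uniform on $\mathbb{Z}_R$ and therefore statistically independent of the count of contributing clients, of their identities, and of the specific nonzero values they each chose. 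Hence the aggregate vector at each position is distributed as either $0$ (when unhit) or $\mathrm{Uniform}(\mathbb{Z}_R)$ (when hit), and the entire aggregate is determined, up to this independent uniform padding, by the set of hit positions. The identical argument applies to $\sum_{i\in\mathcal{C}}\mathbf{a}'^{(i)}$ and its partition indicators.

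I would then build the simulator: given only $\bigcup_{i\in\mathcal{C}}\mathcal{S}^{(i)}$, compute the Bloom filter and partition indicator of the union, mark the hit positions, fill those positions with fresh independent samples from $\mathbb{Z}_R$, and leave the rest at $0$. By the previous paragraph, the simulator's output is distributed identically to the real aggregates observed by the cloud server. Composing with the secure aggregation guarantee yields that the whole view is simulatable from the union alone, which is precisely the statement of the theorem.

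The main obstacle I anticipate is handling the corner cases cleanly. First, the positions ``hit by at least one client'' are exactly the positions of $1$s in the Bloom filter of the union, so no extra membership information beyond the union leaks at this layer, but one must be careful that this includes false-positive positions inherent to the Bloom filter representation itself — these are a property of the union, not extra leakage. Second, Lemma~\ref{lem:sum:random} requires that at least one uniform summand be included, so the argument has to be stated conditionally on a position being hit; unhit positions are handled by the trivial zero case. Third, one should note that a malicious client could choose nonuniform $r_i$ and thereby bias a hit position, but in the honest-but-curious model assumed in Section~\ref{sec:adversary:model} this is ruled out, so invoking that threat model explicitly keeps the argument tight.
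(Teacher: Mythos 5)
Your proposal is correct and follows essentially the same route as the paper's proof: you invoke the security of the secure aggregation protocol to reduce the analysis to the revealed aggregates, then argue position by position via Lemma~\ref{lem:sum:random} (deterministically zero where no client's Bloom filter is set, uniform on $\mathbb{Z}_R$ where at least one is), which is exactly the paper's two-case, element-wise argument. Your explicit simulator is merely a more formal packaging of the paper's claim that the adversary sees only independent uniform integers at hit positions, and it absorbs the paper's side remarks (the hidden count numbers and the negligible-probability false negative when a uniform sum happens to be $0 \bmod R$) as immediate consequences.
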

\begin{proof}
We recall that in Algorithm~\ref{alg:set:union}, a client first represents her real index set as a Bloom filter $\mathbf{b}^{(i)}$, then replaces bit 1s with random integers, denoted as $\mathbf{b}'^{(i)}$, and finally executes secure aggregation with other clients under the coordination of the cloud server. Additionally, just in the same manner as computing the union, the client joins in the survey of whether there exists an index in her real index set falling into predivided partitions. Hence, for the sake of conciseness, we here only elaborate on secure union computation.

First, according to the security analysis in~\cite{proc:ccs17:secure:agg}, the secure aggregation protocol is proven secure in both honest-but-curious and active adversary settings, where the adversaries can be the cloud server and participating clients. In more detail, from security and robustness, the secure aggregation protocol can guarantee that nothing but the aggregate result is revealed to the cloud server and all participating clients  even if part of clients may drop out during the aggregation process. When the secure aggregation guarantee applies to our context, only $\sum_{i \in \mathcal{C}} \mathbf{b}'^{(i)}$ is revealed, while any individual $\mathbf{b}'^{(i)}$ for $i \in \mathcal{C}$ is concealed from both the cloud server and the other chosen clients $\mathcal{C}\backslash \{i\}$. Given that $\mathbf{b}'^{(i)}$ is a postprocessing of $\mathbf{b}^{(i)}$, the underlying Bloom filter $\mathbf{b}^{(i)}$ and thus each client $i$'s real index set $\mathcal{S}^{(i)}$ are obscured.

Second, we prove that the revealed sum $\sum_{i \in \mathcal{C}} \mathbf{b}'^{(i)}$ contains no information other than the union $\bigvee_{i \in \mathcal{C}} \mathbf{b}^{(i)}$ from the view of adversary with any prior knowledge. Considering both $\bigvee_{i \in \mathcal{C}} \mathbf{b}^{(i)}$ and $\sum_{i \in \mathcal{C}} \mathbf{b}'^{(i)}$ are element-wise operation, we thus just need to focus on one-dimensional cases of $\mathbf{b}^{(i)}$ and $\mathbf{b}'^{(i)}$, \ie,  $\mathbf{b}^{(i)}$ degenerates to one bit $b^{(i)} \in \{0, 1\}$ and $\mathbf{b}'^{(i)}$ degenerates to one random integer $b'^{(i)} \in \mathbb{Z}_R$. We now consider two complementary cases of $\{b^{(i)} | i \in \mathcal{C}\}$ as follows.

Case 1 ($\forall i \in \mathcal{C}, b^{(i)} = 0$): The union $\bigvee_{i \in \mathcal{C}} b^{(i)} = 0$ is the same as the $\sum_{i \in \mathcal{C}} b'^{(i)} = 0$. This implies nothing but the union is leaked from the sum in Case 1.

Case 2 ($\exists i \in \mathcal{C}, b^{(i)} = 1$): The union is $\bigvee_{i \in \mathcal{C}} b^{(i)} = 1$. We compute the sum $\sum_{i \in \mathcal{C}} b'^{(i)}$ by splitting $\mathcal{C}$ into two parts: $\mathcal{C}_0 = \{i \vert i \in \mathcal{C}, b^{(i)} = 0\}$ and $\mathcal{C}_1 = \{i \vert i \in \mathcal{C}, b^{(i)}=1\}$. Then, we can derive
\begin{align}
\nonumber
\sum_{i \in \mathcal{C}} b'^{(i)} &= \sum_{i \in \mathcal{C}_0} b'^{(i)} + \sum_{i \in \mathcal{C}_1} b'^{(i)}\\
\nonumber
                                  &= 0 + \sum_{i \in \mathcal{C}_1} b'^{(i)}\\
                                  &\in_R \mathbb{Z}_R.\label{eq:theorem:union:1}
\end{align}
Here, Equation~(\ref{eq:theorem:union:1}) holds as follows. According to the antecedent $\exists i \in \mathcal{C}, b^{(i)} = 1$, we have $\mathcal{C}_1 \neq \emptyset$. Additionally, in Algorithm~\ref{alg:set:union} (Line 4), if $b^{(i)} = 1$, we have $b'^{(i)} \in_R \mathbb{Z}_R$. Now, by using Lemma~\ref{lem:sum:random}, we have $\sum_{i \in \mathcal{C}_1} b'^{(i)} \in_R \mathbb{Z}_R$. This indicates that the sum is just a uniformly random integer to the adversary's view. Furthermore, in the context of union, (1) this random integer takes positive values with probability $1 - 1/R$ and is further decoded as element ``1"; (2) the random integer takes value 0 with a negligible probability $1/R$ and further is wrongly\footnote{Even if each bit ``1" is replaced with a positive rather than non-negative integer, the sum under modular addition can still take value 0 with a negligible probability. However, the sum can be no longer uniformly random.} decoded as element ``0", \ie, a false negative happens with a negligible probability $1/R$. Hence, the sum reveals nothing but the union in Case 2.

By incorporating no leakage of any individual real index set and no leakage but the union from the aggregate result, we complete the proof.
\end{proof}

We give more interpretations about the word ``only" in Theorem~\ref{theorem:set:union}. With no exception, the count number of each element in the union (\ie, the cardinality of $\mathcal{C}_1$ in the proof) is also hidden. The reason is that for any non-empty $\mathcal{C}_1$, the sum is uniformly random (\ie, Equation~(\ref{eq:theorem:union:1}) holds). This further indicates that from the sum, all possible $\mathcal{C}_1$'s are computationally indistinguishable to the adversary. In other words, the adversary learns the exact cardinality with probability $1/|\mathcal{C}|$, which is the same as the probability of a random guess among all possible cardinalities $\{1, 2, \ldots, |\mathcal{C}|\}$.

\subsection{Performance Analysis and Comparison}\label{sec:performance:analysis}

\begin{table}[!t]
\caption{Complexities of Secure Federated Learning (SFL) and Secure Federated Submodel Learning (SFSL) at the same security and privacy levels as well as Private Set Union (PSU) in SFSL.}
\label{tab:fsl:fl:overhead}
\centering
\resizebox{\columnwidth}{!}{
\begin{tabular}[t]{ll|ccc}
\toprule
                        &     & Communication  & Computation    & Storage \tabularnewline
\midrule\midrule
\multirow{3}{*}{Client} & SFL  & $O(n + md)$    & $O(n^2 + nmd)$ & $O(n + md)$      \tabularnewline
                        & SFSL & $O(nsd)$       & $O(n^2sd)$     & $O(nsd)$         \tabularnewline
                        & PSU  & $O(ns)$        & $O(n^2s)$      & $O(ns)$          \tabularnewline

\midrule
\multirow{3}{*}{Server} & SFL  & $O(n^2 + nmd)$ & $O(n^2 md)$    & $O(n^2 + md)$    \tabularnewline
                        & SFSL & $O(n^2sd)$     & $O(n^3sd)$     & $O(n^2 + nsd)$   \tabularnewline
                        & PSU  & $O(n^2s)$      & $O(n^3s)$      & $O(n^2 + ns)$    \tabularnewline
\bottomrule
\multicolumn{5}{l}{*$\vert\bigcup_{i \in \mathcal{C}} \mathcal{S}^{(i)}\vert \ll \vert\mathcal{S}\vert \Rightarrow ns \ll m$.}
\end{tabular}
}
\end{table}

We now analyze the performance of our proposed secure federated submodel learning scheme. We first analyze the communication, computation, and storage (including both memory and disk loads) complexities of the client and the cloud server. Then, we introduce secure federated learning as a benchmark for comparison. For clarity, Table~\ref{tab:fsl:fl:overhead} summarizes the complexities of two secure schemes at the same levels of security and privacy along with the complexities of our proposed private set union protocol.

\subsubsection{Performance of Secure Federated Submodel Learning}


We focus on a certain communication round. For a concrete phase within the round, \eg, private set union or secure submodel updates and count vectors aggregations, we consider that all $n$ chosen clients are alive at the beginning, but may drop out during the process, which imposes upper bounds on the overheads of the phase. In addition, for feasibility and clarity in analysis, we let each client choose the same probability parameters $p_1, p_2, p_3, p_4$, resulting in the same $p_5, p_6$. Moreover, we assume that the expected cardinality of each client's real index set is $s$, which indicates that the expected cardinality of the union of $n$ chosen clients' real index sets $\bigcup_{i \in \mathcal{C}}\mathcal{S}^{(i)}$ is upper bounded by $ns$. Here, $ns$ is much less than the cardinality of the full index set $m$. Furthermore, the expected cardinality of each client's perturbed index set $\mathcal{S}''^{(i)}$ is upper bounded by $sp_5 + (n - 1)s p_6$, and the expected cardinality of each client's succinct index set $\mathcal{S}^{(i)}\bigcap\mathcal{S}''^{(i)}$ is $sp_5$. In what follows, we first present the detailed complexity formulas containing the probability parameters $p_5, p_6$ and then instantiate with $p_5 = p_6 = 1$. This corresponds to that each client uses the union of all chosen clients' real index sets to interact with the cloud server. As demonstrated in Theorem~\ref{theorem:same:security}, this case can provide the same levels of privacy and security as secure federated learning. Further, given $0 \leq p_5, p_6 \leq 1$ while the complexities are non-decreasing with $p_5, p_6$, this case still upper bounds the complexities of our scheme.

First regards the communication complexities of the client and the cloud server. Each client's communication overhead can be broken up as: (1) Participating in the private set union protocol, where two vectors need to be securely aggregated, and the final union result needs to be downloaded with size $O(ns)$. The first vector is the perturbed Bloom filter with size $\beta$, and the second vector is the perturbed indicator vector with a preset constant size. According to Equation~(\ref{eq:bloom:filter:optimal:length}), the optimal length of Bloom filter $\beta$ is proportional to the cardinality of the set to be filtered $\phi$, here the cardinality of the union of $n$ chosen clients' real index sets, which implies that $\beta \propto \phi = O(ns)$. Therefore, the communication complexity of each client in private set union is $O(n + ns + ns) = O(ns)$, where $O(n + ns)$ corresponds to the cost of securely aggregating the perturbed Bloom filters and is obtained by letting the size of vector $l$ in Table~\ref{tab:secure:aggregation:overhead} take $O(ns)$; (2) Downloading the training hyperparameters as well as the submodel with the perturbed index set $\mathcal{S}''^{(i)}$, namely a $(sp_5 + (n - 1)s p_6) \times d$ matrix; (3) Uploading the weighted submodel update and corresponding count vector with respect the perturbed index set through secure aggregation. In particular, the total size of vector to be securely aggregated with at most $n - 1$ other clients is $(sp_5 + (n - 1)s p_6)(d + 1)$. According to Table~\ref{tab:secure:aggregation:overhead}, the communication complexity of this part is $O(n + (sp_5 + (n - 1)s p_6)(d + 1))$. In summary, the total communication complexity of each client is $O(ns) + O((sp_5 + (n - 1)s p_6)d) + O(n + (sp_5 + (n - 1)s p_6)(d + 1)) = O(ns + (sp_5 + (n - 1)s p_6) (2d + 1))$. If the probability parameters $p_5, p_6$ both take the value $1$, the communication complexity of each client is $O(nsd)$. Correspondingly, the cloud server's communication cost can be broken up into: (1) Working as a mediation in the private set union protocol, and delivering the union result to clients with $O(n^2s)$ communication cost. Hence, the communication complexity of the cloud server in private set union is $O(n^2 + n^2s + n^2s) = O(n^2s)$, where $O(n^2 + n^2s)$ is related to secure aggregation; (2) Returning training hyperparameters and requested submodels to clients with $O(n + n(sp_5 + (n - 1)s p_6)d)$ communication cost; (3) Working as a mediation in the secure aggregations of weighted submodel updates and count vectors. Its communication complexity is $O(n^2 + n(sp_5 + (n - 1)s p_6)(d + 1))$. Overall, the cloud server's communication complexity is $O(n^2s) + O(n + n(sp_5 + (n - 1)s p_6)d) + O(n^2 + n(sp_5 + (n - 1)s p_6)(d + 1)) = O(n^2s + n(sp_5 + (n - 1)s p_6)(2d + 1))$. We still examine that when $p_5 = p_6 = 1$, the communication complexity of the cloud server is $O(n^2sd)$.

Second regards time complexities. Despite of the local training phase, the computation cost of each client is dominated by: (1) Perturbing the real index set, which costs $O(ns)$ time. Here, one set lookup operation can be implemented with $O(1)$ complexity; (2) Participating in the secure aggregation based stages, including private set union and secure weighted submodel updates and count vectors aggregations, which consume $O(n^2 + n^2s) = O(n^2s)$ and $O(n^2 + n(sp_5 + (n - 1)s p_6)(d + 1))$ time, respectively. Thus, the overall time complexity of each client is $O(ns) + O(n^2s) + O(n^2 + n(sp_5 + (n - 1)s p_6)(d + 1)) = O(n^2s + n(sp_5 + (n - 1)s p_6)(d + 1))$. When $p_5 = p_6 = 1$, it turns to $O(n^2sd)$. Correspondingly, the computation overhead of the cloud server is dominated by mediating private set union and secure aggregations of weighted submodel updates and count vectors, which consume $O(n^2 ns) = O(n^3s)$ and $O(n^2 (sp_5 + (n - 1)s p_6)(d + 1))$ time, respectively, and $O(n^3s + n^2 (sp_5 + (n - 1)s p_6)(d + 1))$ time in total. When $p_5 = p_6 = 1$, the cloud server's time complexity is $O(n^3sd)$.

Third regards storage complexities. Each client's storage overhead comes from: (1) Storing the materials in private set union and in secure aggregations of weighted submodel updates and count vectors, which occupy $O(n + ns) = O(ns)$ and $O(n + (sp_5 + (n - 1)s p_6)(d + 1))$ space, respectively; (2) Storing her permanent answers, which occupies $O(ns)$ space. Thus, the storage overhead of each client is $O(ns) + O(n + (sp_5 + (n - 1)s p_6)(d + 1)) + O(ns) = O(ns + (sp_5 + (n - 1)s p_6)(d + 1))$ in total. When $p_5 = p_6 = 1$, the client does not need to store the permanent answers any more, and her storage overhead is $O(nsd)$. Correspondingly, the cloud server's storage overhead can be broken up into: (1) Storing the materials in private set union and in secure weighted submodel updates and count vectors aggregations, which occupy $O(n^2 + ns)$ and $O(n^2 + (sp_5 + (n - 1)s p_6)(d + 1))$ space, respectively; (2) Storing $n$ chosen clients' perturbed index sets, which occupies $O(n(sp_5 + (n - 1)s p_6))$ space. In summary, the overall storage overhead of the cloud server is $O(n^2 + ns) + O(n^2 + (sp_5 + (n - 1)s p_6)(d + 1)) + O(n(sp_5 + (n - 1)s p_6)) = O(n^2 + ns + (sp_5 + (n - 1)s p_6)(n + d + 1))$. When $p_5 = p_6 = 1$, the cloud server does not need to store clients' perturbed index sets, and her storage overhead is $O(n^2 + nsd)$.

\subsubsection{Comparison with Secure Federated Learning}

We first analyze the complexities of secure federated learning. Each client's communication overhead mainly comes from: (1) Downloading the full model, namely a $m \times d$ matrix; (2) Uploading the update of the full model through the secure aggregation protocol with $O(n + md)$ complexity, which is obtained by letting the size of vector $l$ in Table~\ref{tab:secure:aggregation:overhead} take $md$. Thus, the overall communication complexity of each client is $O(n + md)$. In correspondence, the cloud server's communication cost comes from: (1) Sending the full model to all $n$ clients, with complexity $O(nmd)$; (2) Working as a mediation in the secure aggregation of the full model updates, with complexity $O(n^2 + nmd)$. Thus, the cloud server's communication complexity is $O(n^2 + nmd)$. Regardless of the local training phase, the computation and storage overheads of each client and the cloud server are dominated by secure aggregation. In particular, the time complexities of each client and the cloud server are $O(n^2 + nmd)$ and $O(n^2 md)$, respectively. Additionally, the storage complexities of each client and the cloud server are $O(n + md)$ and $O(n^2 + md)$, respectively.

We next compare our secure scheme with secure federated learning at the same security and privacy levels, \ie, all the probability parameters in our scheme are set to 1. Given that the cardinality of the union of $n$ chosen clients' real index sets is much smaller than the cardinality of the full index set (\ie, $\vert\bigcup_{i \in \mathcal{C}} \mathcal{S}^{(i)}\vert \ll \vert\mathcal{S}\vert \Rightarrow ns \ll m$), we can draw from Table~\ref{tab:fsl:fl:overhead} that the complexities of each client and the cloud server in our secure federated submodel learning are both much lower than those in secure federated learning. We can further derive from Table~\ref{tab:fsl:fl:overhead} that our secure scheme is quite scalable because its complexities are independent of the size of the full model, which is controlled by the number of total rows $m$.


We finally compare the overheads, typically the computation and memory overheads, of each client in the local training phase under two secure frameworks. We mainly focus on the size of local model/submodel. For secure federated learning, each client trains the full model $\mathbf{W}$ with size $md$. In contrast, for our secure federated submodel learning, each client trains the succinct submodel $\mathbf{W}_{\mathcal{S}^{(i)} \bigcap \mathcal{S}''^{(i)}}$ with size $sp_5d$, which is $sd$ when $p_5 = 1$. Given $md \gg nsd > sd \geq sp_5d$, we can find that our secure scheme is still far more efficient than secure federated learning in the local training phase.



\begin{table}[!t]
\caption{Statistics of Taobao dataset.} \label{tab:taobao:dataset}
\centering
\resizebox{\columnwidth}{!}{
\begin{tabular}[t]{l|cccc}
\toprule
Type & \#User(s) & \#Goods  &\#Categories &\#Samples \\
\midrule\midrule
Test (Full) & 24,790 & 138,829 & 4,758 & 1,010,284\\
Train (Full) & 49,023 & 143,534 & 4,815 & 15,854,357\\
Train (Per Client) & 1 & 301 & 117 & 323\\
\bottomrule
\end{tabular}
}
\end{table}

\section{Evaluation}

In this section, we introduce our evaluation results from model accuracy and convergency, practical computation, communication, and storage overheads.

{\bf Dataset:} We use an industrial dataset built from 30-day impression and click logs of Taobao users from June 15, 2019 to July 15, 2019. For a certain Taobao user, we leverage her click behaviors in previous 14 days as historical data to predict her click and non-click behaviors in the following 1 day. We leave out the samples within the last 1 day as the target items of the test set while putting the others into the training set. Specifically, the test set is located at the cloud server to judge the accuracy and convergency of the global model. In contrast, for the full training set, we further cluster each Taobao user's data as a training set located at a client. Some statistical information about the numbers of Taobao users, goods IDs, category IDs, and samples, in the test set, the full training set, and one client's training set is shown in Table~\ref{tab:taobao:dataset}.

{\bf Model, Hyperparameters, and Metrics:} We take the Deep Interest Network (DIN)~\cite{proc:alibaba:kdd18:din} as the e-commerce recommendation model for federated submodel learning, where the number of columns in the embedding matrix is set to 18. Except the embedding layer for user IDs, goods IDs, and category IDs, the parameters of the other network layers in DIN, including the attention layer and the fully connected layer, are of size 64,327. Hence, the global model at the cloud server is of size 3,617,023, whereas a desired submodel at the client is of size 71,869 in average, which is only $1.99\%$ of the global model's size and roughly requires $0.27$MB space using 32-bit representation. For each client's local training, we choose mini-batch SGD as the optimization algorithm, set the batch size to 2, and set the local epoch number to 1. In addition, we initially set the learning rate to 1 and further apply exponential decay with the decay rate of 0.999 per communication round. For the cloud server's global testing, we adopt a golden metric in the task of Click-Through Rate (CTR) prediction, called Area Under the Curve (AUC), and set the batch size to 1,024.

{\bf Prototypes and Configurations:} We implemented prototypes of our secure federated submodel learning and secure federated learning in Python 2.7.16. Due to the synchronization requirement of secure aggregation, we adopted a synchronous architecture on top as suggested by Google's deployment practice~\cite{proc:sysml19:fed}, where the cloud server delivers instructions to chosen clients, waits for them to finish their tasks, and then moves on to the next step. In particular, we implemented a communication module between the cloud server and each client with standard socket programming. In addition, we used TensorFlow 1.12.0 to implement DIN. Moreover, we mainly used PyCryptodome 3.7.3 to implement the secure aggregation protocol: Diffie-Hellman key exchange was implemented over RFC 3526 Group 14, which is a 2048-bit modular exponential (MODP) group~\cite{link:rfc3526}; the secret sharing adopted the standard Shamir's version; the authenticated encryption used Advanced Encryption Standard (AES) in the Cipher Block Chaining (CBC) mode, where the secret key is set to be 128-bit long; Pseudo-Random Number Generator (PRNG) was implemented using Hash-based Message Authentication Code (HMAC) suggested by NIST SP 800-90A~\cite{link:drbg}, and the security strength takes 128 bits.


Our running environment is a Linux workstation with 64-bit Ubuntu 18.04.2 OS. The processor is Intel(R) Core(TM) i9-9900K with 8 cores, the base frequency is 3.60GHz, the memory size is 64GB, and the cache size is 16MB. The workstation is also equipped with 2 NVIDIA's GeForce RTX 2080 Ti graphics cards. In our evaluation, to manifest the resource differentiation between clients and the cloud server, from hardware, we ran all the clients only on CPU, but allowed the cloud server to accelerate her operations using GPU. Additionally, from parallelism and concurrency, we optimized some of the cloud server's hotspot functions with the multiprocessing library in Python.



{\bf Implementation Overview:} We revisit some key points and also present more implementation details when our secure federated submodel learning is instantiated with Taobao's e-commerce recommendation. The cloud server chooses $n$ clients, namely $n$ Taobao users, in every communication round, where $n$ can range from 20 to 100 with a step of 20. Each chosen client extracts goods IDs from her data as her real index set. Then, she joins in private set union to obtain the union of $n$ chosen clients' goods IDs. In our Taobao dataset, the maximum size of the union of 100 clients' goods IDs reaches 32,904, which corresponds to the cardinality of set to be filtered $\phi$. From Equation~(\ref{eq:bloom:filter:optimal:length}), we can derive that the optimal length of Bloom filter $\beta$ should be 630,774 at a desired false positive rate of $0.01\%$. This is larger than the total number of goods IDs 143,534 involved in this Taobao dataset. Thus, we set the length of Bloom filter to be 143,534, take an identity map as the only hash function, and omit the partition steps. The false positive rate now is 0. In addition, we set the modulus $R$ to $2^{32}$ in private set union, and any false negative due to modular addition in recovering union never happened throughout our evaluation. It is worth noting that our evaluation results of private set union presented here can sufficiently embody the practical case, where the full size of the goods IDs in Taobao scales to billions. The reason is that as shown in Table~\ref{tab:fsl:fl:overhead}, the overheads of our private set union scheme only hinge on the number of chosen clients $n$ and the optimal length of Bloom filter $\beta$, where the latter is independent of the domain of the represented set's elements and is at the same level of 143,534 evaluated here.



\begin{table}[!t]
\caption{Choices of Probability Parameters (CPPs) and resulting privacy levels.}\label{tab:probabilities:choices:rough}
\centering
\resizebox{\columnwidth}{!}{
\begin{tabular}[t]{l|cc|cc|cc|cc}
\toprule
     &$p_1, p_3$ & $p_2, p_4$ & $p_5$ & $p_6$ & $\epsilon_1$ & $\epsilon_\infty$ & $p_{7}$ & $p_{8}$\\
\midrule\midrule
CPP1 & 1 & 0 & 1 & 0 & $\infty$ & $\infty$ & $86.7\%$ & 0\\
CPP2 & $15/16$ & $1/16$ & $88.3\%$ & $11.7\%$ & $2.02$ & $2.71$ & 0 & $10.3\%$\\
CPP3 & $7/8$ & $1/8$ & $78.1\%$ & $21.9\%$  & $1.27$ & $1.95$ & 0 & $19.5\%$\\
CPP4 & $3/4$ & $1/4$ & $62.5\%$ & $37.5\%$ & $0.51$ & $1.10$ & 0 & $34.2\%$\\
CPP5 & 1 & 1 & 1 & 1 & 0 & 0 & 0 & 0\\
\bottomrule
\multicolumn{9}{l}{*Smaller $\epsilon_1$, $\epsilon_\infty$, $p_{7},$ and $p_{8}$ indicate better privacy.}
\end{tabular}
}
\end{table}


After obtaining the union of goods IDs, each client runs Algorithm~\ref{alg:index:set:ldp} to generate a set of perturbed goods IDs. For clarity in presenting results, we let each client use the same Choice of Probability Parameters (CPP). Table~\ref{tab:probabilities:choices:rough} lists 5 CPPs in our evaluation and their resulting privacy levels of index set perturbation and secure submodel updates aggregation, where the number of chosen clients is 100 for the secure submodel updates aggregation. We provide some insights from Table~\ref{tab:probabilities:choices:rough} as follows: (1) CPP1 corresponds to the worst local differential privacy guarantee where each client reveals her real goods IDs, while CPP5 corresponds to the best privacy guarantee as strong as secure federated learning where each client uses the union of chosen clients' goods IDs as her perturbed index set. Additionally, as the serial number of CPP becomes larger, the local differential privacy guarantee is stronger; (2) the resulting $p_{7} = 86.7\%$ at CPP1 indicates that if each chosen client submits the submodel update using her real goods IDs, then Event 1 (\ie, from the securely aggregated submodel update, the cloud server ascertains that some goods ID belongs to a concrete client and also learns the detailed update with respect to this goods ID.) happens with probability $86.7\%$. This further implies that $86.7\%$ of the goods IDs in the union of 100 chosen clients' real goods IDs involve single client. Therefore, we can draw that user data in our e-commerce context is highly heterogeneous, and the truly required submodels of different clients are highly differentiated; (3) with CPP from CPP2 to CPP5, the cloud server cannot ensure that any goods ID belongs to some client from the aggregate submodel update and also cannot learn any client's detailed update, namely Event 1 happens with probability $p_{7} = 0$; and (4) with CPP from CPP2 to CPP4, the proportion of each client's real goods IDs falling into her perturbed goods IDs controlled by $p_5$ decreases, while the proportion of redundant goods IDs controlled by $p_6$ increases. Hence, the proportion of aggregate zero updates grows, implying a higher probability $p_{8}$ of the cloud server in ascertaining that a certain goods ID does not belong to some client. Further, given an observation that $p_{8}$ is approaching $1 - p_5$, we can still draw that the real goods IDs of different Taobao users are highly differentiated.

Based on the perturbed goods IDs, each client can further generate the perturbed category IDs by leveraging the global mapping between goods IDs and category IDs, which is publicly shared among all clients and the cloud server. Upon receiving the perturbed goods IDs from a client, the cloud server can still generate her perturbed category IDs, thus returning a submodel to the client. Here, the submodel returned to the client comprises the embedding vectors for her perturbed goods and category IDs as well as the other network parameters of DIN. In addition, the client should not generate perturbed goods and category IDs independently by running Algorithm~\ref{alg:index:set:ldp} twice. The reason is that there exist strong correlations, particularly a surjective but not injective mapping, between goods IDs and category IDs, whereas independent randomized responses to dependent questions can lead to inconsistency and may leak information about true answers to the adversary. Therefore, we should apply randomized response to major questions and then generate answers to secondary/correlated questions via their dependence relationships, which can keep consistency. For example, suppose you are asked two types of questions: one type is about whether you have a certain fruit by enumerating all possible fruits, such as ``Do you have an apple?" or ``Do you have a banana?"; the other type is about whether you have the category fruit. To preserve plausible deniability and consistency of your answers, you should simply apply randomized response to the first type of questions, and then directly respond to the second question based on the noisy answers to the first type of questions. In other words, if you respond any ``Yes" to the first type of questions, then you should respond ``Yes" to the second question; otherwise, you should respond ``No".

By performing set intersection between real and perturbed goods IDs, each client obtains her succinct set of goods IDs and also gets her succinct set of category IDs based on the global goods-category mapping. Then, the client extracts her succinct submodel from the downloaded submodel and also extracts her succinct training set from the original training set by following two rules: (1) For the goods ID to be predicted in a sample, if it does not belong to the succinct set of goods IDs, this sample will be filtered out; and (2) for the sequence of historical goods IDs in a sample, we only keep those goods IDs in the succinct set of goods IDs as well as their corresponding category IDs. Of course, if none of goods IDs are left in the historical sequence, this sample will be filtered out. Afterwards, each client trains her succinct model, obtains the update of the submodel, and prepares the weighted submodel update and the count vector to be uploaded.

To facilitate the cloud server in obliviously adding up the weighted submodel updates based on secure aggregation, the float-type parameters need to be converted to integers so that modular addition is supported. One common practice is that each client first scales up the parameters through multiplying by a large constant (\eg, a power of 2), and only keeps the integral parts. Later, the cloud server scales down the aggregate result through dividing by the same constant. Different from the common scaling technique, in this work, we perform float-to-integer conversion in a new manner, particularly by means of a celebrated model compression algorithm, called stochastic $\gamma$-level quantization~\cite{proc:icml17:kquantization}, which maps each parameter into $\{0, 1, \ldots, 2^{\gamma} - 1\}$. The detailed procedure is shown as follows: Each client first compresses her submodel update and then weights the parameters in the compressed submodel update by multiplying with corresponding count numbers. After the secure aggregation, the cloud server gets the sum of the weighted compressed submodel updates, and then divides by corresponding total count numbers. Finally, the cloud server applies the decompression algorithm and updates the global model by adding the aggregate submodel update. We note that the postprocessing with a weighted average/mean in the compressed space does not introduce any bias/error to the final decompressed result. Detailed proof is deferred to Appendix~\ref{app:fedavg:quantization}. Furthermore, in our following evaluation, we set $\gamma$ to $2^{15}$ and set the modulus in the secure aggregation of weighted compressed submodel updates to $2^{32}$.

\begin{figure}[!t]
\centering
\includegraphics[width = 0.95\columnwidth]{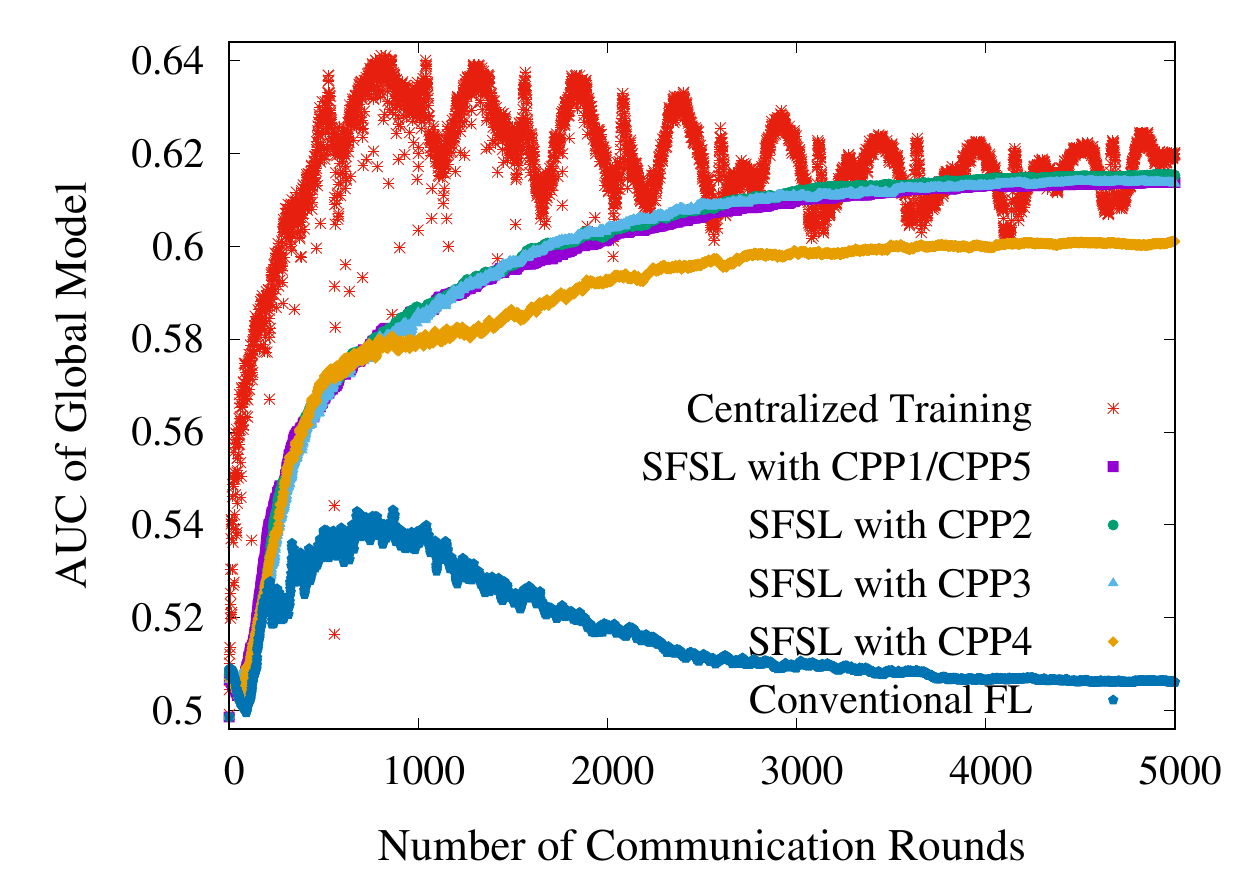}
\caption{Accuracies and convergencies of global model in centralized training, Secure Federated Submodel Learning (SFSL) with different Choices of Probability Parameters (CPPs), and conventional Federated Learning (FL).}\label{fig:global:model:aucs}
\end{figure}

\begin{figure*}[t]
\centering
\subfigure[Per Client]{\label{fig:comm:overhead:total:client}
\includegraphics[width=0.66\columnwidth]{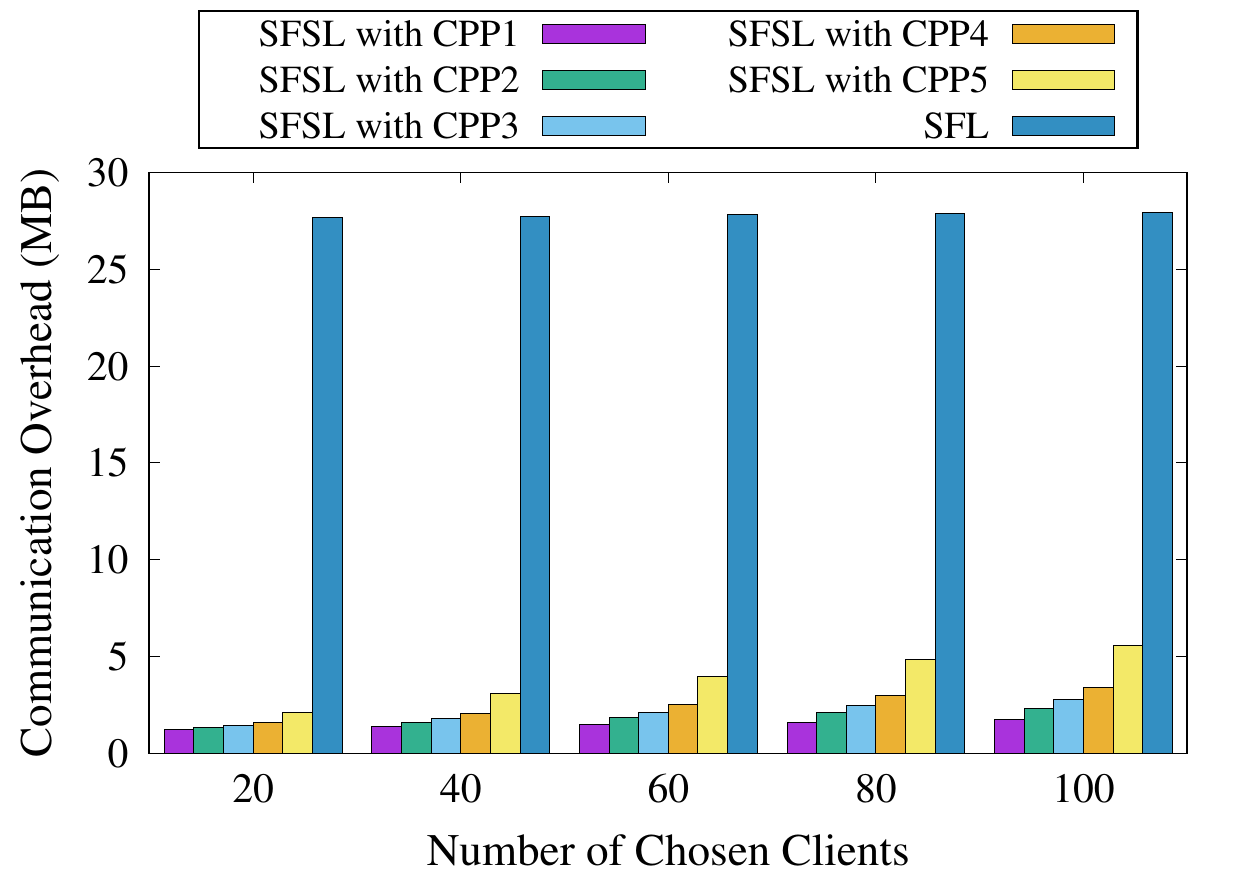}}
\subfigure[Per Client]{\label{fig:comp:overhead:totoal:client}
\includegraphics[width=0.66\columnwidth]{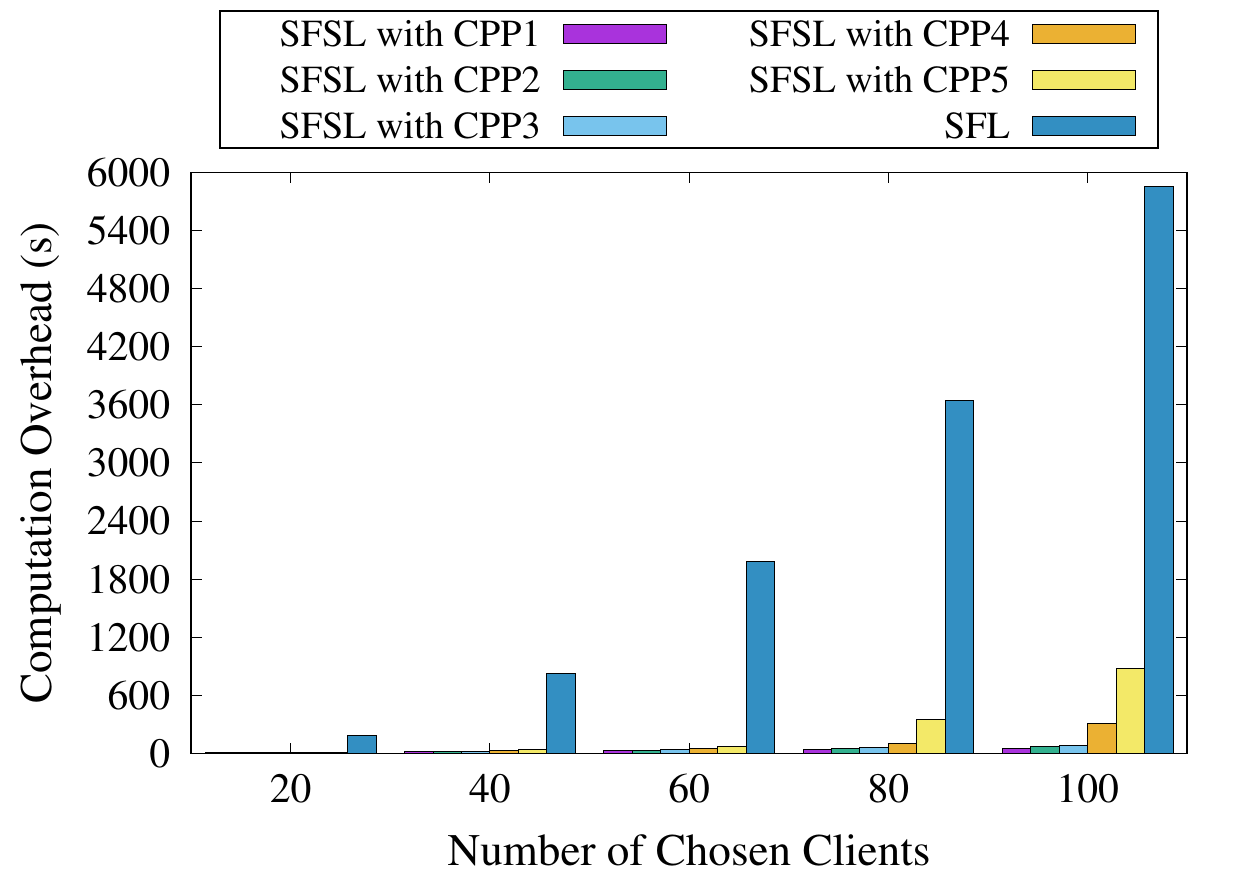}}
\subfigure[Cloud Server]{\label{fig:comp:overhead:totoal:server}
\includegraphics[width=0.66\columnwidth]{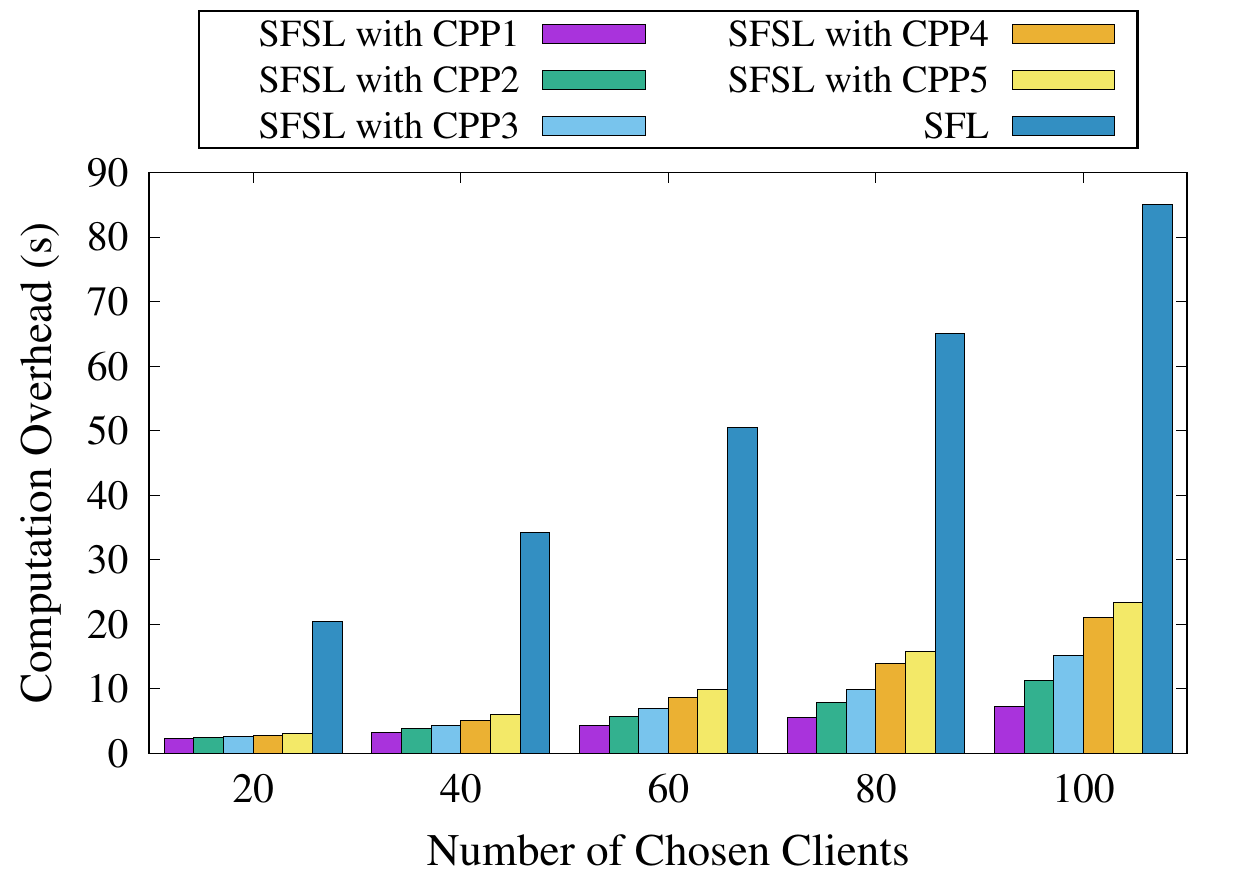}}
\caption{Total communication and computation overheads of the client and the cloud server per communication round in Secure Federated Submodel Learning (SFSL) with different Choices of Probability Parameters (CPPs) and Secure Federated Learning (SFL).}\label{fig:comm:comp:overhead:totoal}
\end{figure*}

\subsection{Model Accuracy and Convergency}\label{sec:evaluation:accuracy}

We bring in centralized training and conventional federated learning as two baselines and plot their AUCs as well as the AUCs of our secure federated submodel learning with different CPPs in Fig.~\ref{fig:global:model:aucs}. Here, the number of chosen clients in each communication round $n$ is set to 100, and the total number of communication rounds is set to 5,000. In addition, centralized training refers to the traditional case that the cloud server first collects data from all clients, then trains the DIN model, and tests the model once training over the samples with a similar size to the total size of $n$ chosen clients' datasets.


From Fig.~\ref{fig:global:model:aucs}, we can see that compared with centralized training, which reaches the highest AUC of 0.641 in 803 communication rounds, our secure federated submodel learning with CPP2 achieves the highest AUC of 0.615 in 4,908 communication rounds, slightly decreasing by 0.026. In contrast, conventional federated learning performs worst among all schemes, only achieving the highest AUC of 0.543 in 867 communication rounds. Even worse, it does not converge to a good model at the end of many communication rounds\footnote{We have tried several different pairs of an initial learning rate and a decay rate and all observed divergences in conventional federated learning. For example, when the initial learning rate is 4, and the exponential decay rate is 0.996, conventional federated learning achieves the best AUC of 0.554 in 230 rounds but diverges to 0.503 at the end of 5,000 rounds.}. The major reason for conventional federated learning not working well in the e-commerce recommendation context is that it coarsely computes the weighted average of the clients' updates of the full model proportional to their training set sizes, no matter whether one client's whole training set actually involves some network parameters (the full model excluding her submodel, \eg, some embedding vectors here), thus inaccurately counting in the weights (\ie, the training set sizes) of those clients who contribute zero/no updates for these network parameters. In addition, with a higher heterogeneity of user data and thus a higher differentiation of submodels, the roughness and inaccuracy of conventional federated learning will be exposed more completely, which clarifies why it can work in the natural language context with a 10,000 word vocabulary considered by Google, but does not work well in our e-commerce context with billion-scale goods IDs. More thorough demonstrations are deferred to Appendix~\ref{app:divergence}.

From Fig.~\ref{fig:global:model:aucs}, we can also observe some differences among the AUCs of our secure federated submodel learning with different CPPs. In particular, secure federated submodel learning with CPP4 is the worst among all CPPs, and it achieves the highest AUC of 0.601, decreasing by $0.014$ compared with CPP2. We clarify the reason through the resulting probabilities $p_5$'s of different CPPs listed in Table~\ref{tab:probabilities:choices:rough}, where $p_5$ denotes the probability that an index in a client's real index set finally is put into her perturbed index set, and it dominates the size of the client's succinct training set. Further considering the process of generating the succinct training set in our evaluation, in addition to the size of each client's factual local training samples, $p_5$ also controls the length of historical goods and category IDs in every sample. Thus, a smaller $p_5$ tends to imply worse model performance in general. This accounts for different model performances under CPP2 and CPP4, and also explains the observation that CPP1 and CPP5, sharing the same $p_5 = 1$, have identical model performance.

\subsection{Communication Overhead}

We show the total communication overhead of secure federated submodel learning and first introduce secure federated learning as a baseline. Fig.~\ref{fig:comm:overhead:total:client} plots the communication overhead per client per communication round. Here, we do not plot the communication overhead of the cloud server, since it is equal to the communication overhead per client multiplying by the number of chosen clients. In more detail, the incoming data of the cloud server is exactly the total outgoing data of all chosen clients, and vice verse. Additionally, we also do not plot for different dropout ratios because this factor has little impact on the bandwidth cost.

One key observation from Fig.~\ref{fig:comm:overhead:total:client} is that our secure federated submodel learning can significantly reduce the total communication overhead, compared with secure federated learning. In particular, when the number of chosen clients is 100, the communication overheads per client per communication round are 1.76MB, 2.33MB, 2.78MB, 3.40MB, and 5.57MB in secure federated submodel learning with CPP1, CPP2, CPP3, CPP4, and CPP5, respectively, reducing $93.72\%$, $91.65\%$, $90.06\%$, $87.81\%$, and $80.05\%$ than secure federated learning, which incurs 27.94MB per client per round. Considering secure federated submodel learning with CPP5 share the same levels of security and privacy with secure federated learning (\ie, Theorem~\ref{theorem:same:security}), we can draw that our secure scheme can reduce communication overhead even with not scarifying any security or privacy. These results coincide with our complexity analysis in Section~\ref{sec:performance:analysis} and Table~\ref{tab:fsl:fl:overhead}.

The second key observation from Fig.~\ref{fig:comm:overhead:total:client} is that for secure federated submodel learning with a certain CPP, the communication overhead per client increases with the number of chosen clients $n$. In addition, for a certain number of chosen clients, the communication overhead per client increases with the serial number of CPP. We clarify the reasons by adopting the detailed communication complexity formula of each client from Section~\ref{sec:performance:analysis}: $O(ns + (sp_5 + (n - 1)s p_6) (2d + 1))$. On one hand, the communication complexity grows linearly with $n$. On other other hand, it is increasing with $p_5$ and $p_6$, and thus CPP5 is most communication expensive. Additionally, given $p_5 + p_6 = 1$ for CPPs from CPP1 to CPP4 in Table~\ref{tab:probabilities:choices:rough}, we can simplify the formula to $O(ns + (s + (n - 2)s p_6)(2d + 1))$, which increases with $p_6$ for $n > 2$. From Table~\ref{tab:probabilities:choices:rough}, we can see that $p_6$ increases with the serial number of CPP, implying a higher communication overhead as depicted in Fig.~\ref{fig:comm:overhead:total:client}. Intuitively, $p_6$ denotes the probability that an index not in a client's real index set finally falls into the perturbed index set and controls the size of the redundant/zero parameters to be downloaded and securely uploaded. Thus, when the sum of two probabilities is fixed, particularly $p_5 + p_6 = 1$, the increase of bandwidth cost due to introducing redundant parameters exceeds the decrease due to discarding original parameters. In other words, the introduced redundant parameters controlled by $p_6$ dominates the holistic trend of communication overhead.

We next introduce the pure versions of federated submodel learning and conventional federated learning shown in Fig.~\ref{fig:system:model} as another type of baselines, and investigate the expansion factors due to the security and privacy guarantees. In particular, the pure federated submodel learning (\resp, federated learning) means that each client directly downloads her required submodel (\resp, the full model) from the cloud server, and then uploads the submodel update and the count vector (\resp, the full model update and the training set size) to the cloud server. The communication overheads per client per round are 0.41MB and 20.70MB in the federated submodel learning and federated learning, respectively, and are irrelevant with the number of chosen clients. Compared with the pure version, when the number of chosen clients is 100, the communication overhead of secure federated submodel learning with CPP2 (\resp, secure federated learning) expands $5.65\times$ (\resp, $1.35\times$). Three major reasons account for a larger expansion factor in secure federated submodel learning: (1) First is that the bandwidth cost of pure federated submodel learning, as the denominator, is much lower than, particularly $1.99\%$ of, the pure federated learning's bandwidth cost; (2) second is that the size of model parameters in secure federated learning is much larger than that in secure federated submodel learning, which can amortize the communication cost spent in transferring security and privacy related parameters; and (3) third is that secure federated submodel learning requires an extra process of private set union to facilitate later index set perturbation.


\begin{table}[!t]
\caption{Communication and computation overheads of the client and the cloud server per round in private set union.} \label{tab:psu:comm:overhead}
\centering
\resizebox{\columnwidth}{!}{
\begin{tabular}[t]{l|ccccc}
\toprule
\#Chosen Clients $n$ & 20 & 40 & 60 & 80 & 100\\
\midrule\midrule
Client's Comm. Overhead (MB) & 0.63 & 0.70  & 0.77  & 0.84  & 0.91\\
Client's Comp. Overhead (s)  & 5.04 & 10.09 & 15.53 & 26.49 & 33.69\\
Server's Comp. Overhead (s)  & 1.12 & 1.68  & 2.40  & 3.13  & 4.19\\
\bottomrule
\end{tabular}
}
\end{table}

We finally present the communication overhead per client per round of our private set union protocol. Table~\ref{tab:psu:comm:overhead} lists the detailed bandwidth cost. We can see that the communication overhead per client in private set union increases linearly with the number of chosen clients, roughly with an increase of 0.07MB per 20 clients. In addition, we can also see that our private set union is communication efficient, and it only incurs 0.91MB when the number of chosen clients reaches 100. These evaluation results conform to our complexity analysis in Section~\ref{sec:performance:analysis} and Table~\ref{tab:fsl:fl:overhead}.

\subsection{Computation Overhead}

We now report the practical computation overhead, mainly by investigating the effects of the number of chosen clients, the choices of probability parameters, as well as the ratios of dropout. To be consistent with our time complexity analysis, the computation overhead here only includes the run time of the client or the cloud server in executing the protocol but ignores synchronization delay and the time overhead of testing global model. Of course, given mobile clients are highly parallel in practice, the total run time per communication round can be estimated by adding up the computation overheads of the client and the cloud server shown here. In addition, testing the global model per round costs the cloud server 32.12s.



We first show the computation overheads of the client and the cloud server in our secure federated submodel learning with different CPPs in Fig~\ref{fig:comp:overhead:totoal:client} and Fig~\ref{fig:comp:overhead:totoal:server}, respectively. We still introduce secure federated learning as a baseline. First, we compare two figures and find that the computation overhead per client is higher than that of the cloud server. For example, in secure federated submodel learning with CPP2, when the number of chosen clients is 100, the computation overheads of the client and the cloud server are 71.80s and 11.31s, respectively. In addition to the superiorities of hardware and multiprocessing, under the synchronous architecture, the cloud server actually takes up the entire computing resources of the physical workstation alone, which are instead shared by all $n$ chosen clients simultaneously. These factors jointly account for the differences between Fig~\ref{fig:comp:overhead:totoal:client} and Fig~\ref{fig:comp:overhead:totoal:server}. Second, we focus on a certain side, either the client or the cloud server, and we can observe that her computation overhead grows with the number of chosen clients or the serial number of CPP. Here, we explain the reason by means of the detailed time complexities in Section~\ref{sec:performance:analysis}. For example, the time complexity of the client is $O(n^2s + n(sp_5 + (n - 1)s p_6)(d + 1))$, which is increasing with the number of chosen clients $n$ as well as $p_5$ and $p_6$, where the latter explains why CPP5 is the most time-consuming one among all CPPs. Regarding CPP1 to CPP4 where $p_5 + p_6 = 1$, we can simplify the complexity formula to $O(n^2s + n(s + (n - 2)s p_6)(d + 1))$, which is increasing with $p_6$ for $n > 2$. The intuition behind the above analysis is analogous to that behind communication overhead, \ie, the size of redundant/zero parameters dominates the holistic computation overhead. Third, we compare our secure federated submodel learning with secure federated learning, and can find that our secure scheme significantly outperforms the baseline on both the sides of the client and the cloud server. In particular, when the number of chosen clients is 100, at the same security and privacy levels, secure federated submodel learning with CPP5 reduces $85.02\%$ and $72.51\%$ of computation overheads than secure federated learning on the sides of the client and the cloud server, respectively. When security and privacy become weaker, the advantages of our scheme are more evident under CPP1 to CPP4, \eg, CPP2 reduces $98.77\%$ and $86.70\%$ on the sides of the client and the cloud server, respectively.


\begin{figure}[!t]
\centering
\includegraphics[width = 0.95\columnwidth]{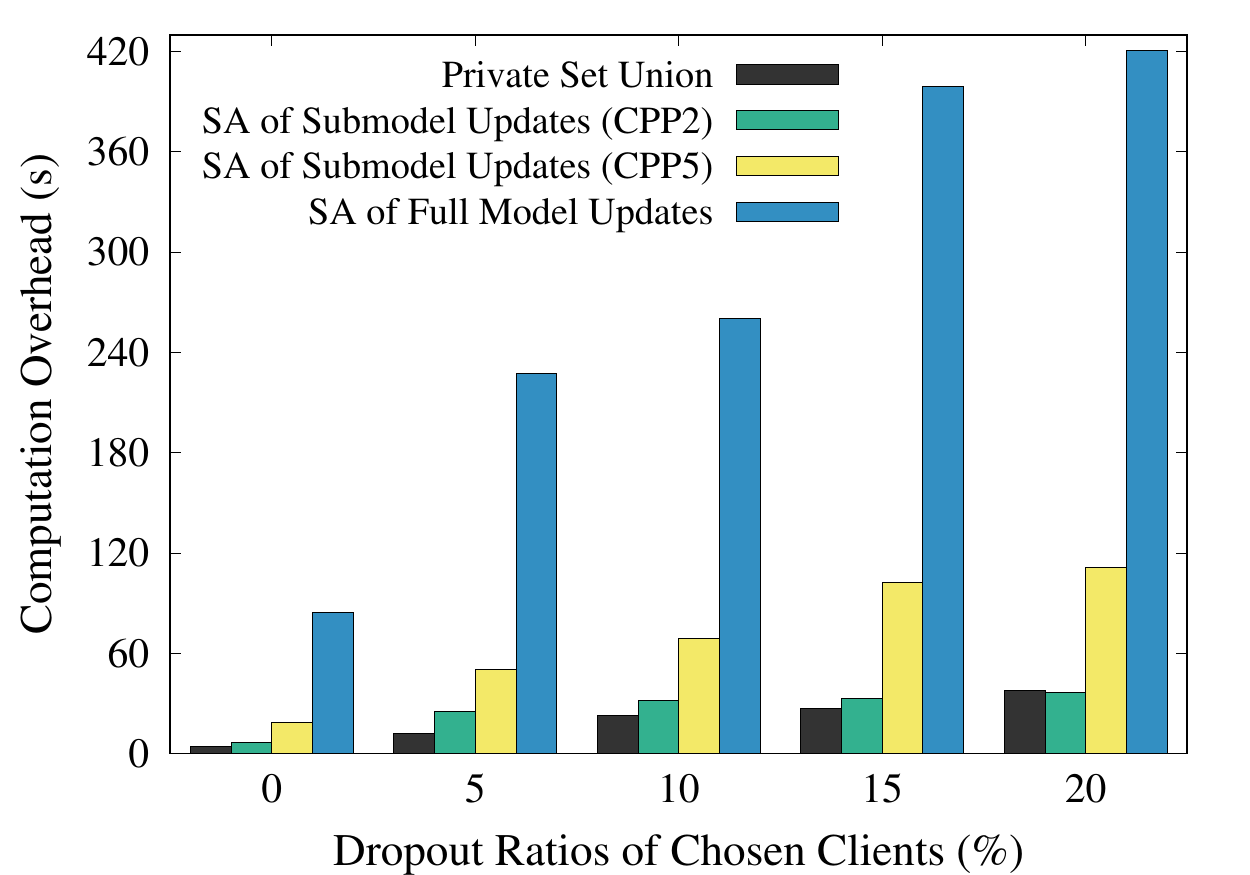}
\caption{Computation overheads of the cloud server with varying dropout ratios of chosen clients, in private set union, Secure Aggregations (SAs) of the weighted submodel updates and count vectors in SFSL with different CPPs, and Secure Aggregation (SA) of the full model updates in SFL.}\label{fig:comp:overhead:server:dropout}
\end{figure}

We next investigate the effect of the dropout ratio of chosen clients and depict the evaluation results in Fig.~\ref{fig:comp:overhead:server:dropout}. Here, we mainly focus on the secure aggregation based stages while ignoring the other stages which are irrelevant with dropout. In addition, we fix the number of chosen clients in each communication round at 100. Moreover, each client can randomly go offline after sending the encrypted shares of her private PRNG seed for the self mask and her private key for the mutual mask to the other clients because only this type of dropout behaviors introduce additional overhead. In particular, live clients still incorporate the public keys of dropped clients for mutual masks, and the cloud server must perform an expensive recovery computation to remove these mutual masks. The higher the dropout ratio is, the more costly the recovery computation will be. This trend is clear in Fig.~\ref{fig:comp:overhead:server:dropout}. Last, we only report the computation overhead of the cloud sever because dropped clients do not introduce additional operation cost to live clients, like for communication overhead. Of course, the case that more clients are dropped can mitigate the competition for system resources, especially when the number of chosen clients is large and the size of vector for secure aggregation is large. For example, when the number of chosen clients is 100 and the dropout ratio is $20\%$, for the secure aggregations of weighted submodel updates and count vectors in our secure scheme with CPP5, the computation overhead per client reduces by $16.75\%$ than the overhead in the case of no dropout.

We further observe Fig.~\ref{fig:comp:overhead:server:dropout} more carefully. First, we compare our secure scheme with secure federated learning in the stages of obliviously summing submodel or full model updates. We can find that our scheme significantly outperforms secure federated learning at any dropout ratio. Specifically, when the dropout ratio is $20\%$, our scheme with CPP2 and CPP5 reduce $91.33\%$ and $73.55\%$ of computation overhead, respectively. Second, we examine our private set union protocol and can see that it is quite efficient, even when the dropout ratio is high. In particular, when the dropout ratio reaches $20\%$, the computation overhead of the cloud server is 37.66s.

\subsection{Memory and Disk Loads}

We finally present the practical storage overheads of secure federated submodel learning and secure federated learning, including memory and disk loads. In particular, the materials for private set union and secure aggregations of submodel/full model updates and count vectors are stored in memory for immediate use, whereas permanent answers of each client are written into disk if necessary.

First is about memory overhead. The cloud server requires the video memory of 551MB, mainly for testing the global model at the end of each communication round, which is the same for all schemes. In addition, when the number of chosen clients is 100, and there is no dropout, the memory overheads per client are 209MB and 281MB in our secure federated submodel learning with CPP2 and CPP5, respectively, reducing $59.40\%$ and $45.43\%$ than secure federated learning. Correspondingly, the memory overheads of the cloud server are 1.58GB and 3.15GB, respectively, reducing $81.88\%$ and $63.77\%$ than secure federated learning. Furthermore, compared with the pure version, the memory overheads of our secure federated submodel learning with CPP2 and CPP5 only slightly expand on the client's side, and expand $1.19\times$ and $2.38\times$ on the cloud server's side, respectively. In contrast, the expansion factor is $4.60\times$ on the cloud server's side in secure federated learning.

Regarding the disk load, only the client in our scheme with CPP2, CPP3, and CPP4 needs to store her permanent answers in index set perturbation for multiple communication rounds, which roughly occupies the disk space of 280KB within the total 5,000 rounds.


\subsection{Summary and Discussion}

The above evaluation results adequately demonstrate the effectiveness and efficiency of our proposed secure federated submodel learning as well as its superiority over the baselines of conventional federated learning. Additionally, when the size of the full model, depending on the full size of the goods IDs in the e-commerce scenario, scales further to billions in practice, it has no effect on the performance and overhead of our scheme. This is because as analyzed in Section~\ref{sec:performance:analysis} and summarized in Table~\ref{tab:fsl:fl:overhead}, the complexities of our scheme are independent of the size of the full model. However, the conventional federated learning framework, hinging on the full model, will be too prohibitively inefficiently to be applicable.

\section{Conclusion}

In this paper, we have proposed a new framework, called secure federated submodel learning, for numerous clients to effectively and efficiently train large-scale deep learning models under the coordination of an untrusted cloud server while keeping their user data private. We further have applied our framework to the e-commerce recommendation scenario of Alibaba, implemented a prototype system, and extensively evaluated its performance over a Taobao dataset. Evaluation results have validated practical feasibility.

\begin{appendix}

\subsection{Fine-Tuning the Privacy Level of Secure Submodel Updates Aggregation}\label{app:tune:submodel:aggregation}
We introduce how to enable each client to fine-tune the privacy level of secure submodel updates aggregation by analyzing the impacts of the probability parameters $p_5$ and $p_6$ on the probabilities of Event 1 and Event 2, \ie, $p_{7} = p_5 (1 - p_5)^{n_{j, 1} - 1} (1 - p_6)^{n_{j, 0}}$ and $p_{8} = (1 - p_5)^{n_{j, 1}}(1 - (1 - p_6)^{n_{j,0}})$, respectively. Without loss of generality, we consider three different policies: (1) Fixing $p_5$ and adjusting $p_6$; (2) fixing $p_6$ and adjusting $p_5$; and (3) fixing $p_5 + p_6 = 1$. In particular, as shown in Table~\ref{tab:probabilities:choices:rough}, we mainly adopted the first and the third policies in our evaluation.

We first analyze $p_{7}$ as follows: (1) If $p_5$ is fixed, $p_{7}$ decreases as $p_6$ increases; (2) If $p_6$ is fixed, the monotonicity is nontrivial. We need to compute the derivative of $p_{7}$ with respect to $p_5$, \ie,
\begin{align*}
\frac{\mathrm{d} p_{7}}{\mathrm{d} p_5} = \left(1 - p_5 n_{j,1}\right) \left(1 - p_5\right)^{n_{j,1} - 2} \left(1 - p_6\right)^{n_{j,0}}.
\end{align*}
Thus, if $p_5 < 1/n_{j,1}$, $p_{7}$ increases as $p_5$ increases; otherwise, $p_{7}$ decreases as $p_5$ increases; (3) If $p_5 + p_6 = 1$, we first simplify $p_{7}$ into ${p_5}^{n_{j, 0} + 1} (1 - p_5)^{n_{j, 1} - 1} $. Then, we compute the derivative of $p_{7}$ with respect to $p_5$ as
\begin{align*}
\frac{\mathrm{d} p_{7}}{\mathrm{d} p_5} = \left(n_{j,0} + 1 - \left(n_{j,0} + n_{j,1}\right)p_5\right) {p_5}^{n_{j, 0}} \left(1 - p_5\right)^{n_{j,1} - 2}.
\end{align*}
Hence, if $p_5 < (n_{j, 0} + 1)/(n_{j, 0} + n_{j,1})$, $p_{7}$ increases as $p_5$ increases; otherwise, $p_{7}$ decreases as $p_5$ increases.

We next analyze $p_{8}$ as follows: (1) If $p_5$ is fixed, $p_{8}$ increases as $p_6$ increases; (2) If $p_6$ is fixed, $p_{8}$ decreases as $p_5$ increases; (3) If $p_5 + p_6 = 1$, we simplify $p_{8}$ into $(1 - p_5)^{n_{j, 1}}(1 - {p_5}^{n_{j,0}})$. Thus, $p_{8}$ decreases as $p_5$ increases.

Finally, we can verify the above deductions by checking the differences among CPPs and the changes in the resulting privacy levels of secure submodel updates aggregation in Table~\ref{tab:probabilities:choices:rough}, where the number of chosen clients is 100 (\ie, $n_{j, 0} + n_{j, 1} = 100$), and the number of chosen clients who have an arbitrary goods ID from the union is $1.17$ in average (\ie, $n_{j, 0} = 98.83, n_{j,1} = 1.17$).

\subsection{$\gamma$-Level Stochastic Quantization and Weighted Average}\label{app:fedavg:quantization}

We first briefly review the application of $\gamma$-level stochastic quantization in secure federated submodel learning. Considering that the compression, federated averaging (or mathematically, weighted average/mean), and decompression of submodel updates are element-wise, we thus just focus on the update of one parameter. Here, we consider that the parameter update and the count number from client $i \in \mathcal{C}$ are $\Delta w^{(i)} \in \mathbb{R}$ and $v^{(i)} \geq 0 \in \mathbb{Z}$, respectively, and the cloud server wants to compute $\sum_{i\in\mathcal{C}} v^{(i)} \Delta w^{(i)} /\sum_{i\in\mathcal{C}} v^{(i)}$. In addition, we assume that $\forall i \in \mathcal{C}, \Delta w_{\min} \leq \Delta w^{(i)} \leq \Delta w_{\max}$. Moreover, for $\gamma$-level quantization, the interval from $\Delta w_{\min}$ to $\Delta w_{\max}$ should be equally divided into $\gamma - 1$ segments, where the length of each segment is $\Delta w_{\mathrm{unit}} = (\Delta w_{\max} - \Delta w_{\min})/(\gamma - 1)$. In fact, to compress $\Delta w^{(i)}$, client $i$ needs to find the segment that contains $\Delta w^{(i)}$, and then takes either the starting index or the ending index of the segment with probability (w.p.) inversely proportional to the distance between $\Delta w^{(i)}$ and the starting or ending point. More specifically, $\Delta w^{(i)}$ is mapped into $z^{(i)} \in \{0, 1, \ldots, \gamma - 1\}$, where
\begin{equation*}
z^{(i)} = \left\{
\begin{aligned}
&\left\lfloor \frac{\Delta w^{(i)} - \Delta w_{\min}}{\Delta w_{\mathrm{unit}}} \right\rfloor\\
&\quad\quad \mathrm{w.p.}\ \left\lceil \frac{\Delta w^{(i)} - \Delta w_{\min}}{\Delta w_{\mathrm{unit}}} \right\rceil - \frac{\Delta w^{(i)} - \Delta w_{\min}}{\Delta w_{\mathrm{unit}}},\\
&\left\lceil  \frac{\Delta w^{(i)} - \Delta w_{\min}}{\Delta w_{\mathrm{unit}}} \right\rceil\ \mathrm{otherwise}.
\end{aligned}
\right.
\end{equation*}
On one hand, we can compute that the expectation of $z^{(i)}$ is $\frac{\Delta w^{(i)} - \Delta w_{\min}}{\Delta w_{\mathrm{unit}}}$, which is denoted as $z^{(i)*}$ and will be consistently used in our following derivations. On the other hand, we can decompress $z^{(i)}$ and get the recovered parameter as $z^{(i)} \Delta w_{\mathrm{unit}} + \Delta w_{\min}$, the expectation of which is $\Delta w^{(i)}$.

We next demonstrate that the weighted averaging operation in the compression space will not introduce any bias/error. On each client's side, she compresses her parameter update into:
\begin{align*}
\forall i \in \mathcal{C}, z^{(i)*} = \frac{\Delta w^{(i)} - \Delta w_{\min}}{\Delta w_{\mathrm{unit}}}.
\end{align*}
Then, she further weights $z^{(i)*}$ through multiplying by $v^{(i)}$ (\ie, $v^{(i)}z^{(i)*}$), and uploads the materials for secure aggregation. On the cloud server's side, she divides the aggregate result $\sum_{i \in \mathcal{C}} v^{(i)}z^{(i)*}$ by $\sum_{i\in\mathcal{C}} v^{(i)}$, gets $\sum_{i \in \mathcal{C}} v^{(i)}z^{(i)*} / \sum_{i\in\mathcal{C}} v^{(i)}$, and finally performs decompression as
\begin{align*}
  &\frac{\sum_{i \in \mathcal{C}} v^{(i)}z^{(i)*}}{\sum_{i\in\mathcal{C}} v^{(i)}} \Delta w_{\mathrm{unit}} + \Delta w_{\min}\\
=\ &\frac{\sum_{i \in \mathcal{C}} v^{(i)} \frac{\Delta w^{(i)} - \Delta w_{\min}}{\Delta w_{\mathrm{unit}}}}{\sum_{i\in\mathcal{C}} v^{(i)}} \Delta w_{\mathrm{unit}} + \Delta w_{\min}\\
=\ &\frac{\sum_{i \in \mathcal{C}} v^{(i)}\left(\Delta w^{(i)} - \Delta w_{\min}\right)}{\sum_{i\in\mathcal{C}} v^{(i)}} + \Delta w_{\min}\\
=\ &\frac{\sum_{i \in \mathcal{C}} v^{(i)} \Delta w^{(i)}}{\sum_{i\in\mathcal{C}} v^{(i)}} - \frac{\sum_{i \in \mathcal{C}} v^{(i)} \Delta w_{\min}}{\sum_{i\in\mathcal{C}} v^{(i)}} + \Delta w_{\min}\\
=\ &\frac{\sum_{i \in \mathcal{C}} v^{(i)} \Delta w^{(i)}}{\sum_{i\in\mathcal{C}} v^{(i)}} - \Delta w_{\min} + \Delta w_{\min}\\
=\ &\frac{\sum_{i \in \mathcal{C}} v^{(i)} \Delta w^{(i)}}{\sum_{i\in\mathcal{C}} v^{(i)}},
\end{align*}
which is the same as the desired outcome in expectation.

In a nutshell, we can conclude that secure federated submodel learning does not introduce any error/bias under the $\gamma$-level stochastic quantization mechanism.

\subsection{Divergence of Conventional Federated Learning}\label{app:divergence}

The behavior of divergence has ever been observed due to large local epoch numbers in~\cite{proc:aistats17:fedavg}, which initially proposed the federated averaging algorithm. However, the local epoch number in our evaluation is set to the minimum 1, which cannot account for divergence here. As illustrated in Section~\ref{sec:evaluation:accuracy}, the major reason is that conventional federated learning inaccurately counts in the weights (\ie, the training set sizes) of those clients who contribute zero/no updates for some network parameters (\eg, some embedding vectors in DIN) when computing the weighted average of the full model updates. We examine an embedding vector for the goods with ID 1 for example. We consider that 100 clients are chosen, and assume that the sizes of their training sets are all 300s. Additionally, only 10 samples in client 1's training set involves goods 1 while the samples of the other 99 clients do not, which implies that only client 1 updates the embedding vector for goods 1 while the others do not. After each client trains locally for one epoch and uploads update, under the conventional federated learning framework, the cloud server will update the global model by adding $300/(300 \times 100) = 1\%$ of the client 1's update of the embedding vector, where the total weights count in the training set sizes of the other 99 clients who contribute zero updates. In contrast, under our federated submodel learning framework, the cloud server will add $10/10 = 100\%$ of the client 1's update of the embedding vector to the global model, where the weight of client 1 along with the total weights only count in the size of involved samples. By comparing with centralized training using the same hyperparameters over all 100 clients' data in sequence for one epoch, which will update the global model by adding $100\%$ rather than $1\%$ of the client 1's update of the embedding vector, we can find that federated averaging of the full model updates may cause some network layers, which do not involve each client's whole training set, to be trained in an inaccurate way. This further indicates that we should leverage the fine-grained involved training set sizes (\eg, at the level of individual embedding vector here) as weights like in our federated submodel learning, rather than using the whole training set size at the level of the full model as a unified weight like in conventional federated learning. Of course, the roughness and inaccuracy of conventional federated learning are completely exposed in our e-commerce context, mainly due to the high heterogeneity of user data. In particular, the full size of goods and category IDs are huge, and different Taobao users tend to have highly differentiated or even mutually exclusive sets of goods and category IDs, thus involving and updating different rows of the embedding matrix. Nevertheless, in some other contexts, where user data and their truly required submodels are not heterogenous enough, these shortcomings may not appear. For example, in the natural language context considered by Google, clients use a small vocabulary of size 10,000 for local training in Gboard, which is similar to the full set of goods and category IDs but with a much smaller scale. Therefore, for the embedding vector of a certain word, federated averaging results of its updates using the size of a client's whole text samples and the size of the samples involving this word may not differ too much.

\subsection{Leakage of A Client's Real Index Set in Multiple Communication Rounds and Countermeasures}

We now introduce a privacy leakage that if a client participates in multiple communication rounds of federated (submodel) learning even with the strongest security and privacy guarantees, the cloud server, as an adversary, may reveal a client's real indices. We focus on client $i$ with her real index set $\mathcal{S}^{(i)}$ and assume that client $i$ participates in two communication rounds, where the unions of chosen clients' real index sets are denoted as $\mathcal{U}_1$ and $\mathcal{U}_2$, respectively. We note that $\mathcal{U}_1$ and $\mathcal{U}_2$ are both revealed to the cloud server, no matter whether in our secure federated submodel learning or in secure federated learning. First, the leakages of $\mathcal{U}_1$ and $\mathcal{U}_2$ in secure federated submodel learning are trivial. Regarding secure federated learning, the cloud server can still learn $\mathcal{U}_1$ and $\mathcal{U}_2$ from two aggregate full model updates by filtering out those indices with zero updates. Please see Fig.~\ref{fig:adversary:model:full} for an intuition, and refer to the proof of Theorem~\ref{theorem:same:security} for formal reasoning. Now, the cloud server computes the intersection of $\mathcal{U}_1$ and $\mathcal{U}_2$, which must contain $\mathcal{S}^{(i)}$, \ie, $\mathcal{S}^{(i)} \subset \mathcal{U}_1 \bigcap \mathcal{U}_2$. Here, we can further derive that: (1) if client $i$ participates in more communication rounds, due to the properties of the intersection operation, the cloud server can narrow down the scope of $\mathcal{S}^{(i)}$; (2) if the real index sets of different clients are mutually exclusive, and the cloud server selects totally different sets of clients (excluding client $i$) in two communication rounds, then $\mathcal{S}^{(i)} = \mathcal{U}_1 \bigcap \mathcal{U}_2$.

To mitigate the above leakage, we give the following four kinds of countermeasures. (1) First, we adopt the concept of ``period" introduced in Section~\ref{sec:design:index:set:perturbation}, where a period comprises multiple communication rounds, and within a certain period, each client just uses his historical data in the previous one period to participate in federated (submodel) learning. To avoid the above leakage, we here enforce that each client is only chosen to join in one communication round in one period. Under such a circumstance, the cloud server only obtains $\mathcal{U}_1$ and knows $\mathcal{S}^{(i)} \subset \mathcal{U}_1$, but cannot further get $\mathcal{U}_2$ and more unions to narrow down the scope of $\mathcal{S}^{(i)}$. (2) Second, we adopt the concept of ``group", where the clients in a group participate in federated (submodel) learning together. For example, if client $i$ joins in a group of 50 clients, then the cloud server may only learn the union of these 50 clients' real index sets, even if this group participates in an infinite number of communication rounds. (3) Third is resorting to anonymization. For example, a client can use pseudo identities to participate in different communication rounds, which does not affect the execution of secure federated (submodel) learning. However, such an anonymization process breaks the linkability and thus disables the intersection operation between $\mathcal{U}_1$ and $\mathcal{U}_2$ because the cloud server does not know whether a specific client participates in both communication rounds or not. (4) Last is to let each client replace her real index set with a perturbed index set to participate in federated (submodel) learning. More specifically, just as index set perturbation in Algorithm~\ref{alg:index:set:ldp}, each client only keeps part of her real indices, randomly adds some other indices from the full index set or other clients' indices in her previously involved rounds, and thus generates a perturbed index set locally. Even though a client is chosen for an infinite number of communication rounds, the cloud server may only reveal her perturbed index set.
\end{appendix}

\section*{Acknowledgment}
We would like to sincerely thank some members of Advanced Network Laboratory (ANL) in Shanghai Jiao Tong University, including Renjie Gu, Hongtao Lv, Hejun Xiao, and Zhenzhe Zheng, as well as many colleagues in Alibaba, for meaningful discussions and great engineering support.

\bibliographystyle{IEEEtran}
\bibliography{reference}

\end{document}